\newcommand{\inlineitem}[1][]{%
	\ifnum\enit@type=\tw@
	{\descriptionlabel{#1}}
	\hspace{\labelsep}%
	\else
	\ifnum\enit@type=\z@
	\refstepcounter{\@listctr}\fi
	~\@itemlabel\hspace{\labelsep}%
	\fi}
\newcommand{\best}{\mbox{\footnotesize best}}
\newcommand{\old}{\mbox{\footnotesize old}}
\algnewcommand{\lIf}[1]{\State\algorithmicif\ #1\ \algorithmicthen}
\algnewcommand{\EndlIf}{\unskip\ \algorithmicend\ \algorithmicif}
\newcommand{\GGM}{GGM}
\newcommand{\RCON}{RCON}%
\newcommand{\PDRCON}{pdRCON}
\newcommand{\PDCG}{pdCG}
\newcommand{\ie}{i.e.}
\newcommand{\Power}{\mathcal{P}ow}
\newcommand{\Part}{\mathcal{P}art}
\newcommand{\G}{\mathcal{G}}
\newcommand{\Hs}{\mathcal{H}}
\newcommand{\V}{\mathcal{V}}
\newcommand{\E}{\mathcal{E}}
\newcommand{\M}{\mathcal{M}}
\newcommand{\F}{\mathcal{F}}
\renewcommand{\P}{\mathcal{P}}
\newcommand{\U}{\mathcal{U}}
\newcommand{\Ls}{\mathcal{L}}
\newcommand{\LL}{\mathbb{L}}
\newcommand{\EE}{\mathbb{E}}
\DeclareMathOperator*{\ePPV}{ePPV}
\DeclareMathOperator*{\eTP}{eTP}
\DeclareMathOperator*{\eP}{eP}
\DeclareMathOperator*{\eTPR}{eTPR}
\DeclareMathOperator*{\eTNR}{eTNR}
\DeclareMathOperator*{\eTN}{eTN}
\DeclareMathOperator*{\eN}{eN}
\DeclareMathOperator*{\sPPV}{sPPV}
\DeclareMathOperator*{\sTP}{sTP}
\DeclareMathOperator*{\sP}{sP}
\DeclareMathOperator*{\sTPR}{sTPR}
\DeclareMathOperator*{\sTNR}{sTNR}
\DeclareMathOperator*{\sTN}{sTN}
\DeclareMathOperator*{\sN}{sN}
\newcommand\precdot{\mathrel{\ooalign{$\prec$\cr
			\hidewidth\raise0.02ex\hbox{$\cdot\mkern0.1mu$}\cr}}}
\begin{document}

\title{Exploration of the Search Space of Gaussian Graphical Models for Paired Data}

\author{\name Alberto Roverato \email alberto.roverato@unipd.it \\
      \addr Department of Statistical Sciences\\
       University of Padova, Italy.\\
      \AND
       \name Dung Ngoc Nguyen \email ngocdung.nguyen@csiro.au \\
       \addr Department of Statistical Sciences\\
       University of Padova, Italy.\\
       CSIRO Agriculture and Food\\
       Canberra, ACT, Australia.}

\editor{Jin Tian}

\maketitle

\begin{abstract}
We consider the problem of learning a Gaussian graphical model in the case where the observations come from two dependent groups sharing the same variables. We focus on a family of coloured Gaussian graphical models specifically suited for the paired data problem. Commonly, graphical models are ordered by the submodel relationship so that the search space is a lattice, called the model inclusion lattice. We introduce a novel order between models, named the twin order. We show that, embedded with this order, the model space is a lattice that, unlike the model inclusion lattice, is distributive. Furthermore, we provide the relevant rules for the computation of the neighbours of a model. The latter are more efficient than the same operations in the model inclusion lattice, and are then exploited to achieve a more efficient exploration of the search space. These results can be applied to improve the efficiency of both greedy and Bayesian model search procedures. Here, we implement a stepwise backward elimination procedure and evaluate its performance both on synthetic and real-world data.
\end{abstract}

\begin{keywords}
  Brain network,
  coloured graphical model,
  lattice,
  partial order,
  principle of coherence,
  \RCON\ model.
\end{keywords}

\section{Introduction}

A Gaussian graphical model (\GGM) is a family of multivariate normal distributions whose conditional independence structure is represented by an undirected graph. The vertices of the graph correspond to the variables and every edge missing from the graph implies that the corresponding entry of the concentration matrix, that is the inverse of the covariance matrix, is equal to zero; see \cite{lauritzen1996graphical}.
Gaussian graphical models are widely applied to the joint learning of multiple networks, where the observations come from two or more groups sharing the same variables. The association structure of each group is represented by a network and it is expected that there are similarities between the groups. In this framework, the literature has mostly focused on the case where the groups are independent so that every network is a distinct unit, disconnected from the other networks; see  \citet{tsai2022joint} for a recent review. We consider the case where groups cannot be assumed to be independent and, more specifically, we focus  on the case of paired data, with exactly two dependent groups. The rest of this section is devoted to the presentation of the problem considered in this paper, whereas an overview of related works in this field, as well as of a description of some areas of application, are deferred to Section~\ref{SEC:related.works}.

Coloured \GGM{s} \citep{hojsgaard2008graphical} are undirected graphical models with additional symmetry restrictions in the form of equality constraints on the parameters, which are then depicted on the dependence graph of the model by colouring of edges and vertices. Equality constraints allow one to disclose symmetries concerning both the structure of the network and the values of parameters associated with vertices and edges and, in addition, have the practical advantage of reducing the number of parameters. \citet{roverato2022modelinclusion} introduced a subfamily of coloured \GGM{s} specifically designed to suit the paired data problem that they called \RCON\ models for paired data (\PDRCON). They approached the problem by considering a single coloured \GGM\ comprising the variables of both the first  and the second group. In this way, the resulting model has a graph for each of the two groups and the cross-graph dependence is explicitly represented by the edges across groups; see also  \citet{ranciati2021fused} and \citet{ranciati2023application}.

Although the symmetry restrictions implied by a coloured \GGM\ may usefully reduce the model dimension, the problem of model identification is much more challenging than with classical \GGM{s} because both the dimensionality and the complexity of the search spaces highly increase; see \citet{gao2015estimation,massam2018bayesian,li2020bayesian} and references therein. For the construction of efficient model selection methods, it is therefore imperative to understand the structure of model classes.

A statistical model is a family of probability distributions, and if a model is contained in another model then it is called a submodel of the latter. We can also say that a model is ``larger” than any of its submodels, and model inclusion is typically used to embed a model class with a partial order. In this way, one can easily obtain that the family of \GGM{s} forms a complete distributive lattice. \cite{gehrmann2011lattices} considered four relevant subfamilies of coloured \GGM{s} and showed that they constitute a lattice with respect to model inclusion. However, the structure of such lattices is rather complicated and this makes the identification of neighbouring models, and therefore the implementation of procedures for the exploration of model spaces, much less efficient than for classical \GGM{s}. Furthermore, none of the existing lattices of coloured \GGM{s} satisfies the distributivity property, which is a fundamental property that facilitates the implementation of efficient procedures and representation in lattices; see, among others, \cite{habib2001efficient} and \cite{davey2002introduction}. \citet{roverato2022modelinclusion}  showed that, under the model inclusion order, the class of \PDRCON\ models identifies a proper complete sublattice of the lattice of coloured graphical models, although also in this case the distributive property is not satisfied.

We introduce a novel partial order for the class of \PDRCON\ models that coincides with the model inclusion order if two models are model inclusion comparable but that also includes order relationships between certain models which are model inclusion incomparable. We show that the class of \PDRCON\ models forms a complete lattice also with respect to this order, that we call the \emph{twin lattice}. The twin lattice is distributive and its exploration is more efficient than that of the model inclusion lattice. Hence, the twin lattice can be used to improve the efficiency of procedures, either Bayesian and frequentist, which explore the model space moving between neighbouring models. More specifically, the focus of this paper is on stepwise greedy search procedures, and we show how the twin lattice can be exploited to improve efficiency in the identification of neighbouring submodels.

One way to increase the efficiency of greedy search procedures is by applying the, so-called, \emph{principle of coherence} \citep{gabriel1969simultaneous} that is used as a strategy for pruning the search space. We show that for the family of \PDRCON\ models the twin lattice allows a more straightforward implementation of the principle of coherence.

We implement a stepwise backward elimination procedure with local moves on the twin lattice which satisfies the coherence principle, and we show that it is more efficient than an equivalent procedure on the model inclusion lattice. This procedure is implemented in the statistical programming language \textsf{R} and its behavior is investigated both on synthetic  and  real-world data.

The rest of the paper is organized as follows. Background on lattice theory, coloured graphical models and the structure of their model space is given in  Section~\ref{SEC:background.notation}, whereas Section~\ref{SEC:RCON.models.for.paired.data} introduces the family of \RCON\ models for paired data. Section~\ref{SEC:related.works} contains an overview of the related literature and a discussion of some issues concerning the applications.  In Section~\ref{SEC:alternative.lattice.structure.for.P}, we introduce the twin lattice, derive its properties and describe its relationships with the model inclusion lattice. Section~\ref{SEC:dimension.and.choerence} deals with the dimension of the search space and the implementation of the principle of coherence. The greedy search procedure is described in Section~\ref{SEC:stepwise.procedure}, and then its application to both synthetic and real-world data is presented in Section~\ref{SEC:applications}. Finally, Section~\ref{SEC:conclusions} contains a brief discussion. Proofs are deferred to Section~\ref{SUP.SEC:proofs} of the Appendix.

\section{Background and Notation}\label{SEC:background.notation}

In this section, we review the elements and notation of lattice theory and of coloured graphical models, as required for this paper; for a more comprehensive account we refer to \citet{davey2002introduction}, \citet{lauritzen1996graphical} and \citet{hojsgaard2008graphical}.

\subsection{Partial Orders and Lattices}

A partially ordered set, or \emph{poset}, $\langle A,  \preceq \rangle$ is a structure where $A$ is a set and $\preceq$ is a partial order on $A$. If the elements $a,b \in A$ are such that  $a \preceq b$ with $a \neq b$, then we write $a \prec b$ and say that $a$ is  \emph{smaller} than $b$. Furthermore, if it also holds that there is no element $c \in A$ such that $a \prec c \prec b$ then we say that $a$ is \emph{covered} by $b$ and write $a \precdot b$. If both $a \not\preceq b$ and $b \not\preceq a$ then $a$ and $b$ are \emph{incomparable}.

An element $a \in A$ is called an \emph{upper bound} of the subset $H\subseteq A$ if it has the property that $h \preceq a$ for all $h \in H$ and, furthermore, it is the \emph{supremum} of $H$, denoted by $a = \sup H$, if every upper bound $b$ of $H$ satisfies $a \preceq b$. Accordingly, $a \in A$ is a \emph{lower bound} of $H$ if $a \preceq h$ for all $h \in H$, and it is  the \emph{infimum} of $H$, $a = \inf H$, if for every lower bound $b$ of $H$ it holds that $b \preceq a$.
If for a poset $\langle A,  \preceq \rangle$ the element $\sup A$ exists, then it is called the \emph{maximum element}, or the \emph{unit}, of $A$ and denoted by  $\hat{1}$. Dually, if $\inf A$ exists, then this element is called the \emph{minimum element}, or the \emph{zero}, of $A$ and denoted by  $\hat{0}$.

A poset $\langle A, \preceq \rangle$ is called a \emph{lattice} if $\inf H$ and $\sup H$ exist for every finite nonempty subset $H$ of $A$. A lattice is called \emph{complete} if also both $\inf A$ and $\sup A$ exist. If $\langle A, \preceq \rangle$ is a lattice, then for every pair of elements $a, b \in A$, we write $a \wedge b$ for $\inf \{a,b\}$ and $a \vee b$ for $\sup \{a,b\}$ and refer to $\wedge$ as the \emph{meet operation} and to $\vee$ as the \emph{join operation}. A lattice  $A$ is called \emph{distributive} if  the operations of join and meet distribute over each other; formally, if for all $a, b, c \in A$, $a \vee (b\wedge c) = (a \vee b) \wedge (a\vee c)$.  Finally, it is useful to represent the structure of a lattice graphically by means of a \emph{Hasse diagram} that is a graph with $A$ as vertex set and where two elements $a, b \in A$ are joined by an undirected edge, with the vertex $b$ appearing above $a$, whenever $a \precdot b$; see Figures~\ref{FIG:model-inclusion-lattice} and \ref{FIG:twin-lattice} for examples.

\subsection{Graphical Models and Coloured Graphical Models}

For a finite set $V = \{1,\ldots, p\}$ we let $Y = Y_{V}$ be a continuous random vector indexed by $V$ and we denote by $\Sigma = (\sigma_{ij})_{i,j \in V}$ and $\Sigma^{-1}=\Theta=(\theta_{ij})_{i,j \in V}$ the covariance and concentration matrix of $Y$, respectively. An undirected graph with vertex set $V$ is a pair $G = (V,E)$ where $E$ is an edge set that is a set of unordered pairs of distinct vertices; formally $E\subseteq F_{V}$ with $F_{V}=\{(i,j)\mid  i,j\in V\mbox{ and }i<j\}$. Thus, $F_{V}$ is the edge set of the \emph{complete} graph, that is the graph in which each pair of distinct vertices is connected by an edge. Note that, when it is not clear from the context which graph is under consideration, we will write $V_{G}$ and $E_{G}$ to denote the vertex set and edge set of the graph $G$.

We say that the concentration matrix $\Theta$ is \emph{adapted} to the graph $G$ if every missing edge of $G$ corresponds to a zero entry in $\Theta$; formally, $(i,j) \notin E$, with $i<j$, implies that $\theta_{ij} = \theta_{ji}=0$. A \emph{Gaussian graphical model} (\GGM) with graph $G$ is the family of Gaussian distributions whose concentration matrix is adapted to $G$; see \cite{lauritzen1996graphical}.  We denote by $M\equiv M(V)$ the family of \GGM{s} for $Y_{V}$ and by $M(G)\in M$ the \GGM\ represented by the graph $G=(V, E)$.

A \emph{colouring} of $G=(V, E)$ is a pair $(\V, \E)$ where $\V=\{V_{1},\ldots, V_{v}\}$ is a  partition of $V$ into vertex colour classes and, similarly, $\E=\{E_{1},\ldots, E_{e}\}$ is a partition of $E$ into edge colour classes. Accordingly,  $\G=(\V, \E)$ is a \emph{coloured graph}. Similarly to the notation used for uncoloured graphs, we may write $\V_{\G}$ and $\E_{\G}$ to denote the vertex and edge colour classes of $\G$, respectively. In the graphical representation, all the vertices belonging to a same colour class are depicted of the same colour, and similarly for edges. Furthermore, in order to make coloured graphs readable also in black and white printing, we put a common symbol next to every vertex or edge of the same colour. The only exception to this rule is for vertices and edges belonging to colour classes with a single element, called \emph{atomic}, which are all depicted in black with no symbol next to them.

\cite{hojsgaard2008graphical} introduced \emph{coloured} \GGM{s}, which are \GGM{s} with additional restrictions on the parameter space. The models are represented by coloured graphs, where parameters that are associated with edges or vertices of the same colour are restricted to being identical. In this paper, we focus on the family of coloured \GGM{s} called \RCON\ models by \citet{hojsgaard2008graphical}, because they place equality Restrictions on the entries of the CONcentration matrix. More specifically, in the \RCON\ model with coloured graph $\G=(\V, \E)$ every vertex colour class $V_{i}$, $i=1,\ldots, v$, identifies a set of diagonal concentrations whose value is constrained to be equal, and similarly for edge colour classes which identify subsets of off-diagonal concentrations. We denote by $\M\equiv \M(V)$ the family of \RCON\ models for $Y_{V}$ and by $\M(\G)\in \M$ the \RCON\ model represented by $\G$.

We close this section by noticing that for the families of graphical models we consider every model is uniquely represented by a coloured graph, and in the rest of this paper, with a slight abuse of notation, we will not make an explicit distinction between sets of models and sets of graphs, thereby equivalently writing, for example, $\M(\G)\in \M$ and $\G\in \M$.

\subsection{Exploration of Model Spaces} \label{SEC:lattice-structure-RCON}

The implementation of most model selection procedures requires the exploration of the space of candidate structures, \ie, of the model space. It is therefore fundamental to understand the structure of model spaces so as to improve efficiency and to identify suitable neighbouring relationships to be used in greedy search procedures. In the families of graphical models we consider, this is typically achieved by embedding the model space with the partial order defined through the \emph{model inclusion}, \ie, the \emph{submodel}, relationship thereby obtaining a lattice structure. In the rest of this section, we review the relevant properties of the model inclusion lattices of both undirected and coloured graphical models, and discuss their use in the exploration of model spaces.

Two lattices which become relevant in this context are the so-called \emph{subset} and \emph{partition} lattices. The power set $\Power(A)$ of any finite set $A$ is  naturally embedded with the set inclusion order, $\subseteq$, and forms a complete distributive lattice. This is a well-known lattice structure whose meet and join operations can be efficiently computed because they are the set intersection $\cap$ and the set union $\cup$ operations, respectively \citep[see][]{davey2002introduction}. A partition of $A$ is a collection of nonempty, pairwise disjoint, subsets of $A$ whose union is $A$. The family of partitions $\Part(A)$ of a set $A$ is typically embedded with the partial order where the partition $P_{1}$ is smaller than $P_{2}$ if $P_{1}$ is finer than $P_{2}$, that is if every set in $P_{2}$ can be expressed as a union of sets in $P_{1}$. In this way, one obtains the so-called partition lattice, and it is well-understood that distributivity does not hold for this lattice and, furthermore, the implementation of the meet and of the join operations is more involved, and considerably less efficient, than in the subset lattice; see \cite{canfield2001meet} and \cite{pittel2000typical}.

Given two graphs $G$ and $H$, with vertex set $V$, we write $M(H)\preceq_{s} M(G)$ to denote that $M(H)$ is a submodel of $M(G)$. For the family of \GGM{s}, model inclusion coincides with the subset relationship between edge sets, so that for every pair of graphs, $G$ and $H$, with vertex set $V$, it holds that  $M(H)\preceq_{s} M(G)$ if and only if $E_{H}\subseteq E_{G}$. Hence, $\langle M, \preceq_{s}\rangle$ coincides with the subset lattice of $F_{V}$ and, consequently, the meet and the join operations take an especially simple form because $M(G)\vee M(H)$ and $M(G)\wedge M(H)$ are the models represented by the graphs with edge sets $E_{G}\cup E_{H}$ and $E_{G}\cap E_{H}$, respectively. Furthermore, the lattice $\langle M, \preceq_{s}\rangle$ is complete and distributive.

Consider now the family $\M$ of \RCON\ models for $Y_{V}$ and let $\G$ and $\Hs$ be two coloured graphs with vertex set $V$. It was shown by \cite{gehrmann2011lattices} that  $\M(\Hs)$ is a submodel of $\M(\G)$, \ie,  $\M(\Hs)\preceq_{s}\M(\G)$,
if and only if all of the three following conditions hold true,
\begin{enumerate}[label = (S\arabic*), itemindent=1eM, ref= (S\arabic*) ]
	\item the edge set of $\Hs$ is a subset of the edge set of $\G$;\label{S1}
	\item every colour class in $\V_{\Hs}$ is a union of colour classes in $\V_{\G}$;\label{S2}
	\item every colour class in $\E_{\Hs}$ is a union of colour classes in $\E_{\G}$;\label{S3}
\end{enumerate}
and we remark that condition \ref{S1} can be considered as redundant because, in fact, it is implied by \ref{S3}.
Furthermore, \cite{gehrmann2011lattices} showed that $\langle \M, \preceq_{s} \rangle$ is a complete lattice, although non-distributive. Colour classes are partitions of the vertex and edge set, respectively, and \citet[][eqn.~9]{gehrmann2011lattices} provided explicit rules for the computation of the meet $\M(\G)\wedge \M(\Hs)$ and the join $\M(\G)\vee \M(\Hs)$, which follow from the same operations in the partition lattice and share an equivalent level of computational complexity. The exploration of the space of \RCON\ models is thus much more challenging than the exploration of \GGM{s} because, firstly, as shown by \citet{gehrmann2011lattices} the dimension of $\langle \M, \preceq_{s}\rangle$ is much larger than that of $\langle M, \preceq_{s}\rangle$ and it grows super-exponentially with the number of variables and, secondly,  the exploration of the space through neighbouring models, which requires the application of the meet and the join of models, is more computationally demanding than in $\langle M, \preceq_{s}\rangle$. Indeed, the existing procedures for learning coloured graphical models can deal with a small number of variables \citep{gehrmann2011lattices} or impose substantial approximations \citep{li2018approximate}.

\section{\RCON\ Models for Paired Data} \label{SEC:RCON.models.for.paired.data}

\citet{roverato2022modelinclusion} introduced \PDRCON\ models, which is a subfamily of \RCON\ models specifically designed to deal with paired data. In this section, we introduce the twin-pairing function that allows us to deal efficiently with the paired data setting, and use it to define the family of \PDRCON\ models. We start with a small instance of paired data problem that will be used hereafter as a running example.
\begin{example}[Frets' Heads]
	\citet[][Section~8.4]{whittaker1990graphical} fitted a \GGM\ to the Frets' Heads data  which consist of measurements of the  head \emph{Height} and head \emph{Breadth} of the first and second adult sons in a sample of 25 families. Variables are therefore naturally split  into the variables associated with the first son, $Y_{H}$ and $Y_{B}$, and the second son, $Y_{H^{\prime}}$ and $Y_{B^{\prime}}$, thereby identifying the $L$eft and $R$ight groups depicted in Figure~\ref{FIG:FH.exa-A}, whereas the corresponding block partition of the concentration matrix $\Theta$ is given in Figure~\ref{FIG:FH.exa-con-i}. Note that, for the presentation of the results below it is convenient to use a numerical vertex set, that is  $V=\{1,2,3,4\}$; however, in order to improve readability, in this example we also add subscripts and write $L = \{1_{H}, 2_{B}\}$ and $R = \{3_{H^{\prime}}, 4_{B^{\prime}}\}$.
	\begin{figure}[t]
		\centering
		\begin{subfigure}{0.3\textwidth}
			\centering
			\includegraphics[scale=0.85]{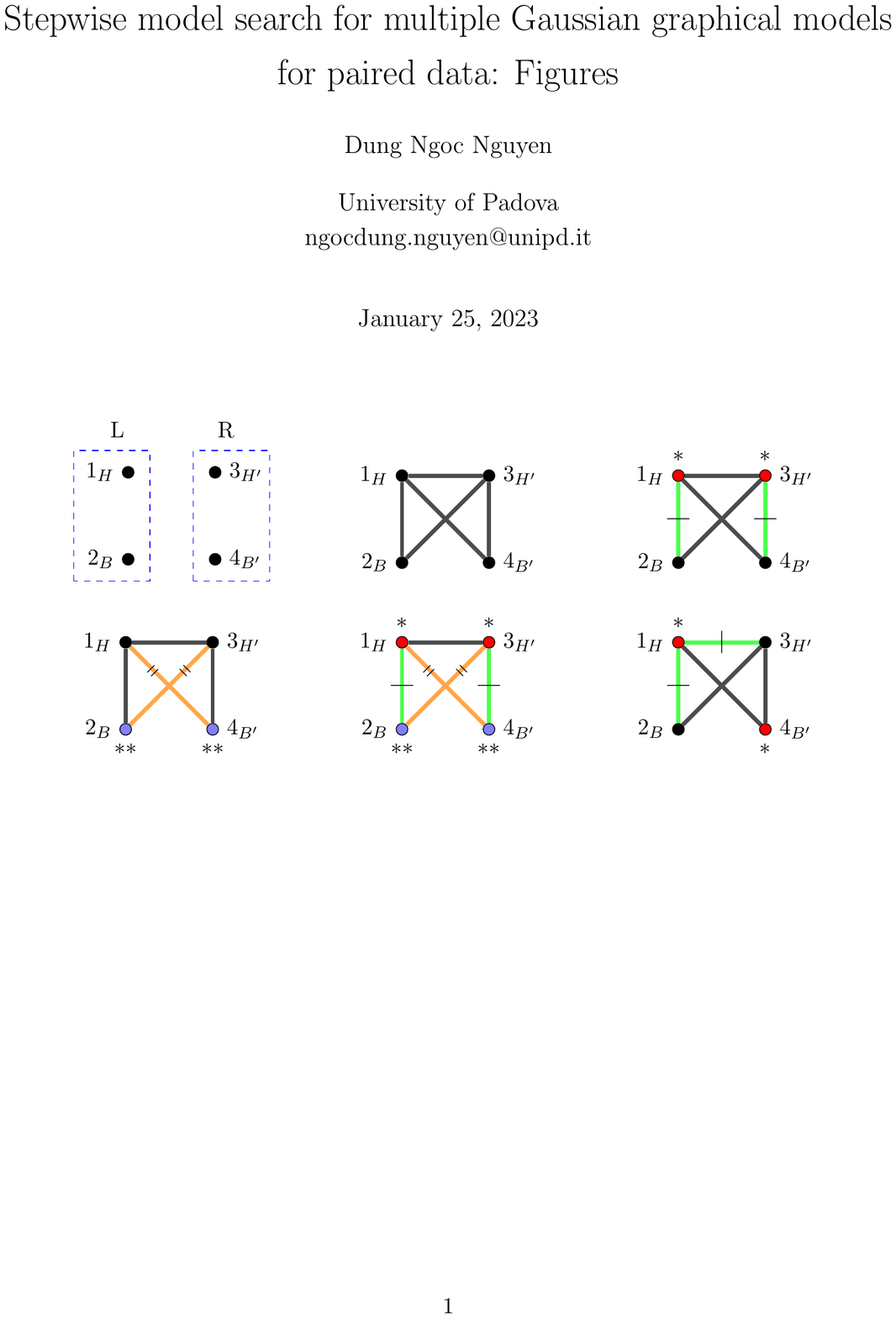}
			\caption{}\label{FIG:FH.exa-A}
		\end{subfigure}
		\begin{subfigure}{0.3\textwidth}
			\centering
			\includegraphics[scale=0.85]{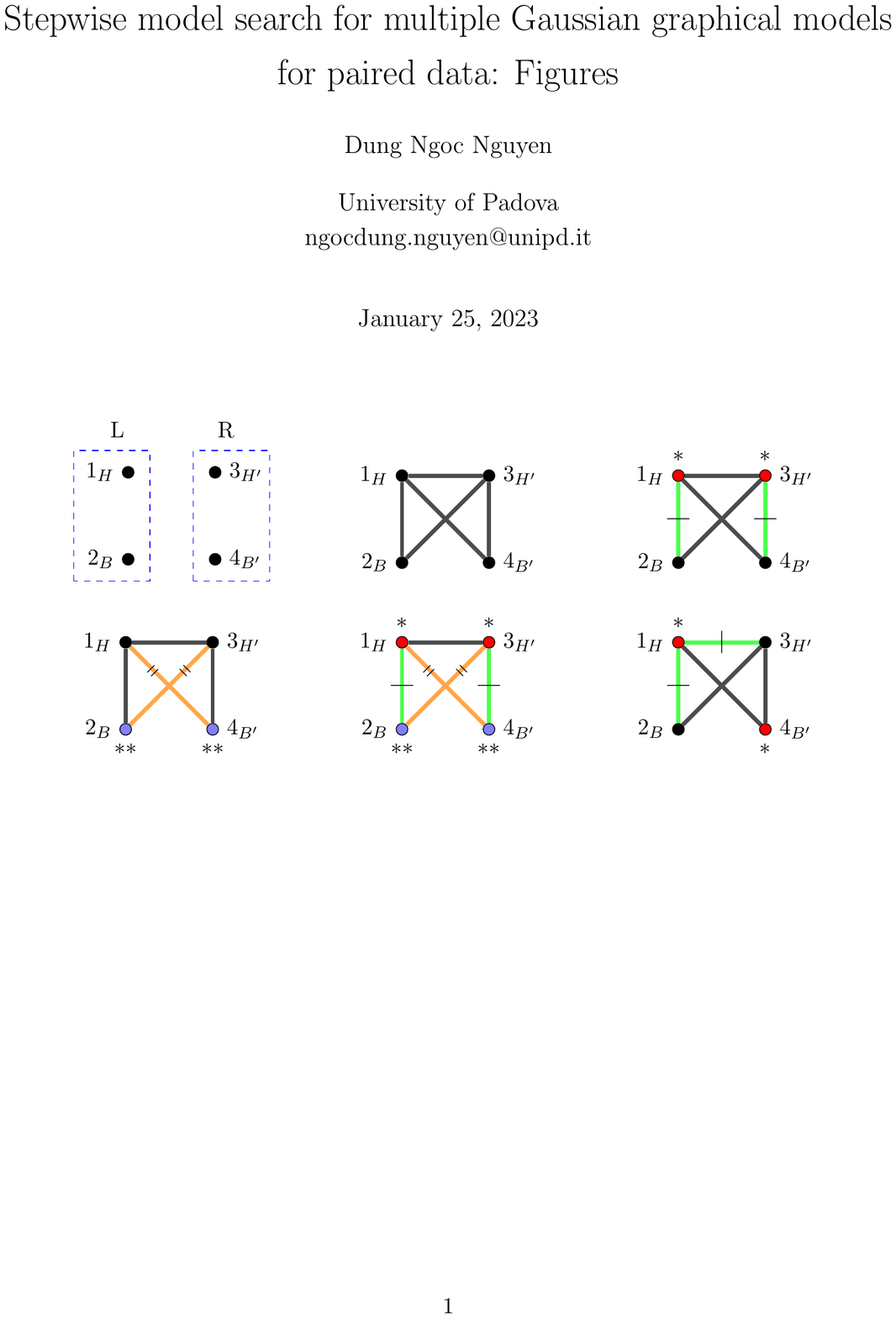}
			\caption{}\label{FIG:FH.exa-B}
		\end{subfigure}
		\begin{subfigure}{0.3\textwidth}
			\centering
			\includegraphics[scale=0.85]{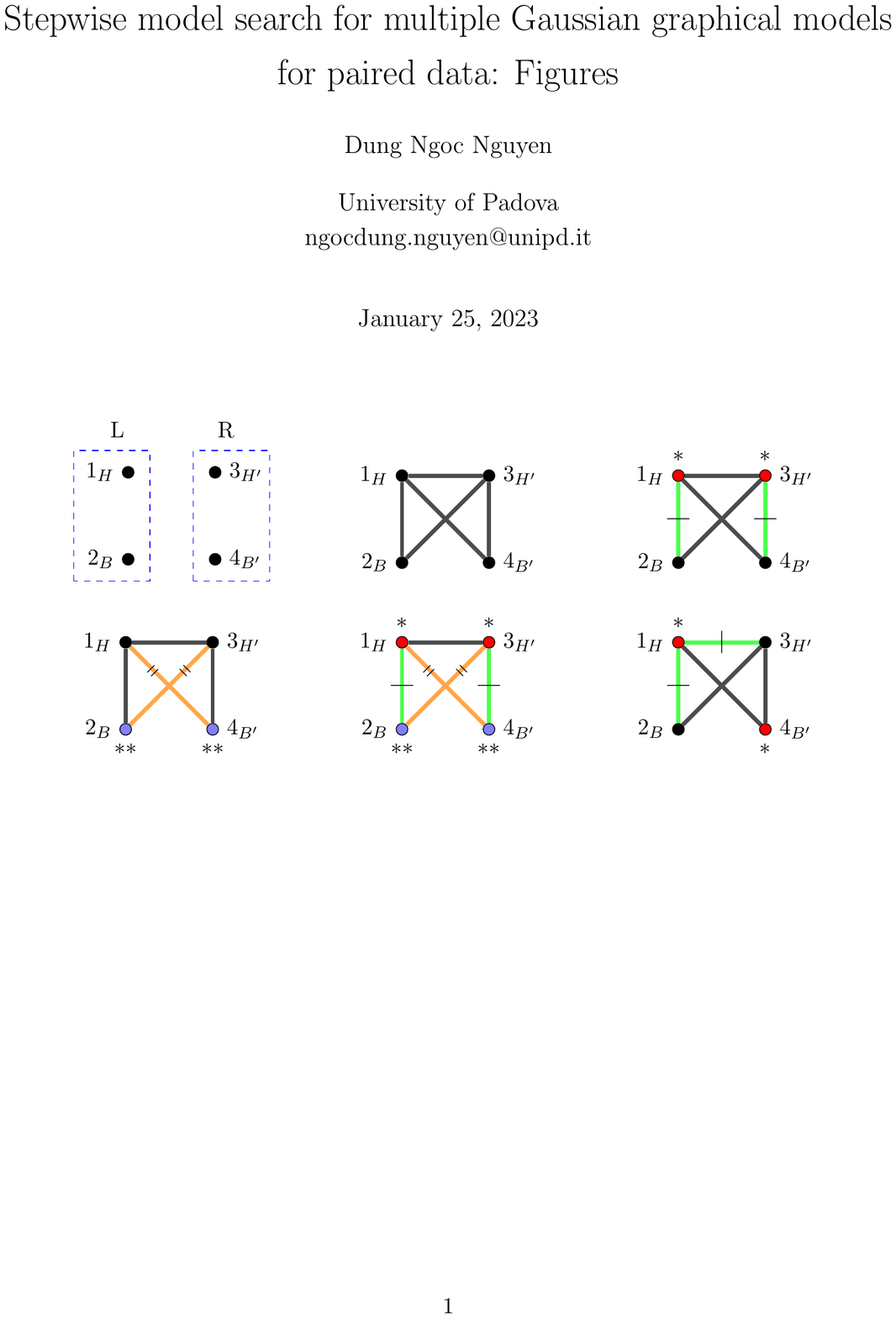}
			\caption{}\label{FIG:FH.exa-C}
		\end{subfigure}
		
		\bigskip
		
		\begin{subfigure}{0.3\textwidth}
			\centering
			\includegraphics[scale=0.85]{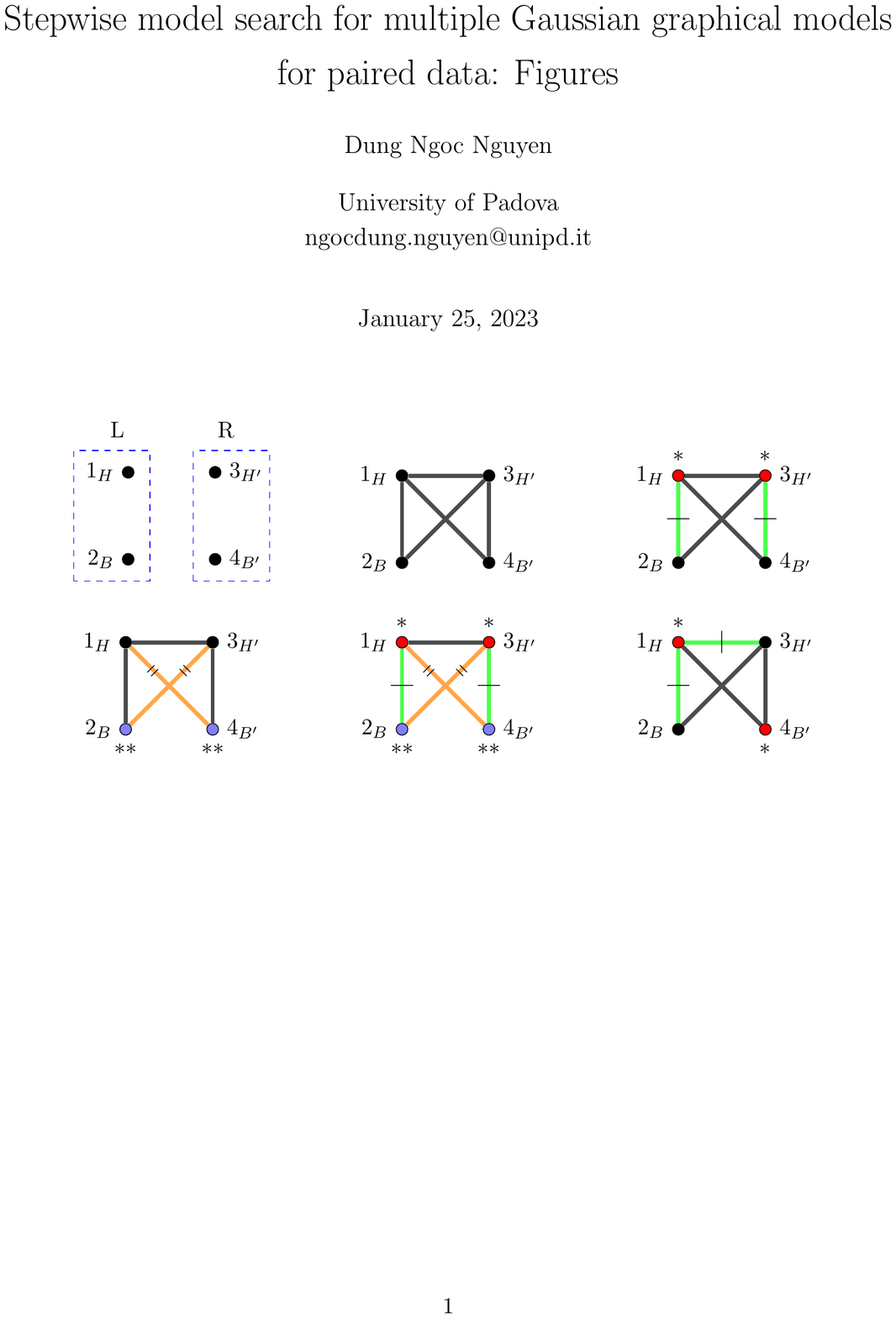}
			\caption{}\label{FIG:FH.exa-D}
		\end{subfigure}
		\begin{subfigure}{0.3\textwidth}
			\centering
			\includegraphics[scale=0.85]{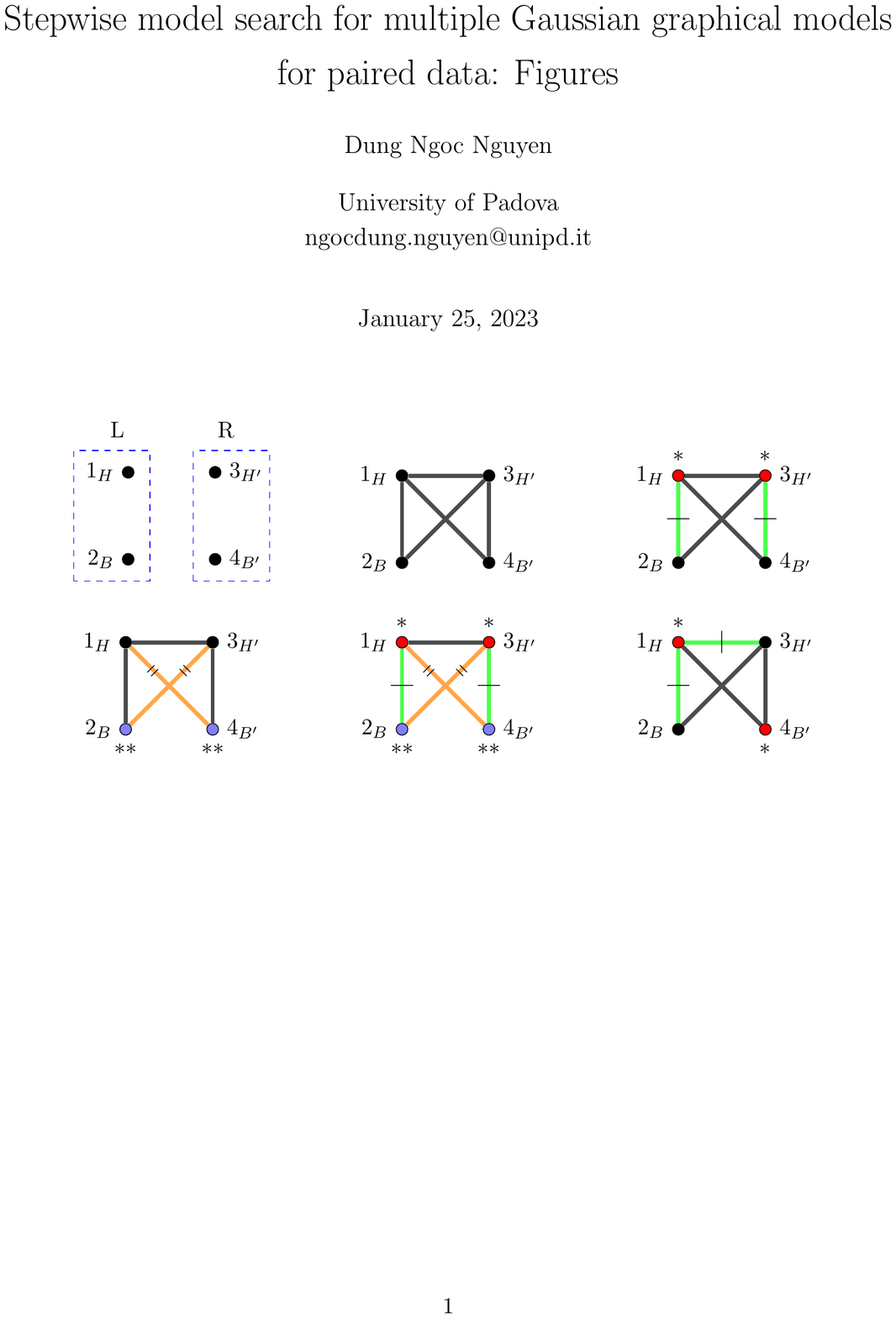}
			\caption{}\label{FIG:FH.exa-E}
		\end{subfigure}
		\begin{subfigure}{0.3\textwidth}
			\centering
			\includegraphics[scale=0.85]{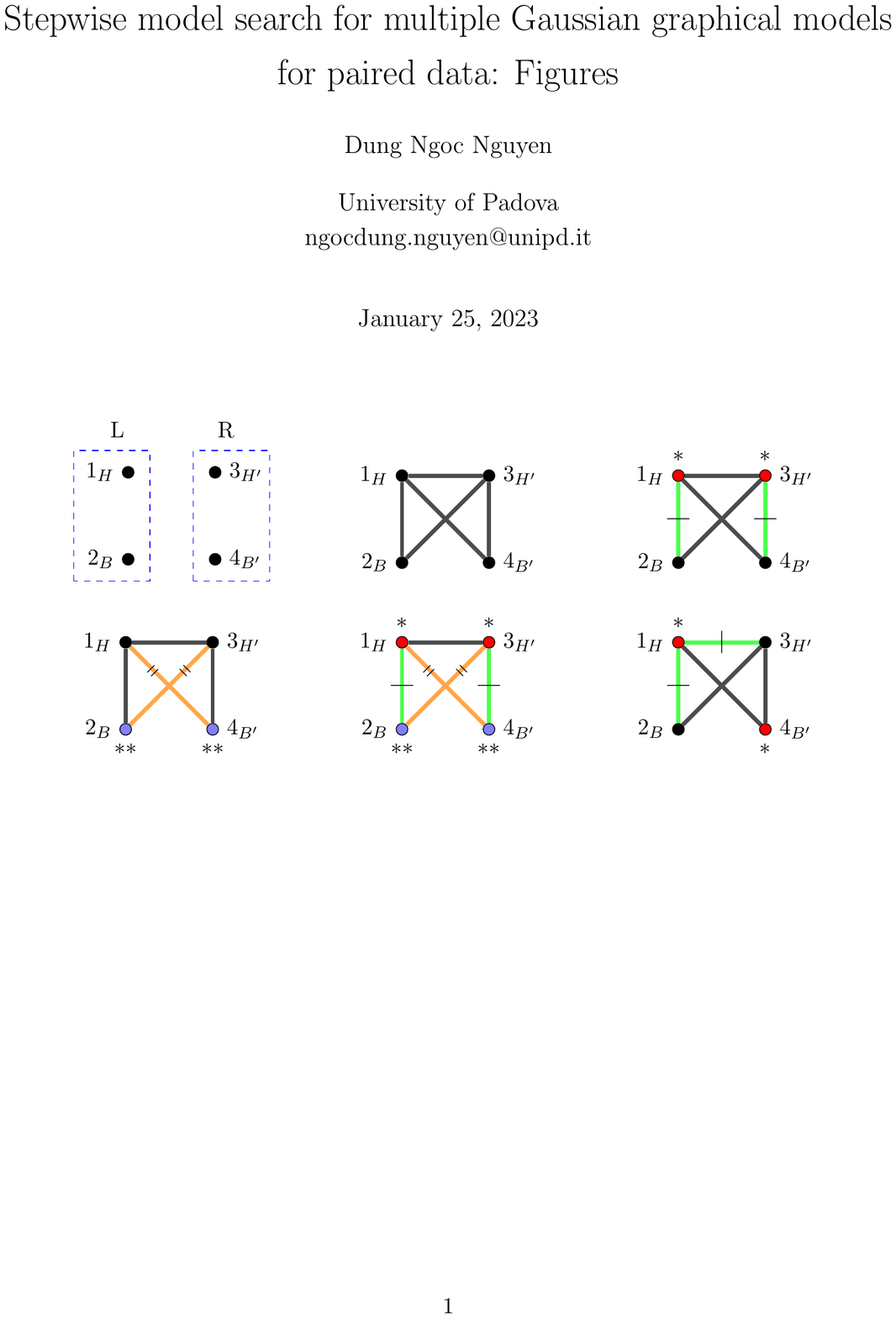}
			\caption{}\label{FIG:FH.exa-F}
		\end{subfigure}
		\caption{Frets' Heads example: (a) gives the partition of the vertex set into the two groups; the graphs (b) to (e) are \PDCG{s} whereas (f) is a coloured graph but not a \PDCG.}\label{FIG:FH.exa}
	\end{figure}

\newcommand{\red}[1]{\textcolor{red}{\bm{#1}}}
\newcommand{\green}[1]{\textcolor{green}{\bm{#1}}}
\newcommand{\blue}[1]{\textcolor{blue!70}{\bm{#1}}}
\newcommand{\orange}[1]{\textcolor{orange}{\bm{#1}}}

\begin{figure}[b]
\small
\renewcommand\thesubfigure{\roman{subfigure}}
\centering
\begin{subfigure}{0.19\textwidth}
\begin{minipage}[c][2.8cm][c]{0.19\textwidth}
\begin{eqnarray*}
    \left(
    \begin{array}{cc}
    \Theta_{LL} & \Theta_{LR}\\
    \Theta_{RL} & \Theta_{RR}\\
    \end{array}
    \right)
\end{eqnarray*}
\end{minipage}
	\caption{}\label{FIG:FH.exa-con-i}
\end{subfigure}
%
	\begin{subfigure}{0.24\textwidth}
	\centering
    \begin{eqnarray*}
        \left[
        \begin{array}{cc|cc}
            \red{\alpha} &\green{\delta} & \theta_{13}  &\theta_{14} \\[1px]
                        &\theta_{22} & \theta_{23}  &0           \\[1px]
        \hline
                        &            & \red{\alpha}  &\green{\delta} \\[1px]
                        &            &              &\theta_{44}
        \end{array}
        \right]
    \end{eqnarray*}
	\caption{}\label{FIG:FH.exa-con-ii}
	\end{subfigure}
	\begin{subfigure}{0.27\textwidth}
	\centering
    \begin{eqnarray*}
        \left[
        \begin{array}{cc|cc}
            \theta_{11} &\theta_{12} & \theta_{13}  &\orange{\gamma} \\[1px]
                        &\blue{\beta} & \orange{\gamma}  &0           \\[1px]
        \hline
                        &            & \theta_{33}  &\theta_{34} \\[1px]
                        &            &              &\blue{\beta}
        \end{array}
        \right]
    \end{eqnarray*}
	\caption{}\label{FIG:FH.exa-con-iii}
	\end{subfigure}
	\begin{subfigure}{0.24\textwidth}
	\centering
    \begin{eqnarray*}
        \left[
        \begin{array}{cc|cc}
            \red{\alpha} &\green{\delta} & \theta_{13}  &\orange{\gamma} \\[1px]
                        &\blue{\beta} & \orange{\gamma} &0           \\[1px]
        \hline
                        &            & \red{\alpha}  &\green{\delta} \\[1px]
                        &            &              &\blue{\beta}
        \end{array}
        \right]
    \end{eqnarray*}
	\caption{}\label{FIG:FH.exa-con-iv}
	\end{subfigure}
\caption{Frets' Heads example: (i) shows the block partition of the concentration matrix, whereas (ii) to (iv) give the concentration matrices for the three \PDCG{s} of Figure~\ref{FIG:FH.exa}, with vertical and horizontal lines to highlight the four block partition. More specifically, matrix (ii) corresponds to graph (c); matrix (iii) to graph (d) and matrix (iv) to graph (e).}\label{FIG:FH.exa.con}
\renewcommand\thesubfigure{\alph{subfigure}}
\end{figure}

\end{example}

In paired data problems, every variable in $Y_{V}$ has a homologous, or \emph{twin}, variable in $Y_{V}$, and the  \emph{twin-pairing} function $\tau$, or twin function for short, associates every element $i\in V$ with its twin $\tau(i)\in V$, in such a way that  if $\tau(i)=j$ then $\tau(j)=i$. We then naturally extend the use of the twin function to edges as well as to collections of vertices and edges. Formally, for $(i, j)\in F_{V}$ we let $\tau(i,j)=(\tau(i),\tau(j))$ whereas for $A\subseteq V$ we set $\tau(A)=\{\tau(i)\mid i\in A\}$ and if $E\subseteq F_{V}$ then $\tau(E)=\{\tau(i,j)\mid (i,j)\in E\}$, with the convention that $\tau(\emptyset)=\emptyset$. Note that it may happen that  $\tau(i, j)=(\tau(i), \tau(j))$ is such that $\tau(i)>\tau(j)$ and in this case we will implicitly assume that the endpoints of such pairs are reversed, so as to obtain a proper edge belonging to $F_{V}$.

The twin function can be used to partition the set $V$ into two sets, $L\cup R=V$, with $L\cap R=\emptyset$, such that $\tau(L)=R$. This partition is not unique, and the theory developed in this paper requires the identification of one of such partitions, which can  be arbitrarily chosen. Nevertheless, there typically exists a natural partition where $L$ is associated with the first group and $R$ with the second group and in our examples we will, without loss of generality, always refer to such natural partition.  Accordingly, to facilitate the interpretation of a model, we will depict the coloured graph with vertices $L$ on the $L$eft and vertices $R$ on the $R$ight.

In this context, the focus is on similarities and differences between the two groups. Any vertex $i\in V$ and its twin $\tau(i)$ are associated with the diagonal entries $\theta_{ii}$ and $\theta_{\tau(i)\tau(i)}$  of $\Theta$, respectively, and thus a hypothesis of interest is given by the equality restriction $\theta_{ii}=\theta_{\tau(i)\tau(i)}$. The latter is encoded by the vertex colour class $\{i, \tau(i)\}$, which we call \emph{twin-pairing}. Similarly, the edge $(i,j)$ and its twin $\tau(i,j)$ are associated with the off-diagonal concentrations $\theta_{ij}$ and $\theta_{\tau(i)\tau(j)}$, respectively, and the interest is for the equality $\theta_{ij}=\theta_{\tau(i)\tau(j)}$ implied by the edge twin-pairing colour class $\{(i,j), \tau(i, j)\}$. More formally, a twin-pairing colour class has cardinality two and contains either a pair of twin vertices $\{i, \tau(i)\}$ or a pair of twin edges $\{(i,j), \tau(i, j)\}$, with $i\neq \tau(j)$ so that $(i,j)\neq \tau(i,j)$.
\begin{definition}[\citealp{roverato2022modelinclusion}]\label{DEF:PDCG}
	Let $\G=(\V, \E)$ be a coloured graph with vertex set $V$ and let $\tau$ be a twin-pairing function on $V$. We say that $\G=(\V, \E)$ is a \emph{coloured graph for paired data} (\PDCG) if every colour class is either atomic or twin-pairing. If an \RCON\ model is defined by a \PDCG\ then we say that it is an \emph{\RCON\ model for paired data} (\PDRCON) and denote it by $\P(\G)$. Moreover, we will denote by $\P=\P(V)$ the family of \PDRCON\ models for $Y_{V}$.
\end{definition}
The family of \PDRCON\ models $\P$ is a subfamily of \RCON\ models $\M$, \ie, $\P\subseteq \M$, and therefore it makes sense to embed this space with the model inclusion order $\preceq_{s}$. \citet{roverato2022modelinclusion} showed that $\langle \P, \preceq_{s}\rangle$ is a proper sublattice of $\langle \M, \preceq_{s}\rangle$; however, as well as $\langle \M, \preceq_{s}\rangle$ also $\langle \P, \preceq_{s}\rangle$ is non-distributive.

For the interpretation of a \PDRCON\ model, it is useful to distinguish between two different types of edge pairings, which we now describe for the case where $L$ and $R$ correspond to the natural group partition of $V$. Indeed, in this case the comparison of the association structure within the first group with that within the second group is carried out considering the pairs $i<j$ such that $i,j\in L$, because in this case $\tau(i),\tau(j)\in R$. Hence, the simultaneous absence or presence of the edges $(i,j)$ and $\tau(i,j)$ identifies a similarity in the internal association structure of the two groups. Moreover, if $(i,j)$ and $\tau(i,j)$ are both present, then it may further hold true that $\theta_{ij}=\theta_{\tau(i)\tau(j)}$, which is a constraint encoded by the twin-pairing colour class $\{(i,j), \tau(i,j)\}$. On the other hand, the analysis of the across-group relationships involves the vertices $i,j\in V$ with $i\in L$, $j\in R$ and $i\neq \tau(j)$, and also in this case the simultaneous absence or presence of both $(i,j)$ and $\tau(i,j)$ identifies a similarity in the association structure between the across-group variables $Y_{i}$ and $Y_{j}$ and that of their across-group twins $Y_{\tau(i)}$ and $Y_{\tau(j)}$ with, potentially, $\theta_{ij}=\theta_{\tau(i)\tau(j)}$. We also note that symmetries involving the internal structure of the two groups correspond to similarities between the diagonal blocks $\Theta_{LL}$ and $\Theta_{RR}$ of $\Theta$, whereas every across-group symmetry corresponds to a similarity between two entries of $\Theta$ belonging to the upper and  lower triangular part, respectively, of the off-diagonal block $\Theta_{LR}$.
\begin{example}[Frets' Heads continued]
The five graphs of Figure~\ref{FIG:FH.exa} have all the same edge set and thus differ only for the colour classes.  The graph in Figure~\ref{FIG:FH.exa-B} has only atomic colour classes and represents an uncoloured \GGM\ or, equivalently, a trivial \PDRCON\ model with no equality restrictions. The graph in Figure~\ref{FIG:FH.exa-C} is  associated with the concentration matrix in Figure~\ref{FIG:FH.exa-con-ii}, and it represents a \PDRCON\ model with one vertex twin-pairing colour class $\{1_{H}, 3_{H^{\prime}}\}$, and one edge twin-pairing colour class $\{(1_{H}, 2_{B}), (3_{H^{\prime}}, 4_{B^{\prime}})\}$, the latter involving one edge within the first group and one edge within the second group.
	The graph in Figure~\ref{FIG:FH.exa-D} is  associated with the concentration matrix in Figure~\ref{FIG:FH.exa-con-iii}, and it represents a \PDRCON\ model with one vertex twin-pairing colour class $\{2_{B}, 4_{B^{\prime}}\}$, and one edge twin-pairing colour class $\{(1_{H}, 4_{B^{\prime}}), (2_{B}, 3_{H^{\prime}})\}$, the latter involving edges across groups. The graph in Figure~\ref{FIG:FH.exa-E} is  associated with the concentration matrix in Figure~\ref{FIG:FH.exa-con-iv}, and it represents a \PDRCON\ model with all possible twin-paring classes and, finally, the graph in Figure~\ref{FIG:FH.exa-F} is not a \PDCG\ because neither the vertex colour class $\{1_{H}, 4_{B^{\prime}}\}$ nor the edge colour class $\{(1_{H}, 3_{H^{\prime}}), (1_{H}, 2_{B})\}$ are twin-pairing.
\end{example}

\section{Related Works and Application Issues}\label{SEC:related.works}
The problem of comparing the distribution of a set of variables between two experimental conditions, or groups, is common to many statistical applications. In this context, paired data commonly arise in paired design studies, where each subject is measured twice at two different time points or under two different situations, as well as from matched observational studies. For instance, in biomedical studies  paired designs are frequently used because they make it possible to account for individual-specific effects, and consequently to reduce the variability in the observations due to differences between individuals. This also extends to high-throughput experimental techniques, with a relevant example provided by cancer genomics where control samples are often obtained from histologically normal tissues adjacent to the tumor (NAT) \citep{hardcastle2013empirical,aran2017comprehensive}.

When the interest is for the association structure represented by a \GGM{}, then the analysis of paired data amounts to the joint learning of the structure of a graph for  each of the two groups, by accounting for the cross-graph association structure. We approach this problem by considering the family of \PDRCON\ models, introduced by \citet{roverato2022modelinclusion}. More specifically, \citet{roverato2022modelinclusion} showed that \PDRCON\ models form a proper sublattice of the lattice of coloured graphical models, gave
rules for the computation of  the meet and joint operations, and implemented such rules in a stepwise model search procedure. Hence, \citet{roverato2022modelinclusion}
considered the results provided by \citet{gehrmann2011lattices} for \RCON\ models, and derived the specific form they take when restricted to the subset of \PDRCON\ models.  In this way, some improvement in terms of efficiency could be achieved but, nevertheless, the  \PDRCON\ model inclusion sublattice still behaves like a partition lattice, and shares the inefficiencies of the latter. In the next section, we introduce a lattice for the family of \PDRCON\ models that behaves like the subset lattice, and this allows us to exploit the properties of the latter in the exploration of the model space, as well as  to gain insight into the structure of this model class.

One way to avoid the explicit exploration of the model space is by using penalized likelihood methods, which can be applied to problems of larger dimensions. \citet{ranciati2023application}, elaborating on previous work by  \citet{ranciati2021fused}, introduced a graphical lasso method for learning \PDRCON\ models; see also \citet{li2021penalized} and \citet{wit2015factorial} for applications of penalized likelihood methods to coloured graphical modelling. The applications considered in \citet{ranciati2021fused} and \citet{ranciati2023application} concern the identification of brain networks from fMRI data and of gene networks from breast cancer gene expression data, respectively, which are contexts where variables are measured on the same scale. Indeed, the scale the variables are measured plays a relevant role in the procedures for learning \PDRCON\ models. \citet[][Section~8]{hojsgaard2008graphical} remarked that the comparison of concentration values is meaningful only when variables are measured on comparable scales, and they recommended that \RCON\ models should be used only in this case. We note, however, that a less stringent condition is required in \PDRCON\ models because, the fact that only twin-pairing colour classes are allowed, implies that only homologous variables need to have comparable scales, that is a condition easily satisfied in practice. Hence, \PDRCON\ models can be meaningfully applied also when the scales of non-homologous variables are not comparable. However, in this case the application of graphical lasso methods is problematic. Indeed, on the one hand the result of the graphical lasso is not invariant to scalar multiplications of the variables and, for this reason, it is common practice to apply it to standardized data; see, among others, \citet[][p.~8]{hastie2015statistical} and \citet{carter2023partial}. On the other hand, as noticed by \citet[][Section~3.4]{hojsgaard2008graphical}, \RCON\ models are not invariant under rescaling, in the sense that standardization will not preserve the original structure of colour classes. In Section~\ref{SEC:applications}, we present two applications: the first to the same fMRI data previously analysed by \citet{ranciati2021fused} and \citet{roverato2022modelinclusion}, and the second to a dataset on air quality data where non-homologous variables have different scales, and a comparison with the graphical lasso procedure of \citet{ranciati2023application} is carried out.

An alternative approach to the joint learning of dependent \GGM{s} was introduced by \citet{xie2016joint} and \citet{zhang2022bayesian} where multiple groups are considered and the cross-graph dependence is modelled by means of a latent vector representing systemic variation  manifesting simultaneously in all groups. The latter methods have the advantage that they can deal with an arbitrary number of dependent groups but, on the other hand, they do not provide an explicit representation of the association structure between groups. \citet{wit2015factorial} introduced factorial \GGM{s}, which are suited for dealing with longitudinal multivariate data observed at $T$ time points; see also \citet{vinciotti2016model}. In factorial \GGM{s} the concentration matrix is partitioned into blocks in such a way that each of the $T$ diagonal blocks gives the graph structure at a different time point, whereas the off-diagonal blocks encode the cross-graphs dependence structure. More specifically, factorial \GGM{s} are \RCON\ models because they also allow to include equality constraints between entries of the concentration matrix and, thus, \PDRCON\ models could be regarded as a subfamily of factorial \GGM{s}. However, the available theory of factorial \GGM{s} does not include methods for learning the colour classes, which are assumed to be known.



\section{An Alternative Lattice Structure of $\mathbf{\P}$} \label{SEC:alternative.lattice.structure.for.P}

We now introduce a novel lattice structure for the family of \PDRCON\ models that is based on an alternative representation of \PDCG{s}.

\subsection{An Alternative Characterization of Coloured Graphs for Paired Data}\label{SEC:Notations}

The classical representation of coloured graphs is through their colour classes. In this section we introduce a different representation of \PDCG{s} that will allow us to deal more efficiently with these objects. A detailed example that may help to follow the steps required for the construction of the novel representation is given in Section~\ref{SUP.SEC:edge.set.partition} of the Appendix.

If $(L, R)$ is a partition induced by the twin function $\tau$ we can, without loss of generality, number the variables so that $L=\{1,\ldots, q\}$ and  $R=\{q+1,\ldots, p\}$. The latter numbering allows us to identify the following three subsets of $F_{V}$,
\begin{align*}
	F_{L}=\{(i,j)\in F_{V}\mid i<\tau(j)\},\; F_{R}=\{(i,j)\in F_{V}\mid i>\tau(j)\},\; F_{T}=\{(i,j)\in F_{V}\mid i=\tau(j)\}.
\end{align*}
It is straightforward to see that the triplet $(F_{L}, F_{R}, F_{T})$ forms a partition of $F_{V}$ with the property that $F_{R}=\tau(F_{L})$, $F_{L}=\tau(F_{R})$ and $F_{T}=\tau(F_{T})$. The subscripts we use follow from the fact that, for every $(i,j)\in F_{V}$, it holds that if $i,j\in L$ then  $(i,j)\in F_{L}$ whereas if $i,j\in R$ then  $(i,j)\in F_{R}$. We remark, however, that both  $F_{L}$ and $F_{R}$ contain edges $(i,j)$ such that $i\in L$ and $j\in R$. More specifically, the partition of the
cross-graph edges between $F_{L}$ and $F_{R}$ is not unique, because any partition with the property that if $(i, j)\in F_{L}$ then $\tau(i,j)\in F_{R}$ can be equivalently used. Indeed, the partition implied by $\tau$ merely depends on the variable numbering. Finally, the subscript of $F_{T}$ recalls that every edge in this set links a vertex to its twin.

For a coloured graph $\G=(\V, \E)$, we set
\begin{align}\label{EQN:uncoloured.version}
	V_{\G} = \cup_{j = 1}^{v} V_{j}\qquad\mbox{and}\qquad E_{\G} = \cup_{j = 1}^{e} E_{j},
\end{align}
and call $(V_{\G}, E_{\G})$ the \emph{uncoloured version} of $\G$. Hence, if $G=(V, E)$ is the uncoloured version of  $\G$ then
the above partition of $F_{V}$ naturally induces a partition  $(E_{L}, E_{R}, E_{T})$ of $E$ where $E_{L}=E\cap F_{L}$,  $E_{R}=E\cap F_{R}$ and $E_{T}=E\cap F_{T}$. Then, we introduce the following two sets which can be associated to a \PDCG\ $\G=(\V, \E)$,
\begin{align}\label{EQ:bbL.and.BBE}
	\LL = \{i \in L \mid \{ i\} \in \V\}
	\quad\mbox{and}\quad \EE = \{(i,j) \in E_{L} \cap \tau(E_{R}) \mid \{(i, j)\} \in \E\},
\end{align}
so that $\LL\subseteq L$  is made up of the vertices in $L$ which belong to an atomic colour class, whereas an edge $(i,j)$ belongs to $\EE$  if and only if (i) $(i,j)$ belongs to $E_{L}$, and (ii) its twin $\tau(i,j)$ is present in the graph and (iii) both $(i,j)$ and $\tau(i, j)$ form atomic colour classes. Hence, through equations (\ref{EQN:uncoloured.version}) and (\ref{EQ:bbL.and.BBE}), we can associate to any \PDCG\ $\G$ two sets of vertices, that is $V$ and $\LL$, and two sets of edges, that is $E$ and $\EE$.
\begin{example}[Frets' Heads continued]
All the \PDCG{s} of Figure~\ref{FIG:FH.exa} have common vertex set $V=\{1_{H}, 2_{B}, 3_{H^{\prime}}, 4_{B^{\prime}}\}$ and edge set $E=\{(1_{H}, 2_{B}), (1_{H}, 3_{H^{\prime}}), (1_{H}, 4_{B^{\prime}}), (2_{B}, 3_{H^{\prime}}),\allowbreak (3_{H^{\prime}}, 4_{B^{\prime}})\}$. The graph in Figure~\ref{FIG:FH.exa-B} has no twin-pairing colour classes so that $\LL=L=\{1_{H}, 2_{B}\}$ and $\EE=E_{L}\cap\tau(E_{R})=\{(1_{H}, 2_{B}), (1_{H}, 4_{B^{\prime}})\}$. For the graph in  Figure~\ref{FIG:FH.exa-C} one has $\LL=\{2_{B}\}$ and $\EE=\{(1_{H}, 4_{B^{\prime}})\}$ whereas the graph  Figure~\ref{FIG:FH.exa-D} is such that $\LL=\{1_{H}\}$ and $\EE=\{(1_{H}, 2_{B})\}$. Finally, the graph in  Figure~\ref{FIG:FH.exa-E} has all possible twin-pairing colour classes and therefore $\LL=\EE=\emptyset$.
\end{example}

We will show below that every \PDCG\ can be uniquely  represented by its uncoloured version, together with the  subset of vertices $\LL$ that characterizes the vertex colour classes, and the subset of edges $\EE$ that characterizes the edge colour classes. Hence, the quadruplet $(V, E, \LL, \EE)$ provides an alternative representation of \PDCG{s}. Firstly, we show that from the four sets $(V, E, \LL, \EE)$ it is possible to recover the coloured graph $\G=(\V, \E)$ from where they were computed.
\begin{proposition}\label{THM:new.notation.inverse.function}
	Let $\G=(\V, \E)$ be a \PDCG\ and let $(V, E, \LL, \EE)$ the quadruplet obtained from the application of  (\ref{EQN:uncoloured.version}) and (\ref{EQ:bbL.and.BBE}) to $\G$ with respect to a partition $(L, R)$ induced by the twin function $\tau$. Then, the colour classes $(\V, \E)$ of $\G$ can be recovered from the quadruplet as follows,
	\begin{enumerate}[label = (\roman*), ref=(\roman*) ]
		\item the twin-pairing colour classes in $\V$ are the sets
		$\{i, \tau(i)\}$ for all $i\in L\setminus\LL$, and the remaining vertices in $V$ form vertex atomic classes;
		\item the twin-pairing colour classes in $\E$ are the sets $\{(i,j), \tau(i,j)\}$ for all $(i,j)\in (E_{L}\cap \tau(E_{R}))\setminus \EE$ and the remaining edges in $E$ form  edge atomic classes.
	\end{enumerate}
\end{proposition}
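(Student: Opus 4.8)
The plan is to verify directly that rules (i) and (ii) reproduce the colour classes of $\G$, using the defining property of a \PDCG\ that every vertex and every edge lies in a unique colour class that is either atomic or twin-pairing. The single fact doing most of the work is an observation I would record first: if an object (a vertex or an edge) belongs to an atomic colour class and its twin is present in the uncoloured version $(V,E)$, then the twin is also atomic. Indeed, the only twin-pairing class that could possibly contain the twin is the pair formed by that twin together with the original object, and this would contradict the atomicity of the latter. This shows that atomicity is a twin-symmetric property among present objects, which is exactly what reconciles the asymmetric definitions of $\LL$ and $\EE$ (phrased only in terms of $L$ and $E_{L}$) with the symmetric structure of the colour classes.

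For part (i), I would use that $\tau(L)=R$, so that every twin-pairing vertex class $\{i,\tau(i)\}$ contains exactly one element of $L$. Hence these classes are in bijection with the non-atomic vertices of $L$, namely $L\setminus\LL$, via $i\mapsto\{i,\tau(i)\}$, which is precisely the list in (i). It then remains to check that the vertices not appearing in any such pair are atomic. These are the vertices of $\LL$, atomic by definition, together with the vertices $j\in R$ whose twin $\tau(j)$ lies in $\LL$; the latter are atomic by the observation above, since $\tau(j)$ is atomic and $j=\tau(\tau(j))$ is present.

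For part (ii), I would first partition $E=E_{L}\cup E_{R}\cup E_{T}$ and dispose of $E_{T}$: for $(i,j)\in F_{T}$ one has $i=\tau(j)$, so $\tau(i,j)=(i,j)$, and such an edge cannot sit in a twin-pairing class, hence is atomic. Next, a twin-pairing edge class is $\{(i,j),\tau(i,j)\}$ with, say, $(i,j)\in F_{L}$ and $\tau(i,j)\in F_{R}$; both edges being present is exactly the condition $(i,j)\in E_{L}\cap\tau(E_{R})$, and discarding the atomic ones leaves $(E_{L}\cap\tau(E_{R}))\setminus\EE$, giving the pairs listed in (ii). Finally I would verify that the remaining edges are atomic, treating the residual families: edges of $E_{L}\setminus\tau(E_{R})$ have an absent twin and so cannot be paired; edges of $\EE$ are atomic by definition; and an edge of $E_{R}$ is either the twin of an edge in $(E_{L}\cap\tau(E_{R}))\setminus\EE$, hence already paired, or else is atomic, its twin being either absent (so it cannot be paired) or atomic (so the observation applies).

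I would expect the only genuine bookkeeping — and the main obstacle — to be part (ii), because $\EE$ is defined through $E_{L}$ alone whereas the edge colour classes live symmetrically across $E_{L}$ and $E_{R}$; the twin-symmetry of atomicity is precisely what is needed to guarantee that the $E_{R}$ side is correctly reconstructed and that no edge is double-counted or omitted. The vertex case (i) and the treatment of $E_{T}$ are then routine.
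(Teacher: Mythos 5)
Your proof is correct and follows essentially the same route as the paper's: identify the twin-pairing classes as $\{i,\tau(i)\}$ for $i\in L\setminus\LL$ and $\{(i,j),\tau(i,j)\}$ for $(i,j)\in(E_{L}\cap\tau(E_{R}))\setminus\EE$, then check that everything else is atomic. The only difference is that you isolate and prove the twin-symmetry of atomicity among present objects (and use it to handle the $E_{R}$ and $E_{T}$ edges and the $R$-side vertices), a point the paper's proof leaves implicit as ``straightforward''; this is a welcome clarification rather than a different argument.
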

\begin{proof}
	See Section~\ref{SUP.SEC:proof-inverse-function} of the Appendix.
\end{proof}

The quadruplet  $(V, E, \LL, \EE)$  characterizes a \PDCG\ by means of an uncoloured undirected graph together with a subset of its vertices, $\LL$, and a subset of its edges, $\EE$.  However, not all quadruplets of this type can be used to represent a \PDCG\ and we need the following.
\begin{definition} Let $G=(V, E)$ be an undirected graph and $\tau$ a twin function on $V$. We say that the collection of sets $(V, E, \LL, \EE)$ is \emph{compatible} with the partition $(L, R)$ induced by $\tau$ if $\LL\subseteq L$ and $\EE\subseteq E_{L}\cap\tau(E_{R})$ .
\end{definition}
And we can now give the main result of this section.
\begin{theorem}\label{THM:existing-novel-notations}
	Let $\tau$ a twin function on $V$ and $(L, R)$ a partition of $V$ induced by $\tau$. Then,
	equations (\ref{EQN:uncoloured.version}) and (\ref{EQ:bbL.and.BBE}) establish a one-to-one correspondence between the family of \PDCG{s} $\P$ and the collection of quadruplets $(V, E, \LL, \EE)$ compatible with  $(L,R)$.
\end{theorem}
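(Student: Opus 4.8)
The plan is to realize the correspondence as a pair of mutually inverse maps. Let $\Phi$ be the map sending a \PDCG\ $\G$ to the quadruplet $(V, E, \LL, \EE)$ via (\ref{EQN:uncoloured.version}) and (\ref{EQ:bbL.and.BBE}), and let $\Psi$ be the reconstruction described in Proposition~\ref{THM:new.notation.inverse.function}: given a compatible quadruplet, one forms the vertex colour classes $\{i, \tau(i)\}$ for $i \in L\setminus\LL$ together with atomic classes on the remaining vertices, and the edge colour classes $\{(i,j), \tau(i,j)\}$ for $(i,j) \in (E_{L}\cap\tau(E_{R}))\setminus\EE$ together with atomic classes on the remaining edges. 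The goal is then to show $\Psi\circ\Phi = \mathrm{id}$ and $\Phi\circ\Psi = \mathrm{id}$.

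First I would check that $\Phi$ indeed lands among the compatible quadruplets, which is immediate since $\LL\subseteq L$ and $\EE\subseteq E_{L}\cap\tau(E_{R})$ hold by the very definitions in (\ref{EQ:bbL.and.BBE}). Injectivity of $\Phi$ is then exactly the content of Proposition~\ref{THM:new.notation.inverse.function}, which recovers the colour classes of $\G$ from its quadruplet; this gives $\Psi\circ\Phi=\mathrm{id}$ and reduces the theorem to proving surjectivity.

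The substantive work is to verify surjectivity for an arbitrary compatible quadruplet $(V,E,\LL,\EE)$, and it splits into two parts. The first is that $\Psi(V,E,\LL,\EE)$ is a genuine \PDCG, \ie\ that the prescribed families partition $V$ and $E$ into atomic or twin-pairing blocks. For vertices, the classes $\{i,\tau(i)\}$ with $i\in L\setminus\LL$ are pairwise disjoint and cover $(L\setminus\LL)\cup\tau(L\setminus\LL)$, leaving exactly $\LL\cup\tau(\LL)$ to be split into singletons, so $V$ is partitioned. For edges I would use the partition $(E_{L}, E_{R}, E_{T})$ of $E$ together with the facts that $\tau$ interchanges $F_{L}$ and $F_{R}$ and fixes every element of $F_{T}$ after reordering endpoints. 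Consequently edges of $E_{T}$ are necessarily atomic, the twin-pairing classes consume precisely $(E_{L}\cap\tau(E_{R}))\setminus\EE$ together with their twins $\tau\big((E_{L}\cap\tau(E_{R}))\setminus\EE\big)\subseteq E_{R}\cap\tau(E_{L})$, and the atomic blocks are $\EE$, $\tau(\EE)$, $E_{L}\setminus\tau(E_{R})$, $E_{R}\setminus\tau(E_{L})$ and $E_{T}$; checking that these blocks are pairwise disjoint and exhaust $E$ gives the partition.

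The second part, and the step where I expect the delicate bookkeeping, is to confirm $\Phi\circ\Psi=\mathrm{id}$, that is, that reapplying (\ref{EQN:uncoloured.version}) and (\ref{EQ:bbL.and.BBE}) to $\G=\Psi(V,E,\LL,\EE)$ returns the same quadruplet. That the uncoloured version is $(V,E)$ is clear, and the recovered vertex set equals $\LL$ because the atomic vertex classes of $\G$ sit on $\LL\cup\tau(\LL)$, of which only $\LL$ lies in $L$. The crux is the edge component: I must show that the atomic edges of $\G$ lying in $E_{L}\cap\tau(E_{R})$ are exactly $\EE$. Here the compatibility hypothesis $\EE\subseteq E_{L}\cap\tau(E_{R})$ is essential, and the argument is that an edge of $E_{L}\cap\tau(E_{R})$ is atomic in $\G$ if and only if it was excluded from the twin-pairing list, hence if and only if it lies in $\EE$, while simultaneously noting that the twin $\tau(i,j)\in E_{R}$ of such an edge is itself atomic (as it cannot coincide with any $(k,l)\in E_{L}$ nor any excluded twin), so that no spurious pairing is reintroduced. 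Combining these identities yields $\Phi\circ\Psi=\mathrm{id}$, and together with $\Psi\circ\Phi=\mathrm{id}$ this establishes the claimed one-to-one correspondence.
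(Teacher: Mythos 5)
Your proposal is correct and follows essentially the same route as the paper's proof: both realize the correspondence via the map $\Phi$ defined by (\ref{EQN:uncoloured.version})--(\ref{EQ:bbL.and.BBE}) and the reconstruction $\Psi$ of Proposition~\ref{THM:new.notation.inverse.function}, obtain $\Psi\circ\Phi=\mathrm{id}$ from that proposition, and verify $\Phi\circ\Psi=\mathrm{id}$ directly using the compatibility conditions $\LL\subseteq L$ and $\EE\subseteq E_{L}\cap\tau(E_{R})$. The only difference is that you spell out the check that $\Psi$ of a compatible quadruplet genuinely partitions $V$ and $E$ into atomic and twin-pairing classes, a step the paper treats as immediate from the construction.
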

\begin{proof}
	See Section~\ref{SUP.SEC:proof-existing-novel-notations} of the Appendix.
\end{proof}
We have thus shown that the family of compatible quadruplets $(V, E, \LL, \EE)$ provides a representation of $\P$ alternative to colour classes. In fact, such alternative representation may look less intuitive than the traditional representation. However, it is easier to implement in computer programs and, as shown below, this representation naturally leads to the definition of a useful alternative lattice structure of $\P$.


\subsection{The Twin Lattice}\label{SEC:def-twin-lattice}

In this section, we introduce a novel partial order for \PDCG{s} and show that, embedded with this order, the set $\P$ forms a distributive lattice that we call the twin lattice. Unlike the model inclusion lattice that inherits its properties from the partition lattice, the twin lattice behaves in a way similar to the subset lattice and therefore it satisfies the distributivity property, and the join and meet operations can be efficiently computed as union and intersection of sets, respectively.

For the theory developed in the following, we need to identify a partition $(L, R)$ of $V$ that is induced by $\tau$, but otherwise arbitrary. Furthermore, a \PDCG\  $\G\in \P$ will be equivalently represented by using its colour classes, $(\V, \E)$ or its quadruplet $(V, E, \LL, \EE)$. We can now introduce the novel twin order.
\begin{definition}\label{DEF:twin-order}
	Let $\Hs=(V, E_{\Hs}, \LL_{\Hs}, \EE_{\Hs})$ and $\G=(V, E_{\G}, \LL_{\G}, \EE_{\G})$ be two \PDCG{s} in $\P$.  Then we say  that $\Hs\preceq_{t} \G$ if and only if
	\begin{enumerate}[label = (T\arabic*), ref=(T\arabic*) ]
		\item $E_{\Hs}\subseteq E_{\G}$, \label{iSV}
		\inlineitem  $\LL_{\Hs} \subseteq \LL_{\G}$, \label{iiSV}
		\inlineitem $\EE_{\Hs} \subseteq \EE_{\G}$.\label{iiiSV}
	\end{enumerate}
	and call $\preceq_{t}$ the \emph{twin order}.
\end{definition}
We can thus see that the twin order is naturally associated with the alternative representation of \PDCG{s} introduced in the previous section, and that, when using that representation, it is straightforward to check whether the order relationship holds for two graphs. Because the twin order generalizes the subset order, also the associated lattice turns out to be a natural extension of the subset lattice and to share its  properties.
\begin{theorem}\label{THM:LatticePropertiesofOrderTau}
	The family of \PDCG{}s on the vertex set $V$, equipped with the twin order, that is $\langle \P, \preceq_{t}\rangle$, forms a complete distributive lattice, that we call the \emph{twin lattice}, where
	if $\G,\Hs\in\P$,
	\begin{enumerate}[label = (\roman*), ref=(\roman*) ]
		\item\label{meet.twin.lattice} the meet of $\G$ and $\Hs$ can be computed as
		\begin{align*}
			\G \wedge_{t} \Hs = \left(V,\; E_{\G} \cap E_{\Hs},\; \LL_{\G}\cap \LL_{\Hs},\; \EE_{\G} \cap \EE_{\Hs} \right);
		\end{align*}

		\item\label{join.twin.lattice} the join  of $\G$ and $\Hs$ can be computed as
		\begin{align*}
			\G \vee_{t} \Hs =  \left(V,\; E_{\G} \cup E_{\Hs},\; \LL_{\G}\cup \LL_{\Hs},\; \EE_{\G} \cup \EE_{\Hs} \right).
		\end{align*}
	\end{enumerate}
	Furthermore, the unit is $\hat{1}=\left(V,\; F_{V},\; L,\; F_{L} \right)$, that is the uncoloured complete graph, whereas the zero is  $\hat{0}=\left(V,\;\emptyset,\;\emptyset,\; \emptyset \right)$, that is the graph with no edge and such that all vertices belong to  twin-pairing classes.
\end{theorem}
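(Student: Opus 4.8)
The plan is to invoke the bijection of Theorem~\ref{THM:existing-novel-notations} to identify every $\G\in\P$ with its compatible quadruplet $(V, E_{\G}, \LL_{\G}, \EE_{\G})$, and then to recognise the twin order as a coordinatewise set-inclusion order. With the vertex set $V$ held fixed, a compatible quadruplet is determined by the triple $(E, \LL, \EE)$ lying in the product $\Power(F_{V})\times \Power(L)\times \Power(F_{L})$ (note $\EE\subseteq E_{L}\cap\tau(E_{R})\subseteq F_{L}$), and by Definition~\ref{DEF:twin-order} we have $\Hs\preceq_{t}\G$ precisely when this triple for $\Hs$ is coordinatewise contained in that for $\G$. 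Since $\langle\Power(A),\subseteq\rangle$ is a complete distributive lattice for any finite $A$, with meet and join given by $\cap$ and $\cup$, the product of the three factors is again a complete distributive lattice, and $\preceq_{t}$ is simply the restriction to $\P$ of its product order; in particular reflexivity, antisymmetry and transitivity are immediate.

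The decisive step is to show that $\P$, viewed inside this product, is closed under coordinatewise intersection and union, \ie\ that it is a \emph{sublattice}. The condition $\LL\subseteq L$ is preserved trivially by both $\cap$ and $\cup$, so the real content is the coupling constraint $\EE\subseteq E_{L}\cap\tau(E_{R})$, which links the fourth coordinate to the second. Here I would exploit that the decomposition $E_{L}=E\cap F_{L}$, $E_{R}=E\cap F_{R}$ distributes over both $\cap$ and $\cup$, and that $\tau$ is an involution, hence a bijection commuting with intersection and union. For the meet, with $E=E_{\G}\cap E_{\Hs}$, this gives
\begin{align*}
	E_{L}\cap\tau(E_{R}) = \bigl[(E_{\G}\cap F_{L})\cap\tau(E_{\G}\cap F_{R})\bigr]\cap\bigl[(E_{\Hs}\cap F_{L})\cap\tau(E_{\Hs}\cap F_{R})\bigr]\supseteq \EE_{\G}\cap\EE_{\Hs},
\end{align*}
so the meet quadruplet is compatible; for the join, with $E=E_{\G}\cup E_{\Hs}$, monotonicity of the same decomposition yields $E_{L}\cap\tau(E_{R})\supseteq (E_{\G}\cap F_{L})\cap\tau(E_{\G}\cap F_{R})\supseteq\EE_{\G}$ and likewise for $\Hs$, whence $E_{L}\cap\tau(E_{R})\supseteq\EE_{\G}\cup\EE_{\Hs}$. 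I expect this closure verification for $\EE$ to be the main obstacle, precisely because compatibility of the fourth coordinate is not intrinsic but tied to the simultaneously changing edge set.

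Once closure is established, the remaining claims follow from general lattice theory. Because the operations on $\P$ coincide with the coordinatewise $\cap$ and $\cup$ of the ambient product, they are automatically the infimum and supremum in $\langle\P,\preceq_{t}\rangle$: any common lower bound of $\G$ and $\Hs$ is coordinatewise contained in both, hence in their coordinatewise intersection, and dually for upper bounds. This yields exactly the stated formulas for $\G\wedge_{t}\Hs$ and $\G\vee_{t}\Hs$. Distributivity is then inherited, since a sublattice of a distributive lattice is distributive, the relevant identity being the set identity $A\cup(B\cap C)=(A\cup B)\cap(A\cup C)$ applied coordinatewise. Finally, $\P$ is finite, so the lattice is complete; to identify the extremal elements I would check directly that $\hat{1}=(V, F_{V}, L, F_{L})$ and $\hat{0}=(V,\emptyset,\emptyset,\emptyset)$ are compatible—using $(F_{V}\cap F_{L})\cap\tau(F_{V}\cap F_{R})=F_{L}\cap\tau(F_{R})=F_{L}$ for the unit—and that every compatible quadruplet satisfies $\emptyset\subseteq E\subseteq F_{V}$, $\emptyset\subseteq\LL\subseteq L$ and $\emptyset\subseteq\EE\subseteq F_{L}$, so that $\hat{0}$ and $\hat{1}$ are respectively the global infimum and supremum.
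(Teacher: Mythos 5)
Your proof is correct, and it reaches the same formulas by what is at bottom the same computation, but the packaging is genuinely different and worth noting. The paper verifies the universal properties by hand: it checks that the candidate meet (join) is a lower (upper) bound and then that any other lower (upper) bound is coordinatewise contained in (contains) it, and it proves distributivity by writing out the set identity. You instead observe that, via the bijection of Theorem~\ref{THM:existing-novel-notations}, $\langle\P,\preceq_{t}\rangle$ is an order-embedded subset of the product lattice $\Power(F_{V})\times\Power(L)\times\Power(F_{L})$ closed under coordinatewise $\cap$ and $\cup$, so that the meet/join formulas, distributivity and completeness are all inherited from general lattice theory. What your route buys is twofold. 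First, it isolates the one step with actual content --- that the coupling constraint $\EE\subseteq E_{L}\cap\tau(E_{R})$ is preserved under coordinatewise intersection and union of quadruplets --- and proves it, using that $A\mapsto A\cap F_{L}$ and the involution $\tau$ commute with $\cap$ and $\cup$; the paper's proof asserts this compatibility with ``it can be easily checked, by construction'' and never writes it down, so your argument fills a genuine (if small) gap. Second, your framing makes transparent \emph{why} the twin lattice ``behaves like the subset lattice'', which is the conceptual point of the section. One pedantic remark: antisymmetry of $\preceq_{t}$ on $\P$ does require injectivity of the quadruplet representation, so your appeal to Theorem~\ref{THM:existing-novel-notations} at the outset is not optional decoration but a needed ingredient; you do invoke it, so nothing is missing.
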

\begin{proof}
	See Section~\ref{SUP.SEC:proof-LatticePropertiesofOrderTau} of the Appendix.
\end{proof}


\subsection{Application of the Twin Lattice in Model Search} \label{SEC:relationships.between.lattices}


In order to exploit the properties of the twin lattice in model search it is useful to investigate the existing relationships between the twin lattice and the model inclusion lattice. We first show that the twin order $\preceq_{t}$ is a refinement of the model inclusion order $\preceq_{s}$.
\begin{proposition}\label{THM:relation-modelinclusion-twinorder}
	For any $\Hs, \G \in \P$, the following relationships between the model inclusion order and the twin order exist,
	\begin{enumerate}[label = (\roman*), ref=(\roman*) ]
		\item if $\Hs \preceq_{s} \G$ then $\Hs \preceq_{t} \G$\label{rel.s.t.i}
		\item if $\Hs \preceq_{s} \G$ and $\Hs \precdot_{t} \G$ then $\Hs \precdot_{s} \G$.\label{rel.s.t.ii}
	\end{enumerate}
\end{proposition}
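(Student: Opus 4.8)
The plan is to prove the two parts in sequence, obtaining \ref{rel.s.t.ii} as a purely order-theoretic consequence of \ref{rel.s.t.i}. Throughout I would represent each \PDCG\ by its quadruplet $(V, E_\G, \LL_\G, \EE_\G)$ and use that, by \citet{gehrmann2011lattices}, the model inclusion relation $\Hs\preceq_{s}\G$ is equivalent to conditions \ref{S1}--\ref{S3}. The whole of part \ref{rel.s.t.i} then amounts to checking that \ref{S1}--\ref{S3} imply the twin-order conditions \ref{iSV}--\ref{iiiSV}.

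First I would dispatch \ref{rel.s.t.i}. Condition \ref{iSV} is literally \ref{S1}. For \ref{iiSV}, suppose $i\in\LL_\Hs$, so $i\in L$ and $\{i\}\in\V_\Hs$; by \ref{S2} the singleton $\{i\}$ is a union of colour classes of $\V_\G$, and since colour classes are nonempty and pairwise disjoint this forces $\{i\}\in\V_\G$, hence $i\in\LL_\G$. For \ref{iiiSV}, suppose $(i,j)\in\EE_\Hs$; unwinding (\ref{EQ:bbL.and.BBE}) this says $(i,j)\in E_\Hs\cap F_L$, $\tau(i,j)\in E_\Hs\cap F_R$, and $\{(i,j)\}\in\E_\Hs$. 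Because $E_\Hs\subseteq E_\G$ and the sets $F_L, F_R$ are fixed subsets of $F_V$ that do not depend on the graph, both $(i,j)$ and $\tau(i,j)$ remain in $E_\G$, so $(i,j)\in(E_\G\cap F_L)\cap\tau(E_\G\cap F_R)$; meanwhile \ref{S3} forces the singleton edge class $\{(i,j)\}$ to be a colour class of $\E_\G$. Together these give $(i,j)\in\EE_\G$, completing \ref{iiiSV}.

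Next I would prove \ref{rel.s.t.ii} by contradiction, using that \ref{rel.s.t.i} establishes the containment $\preceq_{s}\,\subseteq\,\preceq_{t}$ of relations. Since $\Hs\precdot_{t}\G$ implies $\Hs\neq\G$, the hypothesis $\Hs\preceq_{s}\G$ upgrades to $\Hs\prec_{s}\G$. If $\Hs$ were not covered by $\G$ in the model inclusion order, there would be some $\mathcal{K}\in\P$ with $\Hs\prec_{s}\mathcal{K}\prec_{s}\G$; applying \ref{rel.s.t.i} to each strict step (strictness surviving because $\Hs\neq\mathcal{K}$ and $\mathcal{K}\neq\G$) would give $\Hs\prec_{t}\mathcal{K}\prec_{t}\G$, contradicting $\Hs\precdot_{t}\G$. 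Hence no intermediate $\mathcal{K}$ exists and $\Hs\precdot_{s}\G$.

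The only genuinely delicate point is the verification of \ref{iiiSV}: one must track the definition of $\EE$ carefully, noting that it is built from the graph-independent index sets $F_L, F_R$, and that in a \PDCG\ an atomic edge class $\{(i,j)\}$ automatically forces its present twin $\tau(i,j)$ to be atomic too, so that membership in $\EE$ is controlled by the single condition $\{(i,j)\}\in\E$. Everything else is either an immediate reading of a definition or, for part \ref{rel.s.t.ii}, the standard observation that a covering pair of a larger order which happens to be comparable in a smaller contained order must also cover there, since any element strictly between them in the smaller order would also lie strictly between them in the larger one.
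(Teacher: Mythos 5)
Your proposal is correct and follows essentially the same route as the paper: part (i) is proved by checking that \ref{S1}--\ref{S3} imply \ref{iSV}--\ref{iiiSV} (with the same observations that a singleton class must remain a single class under the union condition, and that membership of a twin edge in $E_{\G}$ follows from \ref{S1}), and part (ii) is the same contradiction argument inserting an intermediate model and transporting the strict chain to the twin order via part (i). No gaps.
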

\begin{proof}
	See Section~\ref{SUP.SEC:proof-relation-modelinclusion-twinorder} of the Appendix.
\end{proof}
We remark that the reverse of (i) in Proposition~\ref{THM:relation-modelinclusion-twinorder} does not hold true, and it is not difficult to find two \PDCG{s}, $\G$ and $\Hs$, such that $\G \preceq_{t} \Hs$ but that are model inclusion incomparable.

For the exploration of the model space it is useful to be able to identify the neighbours of a model $\P(\G)$. Here we focus on stepwise backward elimination procedures and therefore on the  neighbouring submodels of $\P(\G)$, which are the models $\P(\Hs)\subset\P(\G)$ such that there is no graph $\F\in \P$ with $\P(\Hs)\subset\P(\F)\subset\P(\G)$ or, equivalently, the models $\P(\Hs)$ such as $\Hs$ is covered by $\G$ in the model inclusion lattice, $\Hs\precdot_{s}\G$.
\begin{proposition}\label{THM:neighbor-submodels}
	Let $\P(\G)$ be the \PDRCON\ model represented by the graph
	$\G = (V, E_{\G}, \LL_{\G}, \EE_{\G})$.
	Then the set of \PDCG{s} representing the neighbouring submodels of $\P(\G)$, that is the subset of graphs $\Hs\in \P$ such that $\Hs\precdot_{s} \G$, is made up of,
	\begin{enumerate}[label = (\alph*), ref=(\alph*)]
		\item all the graphs obtained by merging exactly two vertex atomic colour classes of $\G$ to obtain a vertex twin-pairing colour class, which can be formally computed as
		\begin{tabbing}
			\=xxxxx\=x\kill
			\>(i)    \>$\Hs=(V, E_{\G},\LL_{\G}\setminus \{i\}, \EE_{\G})$ for all $i\in \LL_{\G}$;
		\end{tabbing}
		\item all the graphs obtained by merging exactly two edge atomic colour classes of $\G$ to obtain an edge twin-pairing colour class, which can be formally computed as
		\begin{tabbing}
			\=xxxxx\= x\kill
			\>(ii)   \>$\Hs=(V, E_{\G}, \LL_{\G}, \EE_{\G}\setminus \{(i,j)\})$ for all $(i, j)\in \EE_{\G}$;
		\end{tabbing}
		\item all the graphs obtained by removing exactly one edge atomic colour class from $\G$, which can be formally computed as
		\begin{tabbing}
			\=xxxxx\= $\Hs=(V,\;$\=$E_{\G} \setminus \{(i,\tau(i))\}$\=$,\,\LL_{\G}$
			\=$,\,\EE_{\G} \setminus \{(i,j)\}$\=$)$x\=xx\kill
			\>(iii)   \>$\Hs=(V,$\>$E_{\G}\setminus{(i,j)}$\>$,\,\LL_{\G}$\>$,\,\EE_{\G} \setminus \{(i,j)\}$\>$)$\> for all $(i,j)\in \EE_{\G}$;\\[.5ex]
			\>(iv)   \>$\Hs=(V,$\>$E_{\G}\setminus{\tau(i,j)}$\>$,\,\LL_{\G}$\>$,\,\EE_{\G} \setminus \{(i,j)\}$\>$)$\> for all $(i,j)\in \EE_{\G}$;\\[.5ex]
			\>(v)  \>$\Hs=(V,$\>$E_{\G}\setminus{(i,j)}$\>$,\,\LL_{\G}$\>$,\,\EE_{\G}$\>$)$\> for all $(i,j)\in E_{\G}$ such that $\tau(i,j)\not\in E_{\G}$;\\[.5ex]
			\>(vi)  \>$\Hs=(V,$\>$E_{\G} \setminus \{(i,\tau(i))\}$\>$,\,\LL_{\G}$\>$,\,\EE_{\G}$\>$)$\> for all $i\in V$ such that  $(i,\tau(i)) \in E_{\G}$;\\
		\end{tabbing}
		
		\item all the graphs obtained by removing exactly one edge twin-pairing colour class from $\G$, which can be formally computed as
		
		\begin{tabbing}
			\=xxxxx\=$\Hs=(V, E_{\G}\setminus\{(i,j), \tau(i,j)\}, \LL_{\G}, \EE_{\G})$ x\=x \kill
			\>(vii) \>$\Hs=(V, E_{\G}\setminus\{(i,j), \tau(i,j)\}, \LL_{\G}, \EE_{\G})$\> for all $(i,j), \tau(i,j) \in E_{\G}$ such that\\[0.5ex]
			\>\>\>$(i,j)\neq \tau(i,j)$and both $(i,j)\notin \EE_{\G}$\\[0.5ex]
            \>\>\>and $\tau(i,j)\notin \EE_{\G}.$
		\end{tabbing}
	\end{enumerate}
\end{proposition}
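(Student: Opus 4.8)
The plan is to reduce the cover relation in $\langle\P,\preceq_{s}\rangle$ to elementary operations on colour classes and then read these off the quadruplet representation. First I would specialise Gehrmann's conditions \ref{S2} and \ref{S3} to \PDCG{s}: since in a \PDCG\ every colour class is atomic or twin-pairing, a class of $\Hs$ that is a union of classes of $\G$ must either coincide with a single class of $\G$ or be the union $\{i,\tau(i)\}$ (respectively $\{(i,j),\tau(i,j)\}$) of two atomic classes of $\G$. Hence the only ways to pass from $\G$ to a strictly smaller \PDCG\ by an indecomposable move are a \emph{twin-merge} of an atomic vertex pair or of an atomic edge pair, and the \emph{deletion} of one edge colour class, atomic or twin-pairing. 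Translating through Theorem~\ref{THM:existing-novel-notations}, a vertex twin-merge deletes one element of $\LL_{\G}$ (case (a)), an edge twin-merge deletes one element of $\EE_{\G}$ (case (b)), and the deletion of a single edge colour class splits, according to whether the removed edge belongs to $\EE_{\G}$, is the twin of an $\EE_{\G}$-edge, has its twin absent, is a self-twin ($F_{T}$) edge, or forms a twin-pairing class, into exactly the subcases (iii)--(vii); in (iii) and (iv) compatibility forces the corresponding edge to leave $\EE_{\G}$ as an inseparable part of the same move. For each listed $\Hs$ I would check that the resulting quadruplet is compatible, hence a genuine \PDCG, and that $\Hs\prec_{s}\G$.

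It then remains to prove that each of these moves is a \emph{cover}, and that the list is exhaustive. For the moves (a), (b), (v) and (vi) a single coordinate of the quadruplet changes by one element, so $\Hs\precdot_{t}\G$; since also $\Hs\preceq_{s}\G$, Proposition~\ref{THM:relation-modelinclusion-twinorder}\ref{rel.s.t.ii} immediately yields $\Hs\precdot_{s}\G$. The remaining moves (iii), (iv) and (vii) are \emph{not} twin-covers---two edges, or an edge together with an $\EE$-entry, change simultaneously---so this shortcut is unavailable and I must argue directly that no \PDCG\ lies strictly between $\Hs$ and $\G$ in $\preceq_{s}$.

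For this I would exploit Proposition~\ref{THM:relation-modelinclusion-twinorder}\ref{rel.s.t.i}: any $\F\in\P$ with $\Hs\preceq_{s}\F\preceq_{s}\G$ also satisfies $\Hs\preceq_{t}\F\preceq_{t}\G$, so the candidate intermediates are confined to the few twin-between quadruplets, which I can enumerate from the componentwise description of $\preceq_{t}$ together with the compatibility constraint $\EE\subseteq E_{L}\cap\tau(E_{R})$. In (iii) and (iv) this enumeration leaves a single candidate distinct from $\Hs$ and $\G$, namely $\F=(V,E_{\G},\LL_{\G},\EE_{\G}\setminus\{(i,j)\})$, while in (vii) it leaves the two graphs $\F$ obtained by retaining a single edge of the twin pair. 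I would then check that none of these lies strictly between $\Hs$ and $\G$ in $\preceq_{s}$: for (iii) and (iv) one has $\Hs\not\preceq_{s}\F$, and for (vii) one has $\F\not\preceq_{s}\G$, because keeping one edge of a twin pair while deleting the other, or merging two atomic edges while one of them is deleted, forces the equality of two concentrations one of which is set to zero, so the relevant model inclusion fails. This is exactly the failure of $\preceq_{t}\Rightarrow\preceq_{s}$ noted in the remark after Proposition~\ref{THM:relation-modelinclusion-twinorder}. Finally, for exhaustiveness I would take an arbitrary $\Hs\prec_{s}\G$, pass to a maximal $\preceq_{s}$-chain from $\Hs$ to $\G$, and observe that its top step is a cover $\F\precdot_{s}\G$ with $\F\succeq_{s}\Hs$; by the class-structure analysis this cover is one of (a)--(d), and if $\Hs\neq\F$ it is a strict intermediate, so $\Hs$ can itself be a cover only when it already has one of the forms (a)--(d).

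The main obstacle is the covering verification for the two-edge moves (iii), (iv) and (vii), where $\Hs$ is genuinely covered by $\G$ in $\langle\P,\preceq_{s}\rangle$ yet is \emph{not} covered by $\G$ in the twin lattice. There Proposition~\ref{THM:relation-modelinclusion-twinorder}\ref{rel.s.t.ii} gives no help, and the whole weight of the argument falls on showing that every twin-intermediate either violates compatibility or is not model-inclusion between $\Hs$ and $\G$; the delicate point is to track which edges change colour-class status (atomic versus twin-pairing) when exactly one member of a twin pair is removed, since it is precisely this change that breaks the submodel relation.
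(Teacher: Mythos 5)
Your proposal is correct, and its backbone coincides with the paper's own proof: both reduce the problem, via conditions \ref{S2}--\ref{S3} specialised to graphs whose classes are all atomic or twin-pairing, to four elementary operations (vertex twin-merge, edge twin-merge, deletion of an atomic edge class, deletion of a twin-pairing edge class), and then translate each operation into the quadruplet notation with the same case split of atomic-edge deletions into (iii)--(vi) according to the status of the twin edge. Where you genuinely add something is the covering verification: the paper disposes of it with ``it is straightforward to see'' that one operation yields $\Hs\precdot_{s}\G$ while several do not, whereas you actually prove it --- for (i), (ii), (v), (vi) by noting that a single coordinate changes by one element, so $\Hs\precdot_{t}\G$, and invoking Proposition~\ref{THM:relation-modelinclusion-twinorder}\ref{rel.s.t.ii}; and for (iii), (iv), (vii) by using Proposition~\ref{THM:relation-modelinclusion-twinorder}\ref{rel.s.t.i} to confine any intermediate to the few componentwise-between quadruplets and then eliminating each candidate either by incompatibility of the quadruplet or by failure of \ref{S3}, since the surviving member of the twin pair reverts to an atomic class. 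This argument is sound and non-circular, as Proposition~\ref{THM:relation-modelinclusion-twinorder} precedes this result and does not depend on it, and it is a real improvement on the paper's bare assertion. The one place where you are as terse as the paper is exhaustiveness: your maximal-chain remark still rests on the unproved (though routine) claim that any $\Hs\prec_{s}\G$ produced by two or more elementary operations admits a strict intermediate obtained by performing only one of them; the paper's case (e) asserts exactly the same thing without proof, so you are no worse off, but that step deserves an explicit sentence.
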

\begin{proof}
	See Section~\ref{SUP.SEC:proof-neighbor-submodels} of the Appendix.
\end{proof}
\begin{example}[Frets' Heads continued]
Figure~\ref{FIG:model-inclusion-lattice} gives the structure of a portion of the Hasse diagram of the model inclusion lattice of \PDRCON\ models in the Frets' heads example. The graph on the top represents the saturated model, and it is linked to its neighbouring submodels, represented by the coloured graphs in the highlighted area. If we number the graphs in the highlighted area from left to right, then graphs 1 and 2 are obtained from the graph of the saturated model by applying (i) of Proposition~\ref{THM:neighbor-submodels},
graphs 5 and 8 from point (ii), graphs 6,  7, 9 and 10 from points (iii) and  (iv) and, finally, graphs 3 and 4 from point (vi). Points (v) and (vii) cannot be applied to the saturated model but they can be applied to obtain the two graphs depicted in the lowest level. More specifically, the two graphs can be obtained both by applying (v) to the graphs 6, 7, 9 and  10, and by applying (vii) to the graphs 5 and 8.
\begin{figure}[tbp]
	\begin{center}
		\includegraphics[scale=.8]{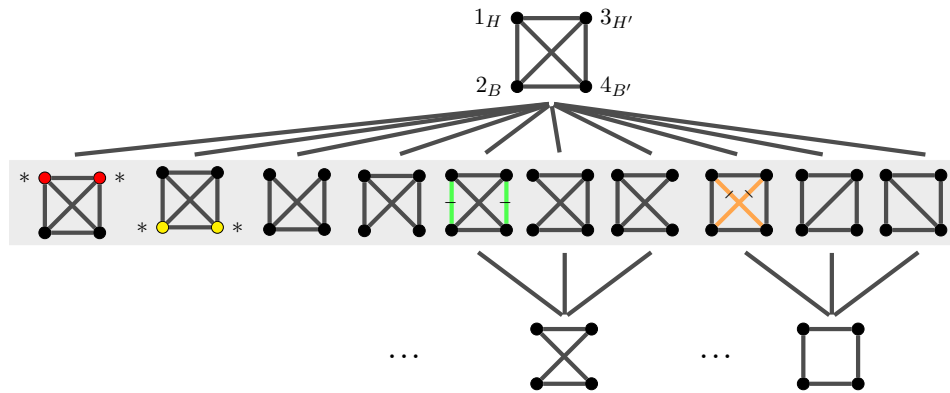}
	\end{center}
	\caption{Part of the Hasse diagram of the model inclusion lattice for the Frets' heads example. The graph representing the saturated model is depicted on the top and followed, below, by the  graphs representing its neighbouring submodels, which are in the highlighted area. Then, two graphs of the third level are given.}\label{FIG:model-inclusion-lattice}
\end{figure}
\end{example}

The usefulness of  this proposition  stands on the fact that it provides a way to construct all of the neighbouring submodels of a given \PDRCON\ model $\P(\G)$ and, furthermore, because it is stated by using the alternative representation of \PDCG{s}, it allows us to carry out a comparison between the model inclusion and the twin lattice. It is obvious, by construction, that any two neighbouring submodels of $\P(\G)$ are $\preceq_{s}$--incomparable. However, they are not typically $\preceq_{t}$--incomparable, as shown as follows.
\begin{corollary}\label{THM:two.layer}
	Let $\Hs_{1}$, $\Hs_{2}$ and $\G$ be three \PDCG{s} such that $\Hs_{1},\Hs_{2}\precdot_{s} \G$ or, equivalently, such that $\P(\Hs_{1})$ and $\P(\Hs_{2})$ are two neighbouring submodels of $\P(\G)$. Then $\Hs_{1}\preceq_{t}\Hs_{2}$ if and only if, for a given edge  $(i,j)\in\EE_{\G}$,  $\Hs_{2}$ is obtained from (ii) of  Proposition~\ref{THM:neighbor-submodels} and $\Hs_{1}$ is obtained from either (iii) or (iv) of the same proposition. More specifically, in the latter case it holds that $\Hs_{1}\precdot_{t}\Hs_{2}$ whereas, in all other cases, $\Hs_{1}$ and $\Hs_{2}$ are $\preceq_{t}$--incomparable.
\end{corollary}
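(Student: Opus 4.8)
The plan is to use the fact that, by Definition~\ref{DEF:twin-order}, the relation $\Hs_1\preceq_{t}\Hs_2$ is equivalent to the three componentwise inclusions $E_{\Hs_1}\subseteq E_{\Hs_2}$, $\LL_{\Hs_1}\subseteq\LL_{\Hs_2}$ and $\EE_{\Hs_1}\subseteq\EE_{\Hs_2}$, and to test these against the explicit shapes that $\Hs_1$ and $\Hs_2$ must have. Since $\Hs_1,\Hs_2\precdot_{s}\G$, each of them is one of the seven types (i)--(vii) of Proposition~\ref{THM:neighbor-submodels}, and in every case its three components are obtained from those of $\G$ by deleting a single element from one or two of them. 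I would first record, for each type, exactly which of the components $E_{\G}$, $\LL_{\G}$, $\EE_{\G}$ is modified; the whole argument then reduces to a finite case analysis driven by this bookkeeping.

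The natural first split is on whether $\Hs_2$ alters the edge--colour component $\EE$. Among the seven types only (ii), (iii) and (iv) do so, each deleting a single edge $(i_2,j_2)\in\EE_{\G}$. If $\Hs_2$ is of one of these types, the inclusion $\EE_{\Hs_1}\subseteq\EE_{\Hs_2}=\EE_{\G}\setminus\{(i_2,j_2)\}$ forces $(i_2,j_2)\notin\EE_{\Hs_1}$, and since $\Hs_1$ deletes at most one element from $\EE_{\G}$ this forces $\Hs_1$ to be of type (ii), (iii) or (iv) acting on the very same edge $(i,j):=(i_2,j_2)$. The three candidate graphs then share identical $\LL$-- and $\EE$--components and differ only in their edge sets $E_{\G}$, $E_{\G}\setminus\{(i,j)\}$ and $E_{\G}\setminus\{\tau(i,j)\}$, so $\preceq_{t}$ collapses to inclusion of edge sets. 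Comparing these three sets, and using $(i,j)\neq\tau(i,j)$, pins down precisely the relations with $\Hs_2$ of type (ii) and $\Hs_1$ of type (iii) or (iv), while the two graphs of types (iii) and (iv) remain incomparable. That each such relation is a cover will follow from Theorem~\ref{THM:LatticePropertiesofOrderTau}: any $\F\in\P$ with $\Hs_1\preceq_{t}\F\preceq_{t}\Hs_2$ has each of its three components sandwiched between those of $\Hs_1$ and $\Hs_2$, which forces $\F$ to equal $\Hs_1$ or $\Hs_2$, leaving no intermediate graph and hence $\Hs_1\precdot_{t}\Hs_2$.

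The complementary and more delicate branch is when $\Hs_2$ leaves $\EE$ unchanged, i.e. $\Hs_2$ is of type (i), (v), (vi) or (vii), so that $\EE_{\Hs_2}=\EE_{\G}$ and the $\EE$--inclusion becomes vacuous. Here I would show that $\Hs_1\preceq_{t}\Hs_2$ can only hold trivially, namely $\Hs_1=\Hs_2$. The vertex component is disposed of quickly: type (i) is the only one touching $\LL$, so if $\Hs_2$ is of type (i) then $\LL_{\Hs_1}\subseteq\LL_{\Hs_2}=\LL_{\G}\setminus\{i_2\}$ forces $\Hs_1$ to delete the same vertex, whence $\Hs_1=\Hs_2$. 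Otherwise $\Hs_2$ is of type (v), (vi) or (vii), so $E_{\Hs_2}\subsetneq E_{\G}$ and the only binding constraint is the edge--set inclusion $E_{\Hs_1}\subseteq E_{\Hs_2}$. This is where the main obstacle lies: one must rule out every spurious edge--set inclusion by invoking the defining side conditions of these types -- a type-(v) deletion removes an edge whose twin is \emph{absent} from $E_{\G}$, a type-(vi) deletion removes an edge $(i,\tau(i))\in F_{T}$ linking a vertex to its twin, and a type-(vii) deletion removes a pair $\{(i,j),\tau(i,j)\}$ of \emph{distinct} twin edges both present in $E_{\G}$. Using the partition of $F_{V}$ into $F_{L}$, $F_{R}$, $F_{T}$ and the action of $\tau$ on it, each candidate combination of types for $(\Hs_1,\Hs_2)$ either contradicts one of these conditions (for example, a type-(v) edge can never coincide with a twin edge that a type-(iii), (iv) or (vii) move keeps present, and a $T$-edge can never equal a non-$T$-edge) or forces $\Hs_1=\Hs_2$. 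Carrying this case checking to completion shows that no nontrivial relation arises in this branch.

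Assembling the two branches yields the stated equivalence for $\Hs_1\preceq_{t}\Hs_2$. The incomparability assertion then follows by symmetry: applying the same characterization with the roles of $\Hs_1$ and $\Hs_2$ exchanged shows that, outside the single special configuration and its mirror image, neither $\Hs_1\preceq_{t}\Hs_2$ nor $\Hs_2\preceq_{t}\Hs_1$ holds, so the two graphs are $\preceq_{t}$--incomparable. I expect the edge--set bookkeeping of the second branch to be the only genuinely laborious part, the rest being forced by the componentwise form of the twin order.
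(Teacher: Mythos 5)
Your proposal is correct and takes essentially the same route as the paper: the paper's proof is a one-line appeal to applying Definition~\ref{DEF:twin-order} componentwise to all pairs of neighbouring submodels listed in Proposition~\ref{THM:neighbor-submodels}, and your argument is precisely that exhaustive comparison, organised by which of the components $E$, $\LL$, $\EE$ each type modifies and with the side conditions of types (v)--(vii) used to kill the spurious edge-set inclusions. The only content you add beyond the paper is the explicit sandwiching argument for the covering relation $\Hs_{1}\precdot_{t}\Hs_{2}$, which is a welcome detail the paper leaves implicit.
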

\begin{proof}
	This result follows immediately from the application of the twin order in Definition~\ref{DEF:twin-order} to the comparison of all the pairs of neighbouring submodels of $\G$ as given in  Proposition~\ref{THM:neighbor-submodels}.
\end{proof}

In the graphical representation of the model inclusion lattice provided by the Hasse diagram every model is depicted above its submodels, and Proposition~\ref{THM:relation-modelinclusion-twinorder} shows that this is also true in the Hasse diagram of the twin lattice. Furthermore, in the Hasse diagram of the model inclusion lattice every model is linked by an edge to each of its neighbouring submodels, which are pairwise $\preceq_{s}$--incomparable. Theorem~\ref{THM:neighbor-submodels} and Corollary~\ref{THM:two.layer} show that
the twin order $\preceq_{t}$ can be use to partition the set of neighbouring submodels of any model $\P(\G)$ into two subsets, that we refer to as the \emph{upper} and \emph{lower-layer} neighbouring submodels of $\P(\G)$. The upper-layer neighbouring submodels are obtained by applying  to $\G$ the points (i), (ii), (v), (vi) and (vii) of Proposition~\ref{THM:neighbor-submodels}, whereas the lower-layer submodels come from points (iii) and (iv). The names upper and lower layer are suggested by the fact that every graph in the lower layer is $\preceq_{t}$--smaller than some graph in the upper layer, and therefore  in the Hasse diagram of the twin lattice the lower-layer graphs are represented below the upper-layer graphs. More specifically, within each of the two layers the models are both $\preceq_{s}$-- and  $\preceq_{t}$--incomparable, whereas every model in the lower layer is smaller, in the twin order sense, than some model in the upper layer. Thus, the twin order induces a partial ordering within the set of neighbouring submodels of any given model $\P(\G)$, which will prove useful both in the implementation of the coherence principle and in the exploration of the model space.
\begin{example}[Frets' Heads continued]
Figure~\ref{FIG:twin-lattice} gives the portion of the Hasse diagram of the twin lattice that refines the model inclusion lattice of Figure~\ref{FIG:model-inclusion-lattice} into the two-layer structure, so that every model in the lower layer is $\preceq_{t}$--smaller than some models in the upper, and within each layer,  models are pairwise both $\preceq_{s}$-- and $\preceq_{t}$--incomparable.
\begin{figure}[tbp]
	\begin{center}
		\includegraphics[scale=.8]{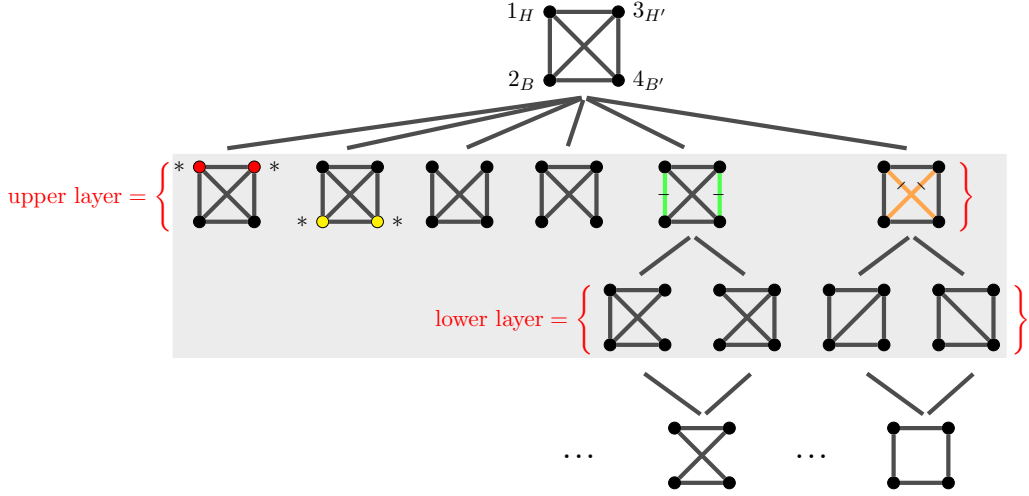}
	\end{center}
	\caption{Part of the Hasse diagram of the twin lattice for the Frets' heads example. The graph representing the saturated model is depicted on the top. The graphs in the highlighted area are those representing the model inclusion neighbouring submodels of the saturated model, which are organized in a two-layer structure, to be compared with Figure~\ref{FIG:model-inclusion-lattice}.}\label{FIG:twin-lattice}
\end{figure}

\end{example}

The stepwise procedure that we consider in the forthcoming sections involves the iterative computation of suitable subsets of neighbouring submodels obtained by meet operation within the model inclusion lattice, that is $\wedge_{s}$. The rest of this section provides some results that allow us to compute such neighbouring submodels by using the more efficient meet operation within the twin lattice, $\wedge_{t}$, which amounts to straightforward set intersection operations.

Firstly, we show that the $\wedge_{s}$ and the $\wedge_{t}$ meet operations are equivalent when applied to two neighbouring submodels  which are $\preceq_{t}$--incomparable.
\begin{corollary}\label{THM:equivalent.meet}
	Let $\Hs_{1}$, $\Hs_{2}$ and $\G$ be three \PDCG{s} such that $\Hs_{1},\Hs_{2}\precdot_{s} \G$ or, equivalently, such that $\P(\Hs_{1})$ and $\P(\Hs_{2})$ are two neighbouring submodels of $\P(\G)$. Then, if $\Hs_{1}$ and $\Hs_{2}$ are $\preceq_{t}$--incomparable it holds that
	$\Hs_{1}\wedge_{s}\Hs_{2}=\Hs_{1}\wedge_{t}\Hs_{2}$. On the other hand, if $\Hs_{1}\preceq_{t}\Hs_{2}$ then $\Hs_{1}\wedge_{s}\Hs_{2}=\Hs_{1}\wedge_{t}\Hs_{1}^{\prime}$ where, $\Hs_{1}$ and $\Hs_{1}^{\prime}$ are obtained one from (iii)  of  Proposition~\ref{THM:neighbor-submodels} and the other from (iv) of the same proposition.
\end{corollary}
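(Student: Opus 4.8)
The plan is to treat both statements as instances of a single principle: the $\preceq_{t}$--meet, which is computed by component-wise set intersection (Theorem~\ref{THM:LatticePropertiesofOrderTau}), produces in each case a graph that turns out to be the $\preceq_{s}$--infimum as well. The backbone is the refinement $\Hs\preceq_{s}\G\Rightarrow\Hs\preceq_{t}\G$ of Proposition~\ref{THM:relation-modelinclusion-twinorder}\ref{rel.s.t.i}, together with antisymmetry of $\preceq_{t}$. Concretely, whenever a candidate graph $K$ satisfies $K\preceq_{s}\Hs_{1}$ and $K\preceq_{s}\Hs_{2}$, it is a common $\preceq_{s}$--lower bound, hence $K\preceq_{s}\Hs_{1}\wedge_{s}\Hs_{2}$; conversely $\Hs_{1}\wedge_{s}\Hs_{2}\preceq_{s}\Hs_{1},\Hs_{2}$ gives, through the refinement, $\Hs_{1}\wedge_{s}\Hs_{2}\preceq_{t}\Hs_{1},\Hs_{2}$. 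So the whole argument reduces to (a) identifying the right candidate $K$ and (b) verifying $K\preceq_{s}\Hs_{1}$ and $K\preceq_{s}\Hs_{2}$ via Gehrmann's conditions \ref{S1}--\ref{S3}, after which the two chains of inequalities and antisymmetry close the equality.

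For the incomparable case I would take $K=\Hs_{1}\wedge_{t}\Hs_{2}=(V,\,E_{\Hs_{1}}\cap E_{\Hs_{2}},\,\LL_{\Hs_{1}}\cap\LL_{\Hs_{2}},\,\EE_{\Hs_{1}}\cap\EE_{\Hs_{2}})$ and check that $K\preceq_{s}\Hs_{1}$, the case of $\Hs_{2}$ being symmetric. Conditions \ref{S1} and \ref{S2} are immediate from the inclusions $E_{K}\subseteq E_{\Hs_{1}}$ and $\LL_{K}\subseteq\LL_{\Hs_{1}}$: a vertex twin--pairing class of $K$ is either a class of $\Hs_{1}$ or the union of two atomic classes of $\Hs_{1}$. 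The only way \ref{S3} can fail is the configuration in which some edge is present but \emph{lone} in $K$ (its twin absent) while in $\Hs_{1}$ that same edge sits inside a twin--pairing class $\{(i,j),\tau(i,j)\}$, so that the atomic $K$--class cannot be written as a union of $\Hs_{1}$--classes. I would show this configuration forces $\Hs_{1}$ to arise from (ii) and $\Hs_{2}$ from (iii) or (iv) of Proposition~\ref{THM:neighbor-submodels} on one and the same edge of $\EE_{\G}$ --- precisely the unique $\preceq_{t}$--comparable pattern isolated in Corollary~\ref{THM:two.layer}. Under the incomparability hypothesis this pattern is excluded, so \ref{S3} holds and $K=\Hs_{1}\wedge_{s}\Hs_{2}$.

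For the comparable case $\Hs_{1}\preceq_{t}\Hs_{2}$, Corollary~\ref{THM:two.layer} pins the forms down: $\Hs_{2}=(V,E_{\G},\LL_{\G},\EE_{\G}\setminus\{(i,j)\})$ comes from (ii), and $\Hs_{1}$ from (iii) or (iv) on the same $(i,j)\in\EE_{\G}$, with $\Hs_{1}^{\prime}$ the companion obtained from the other of (iii), (iv). A direct intersection then gives $\Hs_{1}\wedge_{t}\Hs_{1}^{\prime}=K:=(V,\,E_{\G}\setminus\{(i,j),\tau(i,j)\},\,\LL_{\G},\,\EE_{\G}\setminus\{(i,j)\})$. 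I would verify $K\preceq_{s}\Hs_{1}$ and $K\preceq_{s}\Hs_{2}$ as above, noting that in $\Hs_{2}$ the pair $\{(i,j),\tau(i,j)\}$ is a single twin--pairing edge class that $K$ deletes wholesale, so \ref{S3} is satisfied. For maximality, given any $\F\preceq_{s}\Hs_{1},\Hs_{2}$, condition \ref{S1} against $\Hs_{1}$ yields $(i,j)\notin E_{\F}$, and then \ref{S3} against $\Hs_{2}$ --- where $(i,j)$ and $\tau(i,j)$ are glued into one colour class --- forces $\tau(i,j)\notin E_{\F}$ as well; the remaining colour classes of $\F$ are then unions of $K$--classes, so $\F\preceq_{s}K$. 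Hence $K=\Hs_{1}\wedge_{s}\Hs_{2}=\Hs_{1}\wedge_{t}\Hs_{1}^{\prime}$, the asserted identity. Observe that here $\Hs_{1}\wedge_{t}\Hs_{2}=\Hs_{1}$, which explains why the naive twin meet must be replaced by $\Hs_{1}\wedge_{t}\Hs_{1}^{\prime}$.

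The main obstacle is the single delicate point in condition \ref{S3}: an atomic or lone edge class in the candidate meet must never be a proper part of a twin--pairing class in $\Hs_{1}$ or $\Hs_{2}$. Everything hinges on showing that this mismatch can occur only through the (ii)-versus-(iii)/(iv) pattern, which is exactly the dividing line between the comparable and incomparable regimes supplied by Corollary~\ref{THM:two.layer}. The bookkeeping is otherwise routine, the only nuisance being the orientation convention that records each twin pair by its representative in $F_{L}$, which must be tracked carefully when translating statements about membership in $\EE$ into statements about $\tau(i,j)$.
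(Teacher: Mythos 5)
Your argument is correct, and it reaches the conclusion by a genuinely different route than the paper. The paper's own proof of the first claim is an exhaustive verification: it applies the twin-meet formula of Theorem~\ref{THM:LatticePropertiesofOrderTau} to every pair of $\preceq_{t}$--incomparable types listed in Proposition~\ref{THM:neighbor-submodels} and checks in each case that the resulting graph encodes exactly the union of the two sets of constraints. You replace that enumeration with a sandwich argument: once $K=\Hs_{1}\wedge_{t}\Hs_{2}$ is shown to satisfy $K\preceq_{s}\Hs_{1},\Hs_{2}$, the refinement $\preceq_{s}\Rightarrow\preceq_{t}$ of Proposition~\ref{THM:relation-modelinclusion-twinorder} together with antisymmetry of $\preceq_{t}$ forces $K=\Hs_{1}\wedge_{s}\Hs_{2}$, so the whole burden is concentrated on condition \ref{S3}, whose unique failure mode you correctly identify as the (ii)--versus--(iii)/(iv) pattern excluded by incomparability. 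The one sub-step you leave implicit --- that a twin-pairing edge class of $\Hs_{1}$ containing an edge that is lone in $K$ cannot already have been a class of $\G$, since then no operation of Proposition~\ref{THM:neighbor-submodels} could delete exactly one of its two edges to produce $\Hs_{2}$ --- does go through and should be recorded. For the comparable case the paper reduces to the first part via the identity $\Hs_{1}\wedge_{s}\Hs_{2}=\Hs_{1}\wedge_{s}\Hs_{1}^{\prime}$ and then invokes the incomparable case, whereas you compute the candidate meet explicitly and verify the infimum property from scratch; both work, and your observation that $\Hs_{1}\wedge_{t}\Hs_{2}=\Hs_{1}$ here is exactly the reason the statement must pass to $\Hs_{1}^{\prime}$. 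Your version buys a proof that does not rest on a case-by-case table and isolates the single delicate point; the paper's buys brevity by leaning on the classification already established in Corollary~\ref{THM:two.layer}.
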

\begin{proof}
	See Section~\ref{SUP.SEC:proof-equivalent.meet} of the Appendix.
\end{proof}
Finally, we provide a result that makes it possible to use the meet $\wedge_{t}$ in place of $\wedge_{s}$ at every step of the greedy search procedure described in Section~\ref{SEC:stepwise.procedure}.
\begin{corollary}\label{THM:iterative.application.meet}
	Let $\G$ be a \PDCG\ and, furthermore, let $\mathcal{A}$ be a set of \PDCG{s} such that (i) the graphs in $\mathcal{A}$ are pairwise $\preceq_{t}$--incomparable and, (ii) for every $\F\in \mathcal{A}$ it holds that $\F\precdot_{s} \G$, so that $\P(\F)$ is a neighbouring submodel of $\P(\G)$. Then, for every $\Hs\in\mathcal{A}$ the set
	\begin{align*}
		\mathcal{B}=\{\F \wedge_{s} \Hs \mid \F \in  \mathcal{A} \setminus \{\Hs\}\}=\{\F \wedge_{t} \Hs \mid \F \in  \mathcal{A} \setminus \{\Hs\}\}
	\end{align*}
	has the same properties as $\mathcal{A}$, that is, (i) the graphs in $\mathcal{B}$ are pairwise $\preceq_{t}$--incomparable and, (ii) for every $\F\in \mathcal{B}$ it holds that $\F\precdot_{s} \Hs$, so that $\P(\F)$ is a neighbouring submodel of $\P(\Hs)$.
\end{corollary}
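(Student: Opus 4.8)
The plan is to reconcile the two descriptions of $\mathcal{B}$ first, and then verify properties (i) and (ii) in turn, working throughout with the quadruplet representation $(V,E,\LL,\EE)$ so that meets are componentwise intersections (Theorem~\ref{THM:LatticePropertiesofOrderTau}). Since the graphs in $\mathcal{A}$ are pairwise $\preceq_{t}$--incomparable, every $\F\in\mathcal{A}\setminus\{\Hs\}$ is $\preceq_{t}$--incomparable with $\Hs$; as both are neighbouring submodels of $\G$, Corollary~\ref{THM:equivalent.meet} gives $\F\wedge_{s}\Hs=\F\wedge_{t}\Hs$, so the two displayed formulas for $\mathcal{B}$ agree term by term and I may reason solely with $\wedge_{t}$.

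For property (ii) I would show that each $\F\wedge_{t}\Hs$ is itself one of the neighbouring submodels of $\Hs$ produced by Proposition~\ref{THM:neighbor-submodels}. The key observation is that, because $\F\precdot_{s}\G$, the graph $\F$ differs from $\G$ by the single elementary modification recorded in one of the points (i)--(vii); writing $E_{\F}=E_{\G}\setminus\delta^{E}_{\F}$, $\LL_{\F}=\LL_{\G}\setminus\delta^{\LL}_{\F}$, $\EE_{\F}=\EE_{\G}\setminus\delta^{\EE}_{\F}$, and likewise for $\Hs$, componentwise intersection yields $\F\wedge_{t}\Hs=\bigl(V,\,E_{\G}\setminus(\delta^{E}_{\F}\cup\delta^{E}_{\Hs}),\,\LL_{\G}\setminus(\delta^{\LL}_{\F}\cup\delta^{\LL}_{\Hs}),\,\EE_{\G}\setminus(\delta^{\EE}_{\F}\cup\delta^{\EE}_{\Hs})\bigr)$, so passing from $\Hs$ to $\F\wedge_{t}\Hs$ removes exactly the part of $\F$'s modification not already carried out in $\Hs$. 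I would then run through the seven operation types for $\F$ and check, using $E_{\Hs}\subseteq E_{\G}$, $\LL_{\Hs}\subseteq\LL_{\G}$ and $\EE_{\Hs}\subseteq\EE_{\G}$, that the preconditions of the corresponding operation still hold in $\Hs$ and that it is not short-circuited by $\Hs$'s own modification, recording in each case which of the points (i)--(vii) realizes $\F\wedge_{t}\Hs$ as a neighbour of $\Hs$. The only genuine interaction arises when $\F$ and $\Hs$ act on a common edge $(i,j)\in\EE_{\G}$; the $\preceq_{t}$--incomparability of $\F$ and $\Hs$, together with Corollary~\ref{THM:two.layer}, rules out the configuration ``one from (ii), the other from (iii)/(iv)'' and leaves only the pair ``(iii) with (iv) for the same $(i,j)$'', for which $\F\wedge_{t}\Hs$ drops a single edge whose twin is absent and is therefore the neighbour of $\Hs$ of type (v). In every case $\F\wedge_{t}\Hs\precdot_{s}\Hs$, which is property (ii).

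For property (i), take $\F_{1},\F_{2}\in\mathcal{A}\setminus\{\Hs\}$ with $\F_{1}\neq\F_{2}$ and suppose, towards a contradiction, that $\F_{1}\wedge_{t}\Hs$ and $\F_{2}\wedge_{t}\Hs$ are $\preceq_{t}$--comparable. By property (ii) both are neighbouring submodels of $\Hs$, so Corollary~\ref{THM:two.layer}, now applied with parent $\Hs$, forces the comparable configuration: for a common edge $(a,b)\in\EE_{\Hs}$ one of them, say $\F_{2}\wedge_{t}\Hs$, arises from point (ii) and the other, $\F_{1}\wedge_{t}\Hs$, from point (iii) or (iv). Reading off the operation-type bookkeeping of property (ii) --- a type-(ii) neighbour of $\Hs$ can only come from a type-(ii) modification of $\G$, and a type-(iii)/(iv) neighbour only from the corresponding one --- this means $\F_{2}$ is obtained from $\G$ by (ii) on $(a,b)$ and $\F_{1}$ by (iii) or (iv) on the same $(a,b)\in\EE_{\G}$. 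Corollary~\ref{THM:two.layer} applied with parent $\G$ then makes $\F_{1}$ and $\F_{2}$ $\preceq_{t}$--comparable, contradicting the assumed incomparability of the graphs in $\mathcal{A}$; hence the graphs in $\mathcal{B}$ are pairwise $\preceq_{t}$--incomparable.

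I expect the case analysis in property (ii) to be the main obstacle: for each of the seven operations one must confirm that its defining preconditions (for instance $\tau(i,j)\notin E_{\G}$ in (v), or $(i,j),\tau(i,j)\notin\EE_{\G}$ in (vii)) survive the passage from $\G$ to $\Hs$, and that no other operation on $\G$ could have deleted the edge being removed. The conceptual crux, however, is recognizing that $\preceq_{t}$--incomparability is precisely the hypothesis that discards the single awkward overlap of type (ii) against (iii)/(iv), after which every remaining combination reduces to a bona fide single covering step of $\Hs$.
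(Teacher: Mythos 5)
Your proposal is correct and follows essentially the same route as the paper's proof: it invokes Corollary~\ref{THM:equivalent.meet} to replace $\wedge_{s}$ by $\wedge_{t}$, establishes $\F\wedge_{t}\Hs\precdot_{s}\Hs$ by a case analysis over the operations of Proposition~\ref{THM:neighbor-submodels} in which the only obstruction (one graph of type (ii), the other of type (iii)/(iv) on a common edge) is excluded by $\preceq_{t}$--incomparability via Corollary~\ref{THM:two.layer}, and then derives pairwise incomparability of $\mathcal{B}$ by contradiction, applying Corollary~\ref{THM:two.layer} with parent $\Hs$ and tracing the forced forms of $\F_{1},\F_{2}$ back to a comparability in $\mathcal{A}$. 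The differences (your explicit ``delta'' bookkeeping and the identification of the (iii)+(iv) combination as a type-(v) neighbour, versus the paper's explicit solving of the intersection equations) are presentational only.
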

\begin{proof}
	See Section~\ref{SUP.SEC:proof-iterative.application.meet} of the Appendix.
\end{proof}

\section{Dimension of the Search Space and Implementation of the Principle of Coherence}\label{SEC:dimension.and.choerence}


One major challenge in  learning the structure of a coloured graphical model is the dimension of the search space that is extremely large even when the number of variables, $|V|=p$, is small. The dimension of the space $M(V)$ of \GGM{s} is well-known to be $|M(V)|=2^{{p \choose 2}}$.
The dimension of the search space $\M$ of \RCON\ models was computed in \citet[][eqn. (7)]{gehrmann2011lattices}, where it is shown, for example, that for $p=5$ there are $1\,024$ undirected \GGM{s} but $35\,285\,640$ \RCON\ models. The family of \PDRCON\ models $\P$ forms a proper subset of \RCON\ models, $\P\subset\M$, however the dimension of  $\P$ is still much larger than that of \GGM{s}. It is shown in Section~\ref{SUP.SEC:search.space.dimension} of the Appendix that the dimension of $\P(V)$ can be computed as
\begin{eqnarray*}
	|\P(V)|=2^{p/2}\sum_{i=0}^{p(p-2)/4} {\frac{p(p-2)}{4}\choose i}\; 2^{{p \choose 2}-2i},
\end{eqnarray*}
so that, for example, in the application of Section~\ref{SEC:real.data} where $p=36$, the number of  \PDRCON\ models is $10^{35}$ times larger than that of \GGM{s}, formally, $|\P|> |M|\times 10^{35}$.

One way to increase the efficiency of greedy search procedures is by applying the, so-called, \emph{principle of coherence} that is used as a strategy for pruning the search space. The latter was  introduced in \citet{gabriel1969simultaneous} where it is stated that: ``in any procedure involving multiple comparisons no hypothesis should be accepted if any hypothesis implied by it is rejected". We remark that, for convenience, we say ``accepted” instead of the more correct ``non-rejected”. Consider some goodness-of-fit test for testing models at a given level $\alpha$ so that for every model in a given class we can apply the test and determine whether the model is rejected or accepted. In graphical modelling, the principle of coherence is typically implemented by requiring that we should not accept a model while rejecting a larger model; see, among others,
\citet[Chapter 6]{edwards2000introduction}, \citet{madigan1994model},  \citet[][p.~256]{cowell1999probabilistic}. Thus, if a model is rejected then also all its submodels should be rejected, and the model inclusion lattice allows a straightforward implementation of this pruning procedure because it is sufficient to remove from the Hasse diagram all the  paths descending from the rejected model. However, we note that for \PDRCON\ models this implementation of the coherence principle is not sufficient to avoid incoherent steps. This is due to the fact that, unlike the lattice of \GGM{s}, the model inclusion lattice of \PDRCON\ models is non-distributive. To clarify this issue, consider  the five graphs $\G_{i}$, $i=1,\ldots,5$ in Figure~\ref{FIG:sublattice-modelinclusion}, which depicts the relevant portion of the Hasse diagram of the model inclusion lattice. In lattice theory, the sublattice of Figure~\ref{FIG:sublattice-modelinclusion} is said to have the diamond structure, and its presence in a Hasse diagram causes the lattice to be non-distributive \citep[see][Theorem~4.10]{davey2002introduction}. The diamond structure, that is thus present in the Hasse diagram of the \PDRCON\ model inclusion lattice, but not in that of the lattice of \GGM{s}, requires additional attention in the implementation of the principle of coherence of \citet{gabriel1969simultaneous}, because it may give rise to a type of incoherence that involves $\preceq_{s}$--incomparable neighbouring submodels. To see this, consider Figure~\ref{FIG:sublattice-modelinclusion} and  assume that $\P(\G_{2})$ is rejected. In this case, under the above interpretation of the coherence principle also $\P(\G_{5})\subseteq \P(\G_{2})$ should be considered rejected. On the other hand, nothing can be said with respect to $\P(\G_{3})$ and $\P(\G_{4})$ because  $\P(\G_{2})$, $\P(\G_{3})$ and $\P(\G_{4})$  are neighbouring submodels of $\P(\G_{1})$ and are thus model-inclusion incomparable. It is therefore possible that $\P(\G_{3})$ and $\P(\G_{4})$ are both accepted whereas  $\P(\G_{2})$, and thus  $\P(\G_{5})$, are rejected. However, this is clearly against the coherence principle because  $\G_{5}=\G_{3}\wedge_{s}\G_{4}$. More generally, it holds that
\begin{align*}
	\G_{5}=\G_{2}\wedge_{s}\G_{3}=\G_{3}\wedge_{s}\G_{4}=\G_{2}\wedge_{s}\G_{4},
\end{align*}
and therefore it would be incoherent to reject any of the models  $\P(\G_{2})$, $\P(\G_{3})$ or $\P(\G_{4})$ while accepting the remaining two. Consider now the Hasse diagram of the twin lattice for the same models, given in  Figure~\ref{FIG:sublattice-twinorder}. The three neighbouring submodels  of $\P(\G_{1})$ are now partitioned into a two-layer structure that can be exploited to correctly implement the coherence principle. Following the structure of the twin lattice, model $\P(\G_{2})$ is tested first and, if it is accepted, then models  $\P(\G_{3})$ and $\P(\G_{4})$ should be either both accepted or rejected and this hypothesis can be verified directly from $\P(\G_{5})$. On the other hand, if $\P(\G_{2})$ is rejected we can consider  $\P(\G_{3})$ and $\P(\G_{4})$ by recalling that it would be incoherent to accept both.

\begin{figure}[t]
	\centering
	\begin{subfigure}{0.4\textwidth}
		\centering
		\includegraphics[scale=0.65]{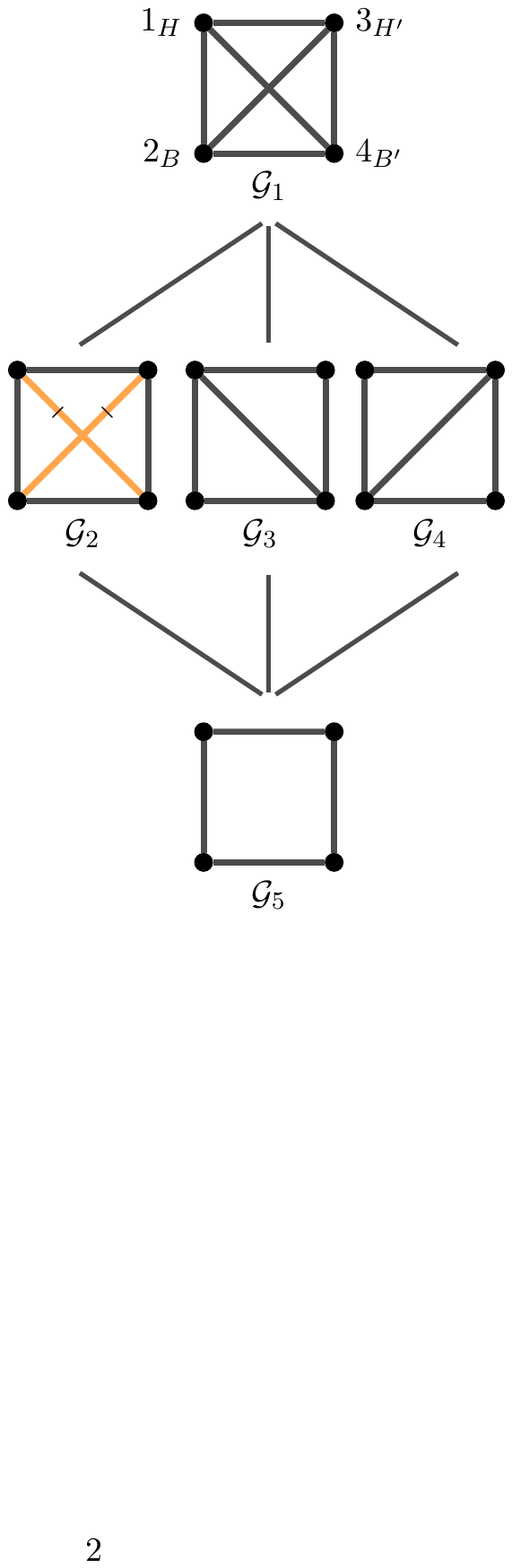}
		\caption{}\label{FIG:sublattice-modelinclusion}
	\end{subfigure}
	\begin{subfigure}{0.4\textwidth}
		\centering
		\includegraphics[scale=0.65]{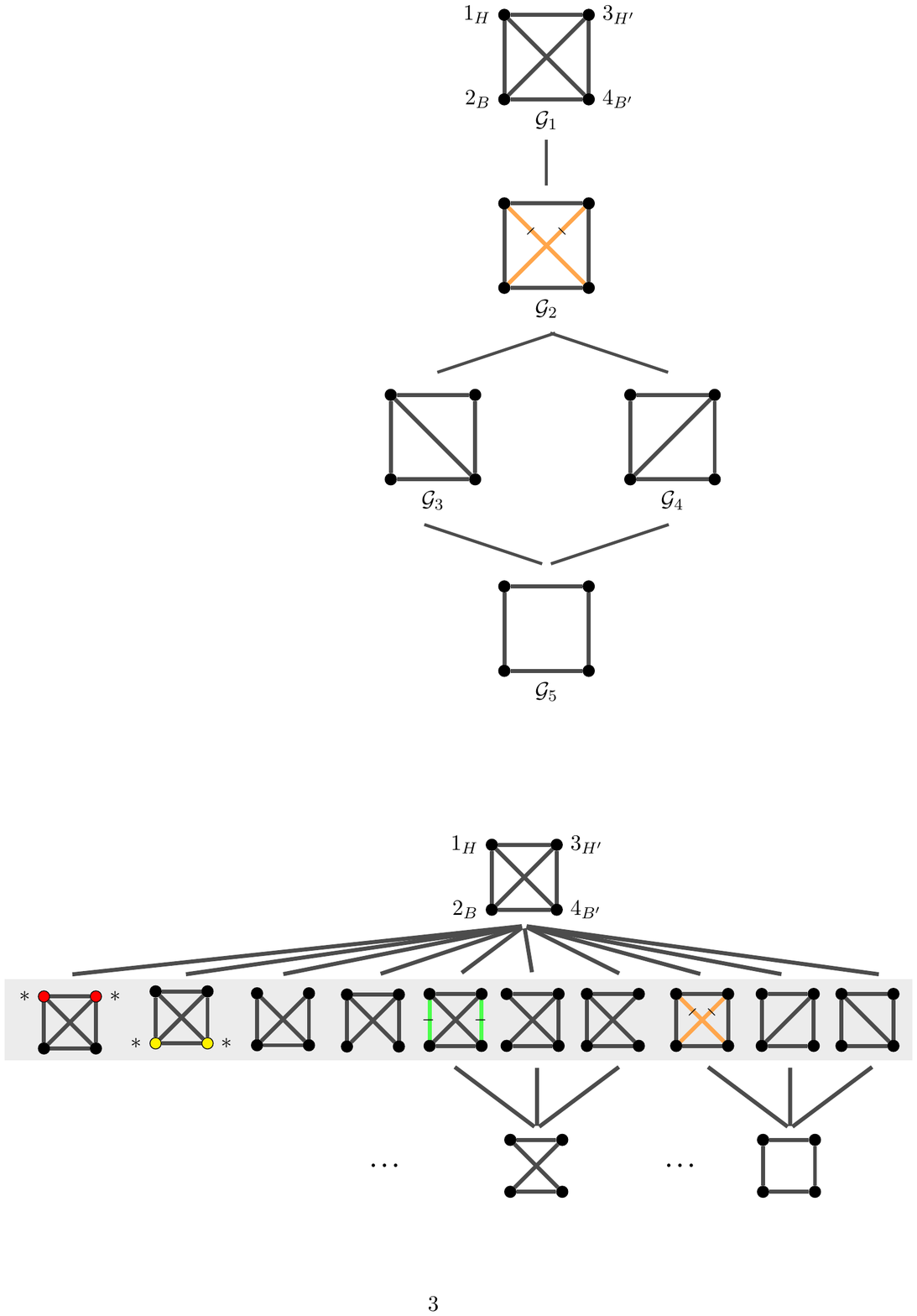}
		\caption{}\label{FIG:sublattice-twinorder}
	\end{subfigure}
	\caption{Frets' Heads example: Comparison of the Hasse diagrams of two sublattices induced by the same five \PDCG{s} under (a) the model inclusion order, and (b) the twin order.}
\end{figure}

It is not difficult to use the result of Proposition~\ref{THM:neighbor-submodels} to generalise this idea to any set of neighbouring submodels.
It follows that, in the implementation of the principle of coherence by using the twin lattice, the general rule is the same as in the model inclusion lattice: if a model is rejected then all the models in the paths descending from it should be excluded from the set of candidate models. In addition, there are specific rules for upper-layer models. If an upper-layer model is rejected then all the lower-layer models directly linked to it cannot be excluded from the candidate models. On the other hand, if it is accepted, one can exclude all the lower-layer models directly linked to it, and this can greatly reduce the number of candidate models, as shown by the simulations in Section~\ref{SEC:simulations}.

Finally, it is worth remarking that the organization of the neighbouring submodels into the two-layer structure, provided by the twin lattice, is useful in the implementation of the coherence principle, but it has also the advantage of eliminating the diamond structure, thereby making the lattice distributive.


\section{A Coherent Greedy Search Procedure} \label{SEC:stepwise.procedure}


We introduce a stepwise backward elimination procedure that exploits the twin lattice both for the computation of the meet operation and the implementation of the coherence principle.

Every step of the procedure starts from a model, defined by a graph labelled as $\G^{\best}$, and computes a set of candidate neighbouring submodels so as to obtain a set $\mathcal{A}$ of accepted models, according to a pre-established criterion. Specifically, we label as accepted the models with $p$-value of the likelihood ratio test against the saturated model, computed on the asymptotic chi-squared distribution, larger than $0.05$. Then a new $\P(\G^{\best})$ is selected from $\mathcal{A}$. In this implementation, we choose as best model in $\mathcal{A}$ the model with the largest $p$-value. We set the saturated model as a starting point and then the procedure is iterated until either $\mathcal{A}$ is empty or a maximum number of iterations is reached. The pseudocode of the procedure is given in Algorithms~\ref{ALG:main-short} and \ref{ALG:next.step-short} where, in order to make the code more readable, we have kept the technical level low. The pseudocode with all the technical details can be found in Algorithms~\ref{ALG:main-detailed} and \ref{ALG:next.step-detailed}   of the Appendix.

\begin{algorithm}
	\caption{Coherent stepwise backward elimination procedure}\label{ALG:main-short}
	\begin{algorithmic}[1]
		\State $\G^{\best}\gets$ complete graph with all colour classes atomic
		\State $\mathcal{A}\gets \emptyset$
		\State $K\gets$ maximum number of steps
		\State $k\gets 1$
		\ForAll{upper-layer neighbouring submodels $\Hs$ of $\G^{\best}$}
		\lIf{$\Hs$ is accepted} $\mathcal{A}\gets \mathcal{A}\cup \{\Hs\}$
		\EndFor
		\ForAll{lower-layer neighbouring submodels $\Hs$ of $\G^{\best}$ which are $\preceq_{t}$--incomparable with\\
			\phantom{\textbf{ for all}}every graph in $\mathcal{A}$}
		\lIf{$\Hs$ is accepted}
		$\mathcal{A}\gets \mathcal{A}\cup \{\Hs\}$
		\EndFor
		\While{$\mathcal A\neq\emptyset$ and $k<K$}
		\State $\mathcal{A},\;\G^{\best}\gets\;$\Call{Update}{$\mathcal{A},\; \G^{\best}$}  \Comment{see Algorithm~\ref{ALG:next.step-short}}
		\State $k\gets k+1$
		\EndWhile
		\lIf{$\mathcal A\neq\emptyset$ and $k=K$}
		$\G^{\best}\gets$ best model in $\mathcal{A}$
		\State \textbf{return} $\G^{\best}$
	\end{algorithmic}
\end{algorithm}

A key issue concerns the identification, at every step, of the candidate models, which are all the coherent neighbouring submodels of $\G^{\best}$. At the first step, efficiency is achieved by considering the upper layer first, and then the lower layer, so as to apply the principle of coherence as described in Section~\ref{SEC:dimension.and.choerence} and implemented in lines 5--11  of Algorithm~\ref{ALG:main-short}. This can significantly reduce the dimension of the initial set of candidate models and, in turn, the dimension of the sets of candidate models of all the subsequent steps. In addition, the implementation of the coherence principle from the first step implies that the models in $\mathcal{A}$ are pairwise $\preceq_{t}$--incompatible and therefore, by Corollary~\ref{THM:equivalent.meet}, the next set of candidate models can be computed by using the more efficient $\wedge_{t}$ meet operation. Furthermore, Corollary~\ref{THM:iterative.application.meet} guarantees that the same can be done at every step of the procedure; see lines 14--17 of Algorithm~\ref{ALG:next.step-short}.

\begin{algorithm}
	\caption{\textsc{Update()} procedure called by Algorithm~\ref{ALG:main-short}} \label{ALG:next.step-short}
	\begin{algorithmic}[1]
		\Procedure{Update}{$\mathcal{A},\; \G^{\best}$}
		\State $\G^{\old}\gets \G^{\best}$
		\State $\G^{\best}\gets$ best model in $\mathcal{A}$
		\If{$\G^{\best}$ is obtained from $\G^{\old}$ by removing exactly one edge $(i,j)$ with $i\neq\tau(j)$}
		\State $\Hs\gets$  graph obtained by removing from $\G^{\old}$ the edge $\tau(i,j)$
		\State $\mathcal{A}\gets \mathcal{A}\setminus \{\Hs\}$
		\ElsIf{$\G^{\best}$ is obtained from $\G^{\old}$ by merging two atomic classes to obtain the\\
			\phantom{\textbf{proelse if}}twin-pairing class $\{(i,j), \tau(i,j)\}$}
		\State $\Hs\gets$ graph obtained by removing from $\G^{\old}$ the edges $(i,j)$ and $\tau(i,j)$
		\State $\mathcal{A}\gets \mathcal{A}\cup \{\Hs\}$
		\EndIf
		\State $\mathcal{A^{\old}}\gets \mathcal{A}\setminus \{\G^{\best}\}$
		\State $\mathcal{A}\gets\emptyset$
		\ForAll{$\F\in\mathcal{A}^{\old}$}
		\State $\Hs\gets \F\wedge_{t}\G^{\best}$
		\lIf{$\Hs$ is accepted} $\mathcal{A}\gets \mathcal{A}\cup \{\Hs\}$
		\EndFor
		\State \textbf{return} $\mathcal{A}$ and $\G^{\best}$
		\EndProcedure
	\end{algorithmic}
\end{algorithm}
%


\section{Applications}\label{SEC:applications}
The greedy search procedure of the previous section has been implemented in the program language \textsf{R}, and here we describe its application to both synthetic and real-world data, including an empirical comparison with the penalized likelihood method of \citet{ranciati2023application}.

\subsection{Simulations}\label{SEC:simulations}
In this section, we analyse the behaviour of the search procedure of Section~\ref{SEC:stepwise.procedure} on synthetic data. More specifically, we
compare it with the stepwise backward elimination procedure given in \citet{roverato2022modelinclusion} that does not exploit the twin lattice for the computation of the set of candidate models, and where the principle of coherence is naively implemented by only considering the submodel relationship.

We considered two scenarios that differ for the sparsity degree, computed as $|E|/|F_{V}|$. For each of the two scenarios we generated four \PDCG{s} with $p=8,12,16,20$ and density degrees approximatively equal to $0.18$ for scenario $A$ and to $0.35$ for scenario $B$. Next, for every \PDCG\ $\G$ we randomly generated a concentration matrix $\Theta$ such that the normal distribution $N(0, \Theta^{-1})\in \P(\G)$ and,  finally, we randomly selected $20$ samples of size $100$ from such normal distribution; see Section~\ref{SUP.SEC:simulations} of the Appendix for details.

For each of the 160 synthetic data sets generated, we ran the two greedy search procedures, which always terminated before the maximum number of iterations was reached. The performance of the two procedures is summarized in Table~\ref{TAB:measure-simmulation} of the Appendix.
The point of main interest is the comparison in terms of efficiency, that we quantify with respect to the average execution time and the average number of fitted models. These can be found in the last two columns of Table~\ref{TAB:measure-simmulation} and, furthermore, the growth rates of these measurements are displayed in Figures~\ref{FIG:computation-time} and \ref{FIG:numberOfModels}.
We can see that the procedure on the twin lattice is considerably more efficient. Specifically, the procedure on the twin lattice was more than five times faster, requiring between  $16\%$ and $20\%$ of the time required by  the procedure on the model inclusion lattice. Furthermore, the proper implementation of the coherence principle allowed us to fit a much smaller number of models, ranging between $37\%$ and $54\%$ of the models fitted under the naive implementation of  principle of coherence. It is also interesting to notice that the latter proportions appear to decrease as $p$ increases. Table~\ref{TAB:measure-simmulation}  also gives the average values over the 20 samples of the positive predicted value, the true-positive rate and the true-negative rate, both for the edges and for the colour classes of the selected graphs. These have satisfying values and there are not relevant differences between the two procedures, thereby showing that the increase in efficiency is not achieved at the cost of a lower level of performance of the selected model.
\begin{figure}[t]
	\centering
	\begin{subfigure}{0.47\textwidth}
		\centering
		\includegraphics[scale=0.4]{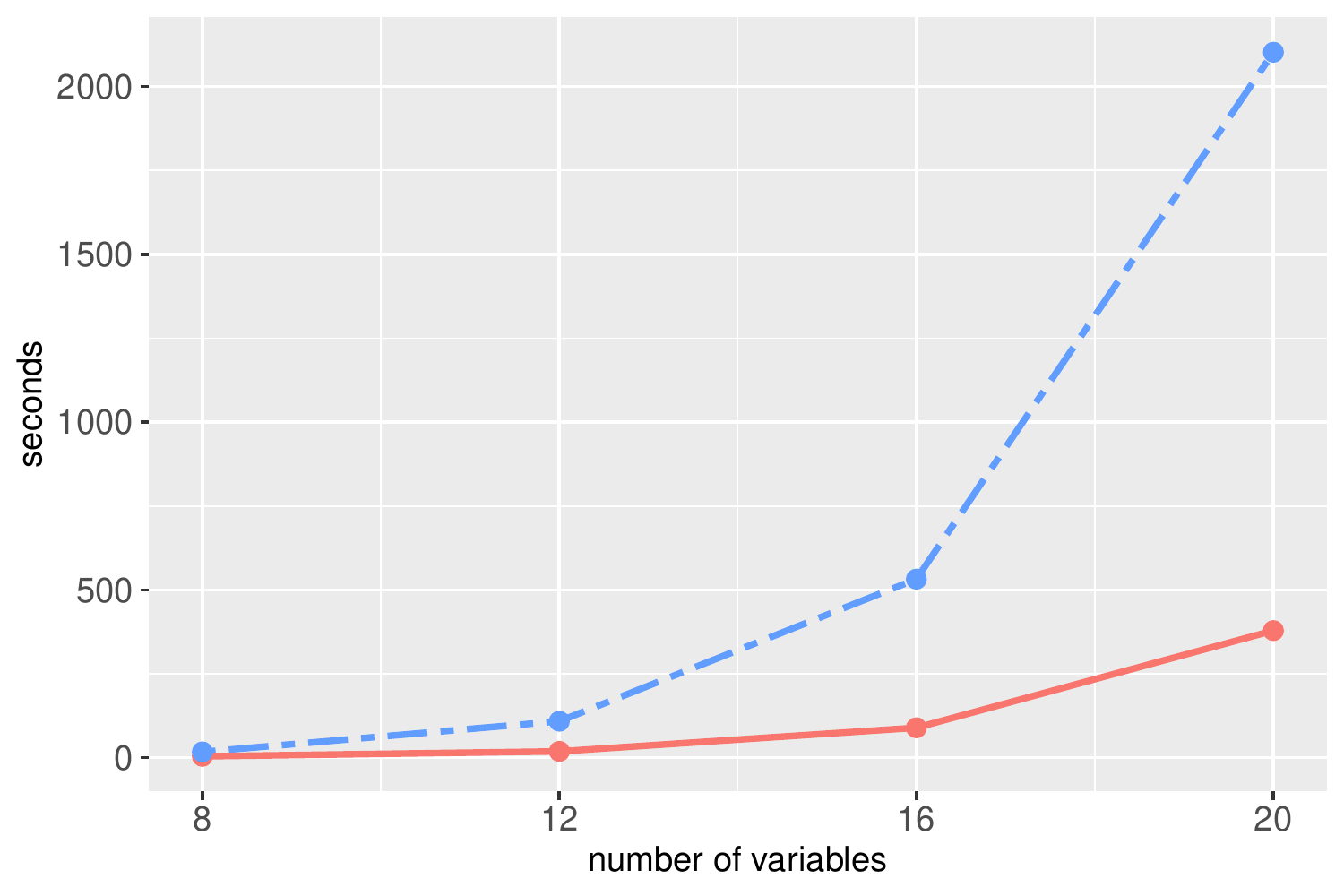}
	\end{subfigure}
	\begin{subfigure}{0.45\textwidth}
		\centering
		\includegraphics[scale=0.4]{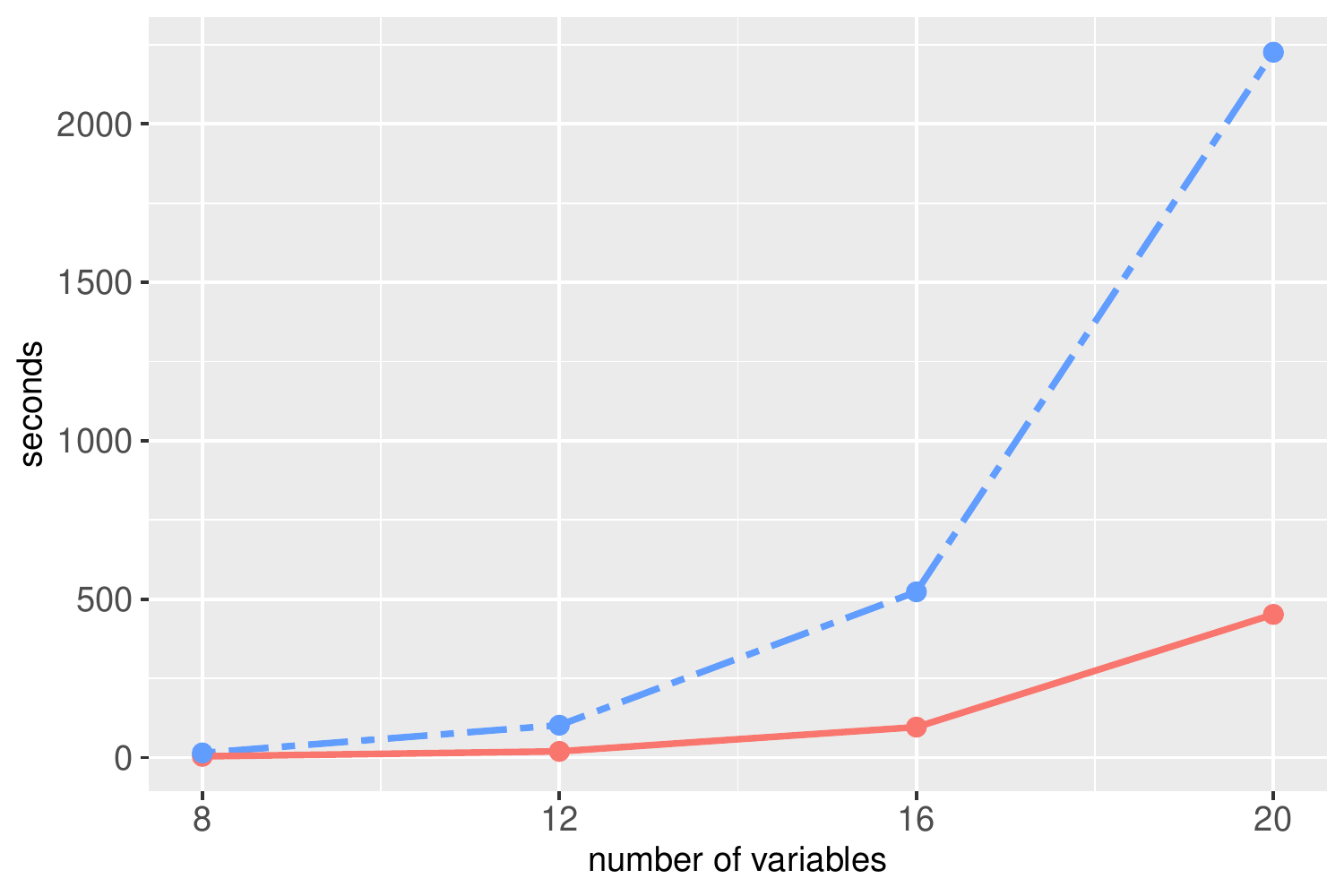}
	\end{subfigure}
	\caption{Average computational time in the simulations for the procedures based on the twin lattice (full red line) and on the model inclusion lattice (dashed blue line), for scenario A (left panel) and B (right panel).}\label{FIG:computation-time}
\end{figure}

%
\begin{figure}[b]
	\centering
	\begin{subfigure}{0.47\textwidth}
		\centering
		\includegraphics[scale=0.4]{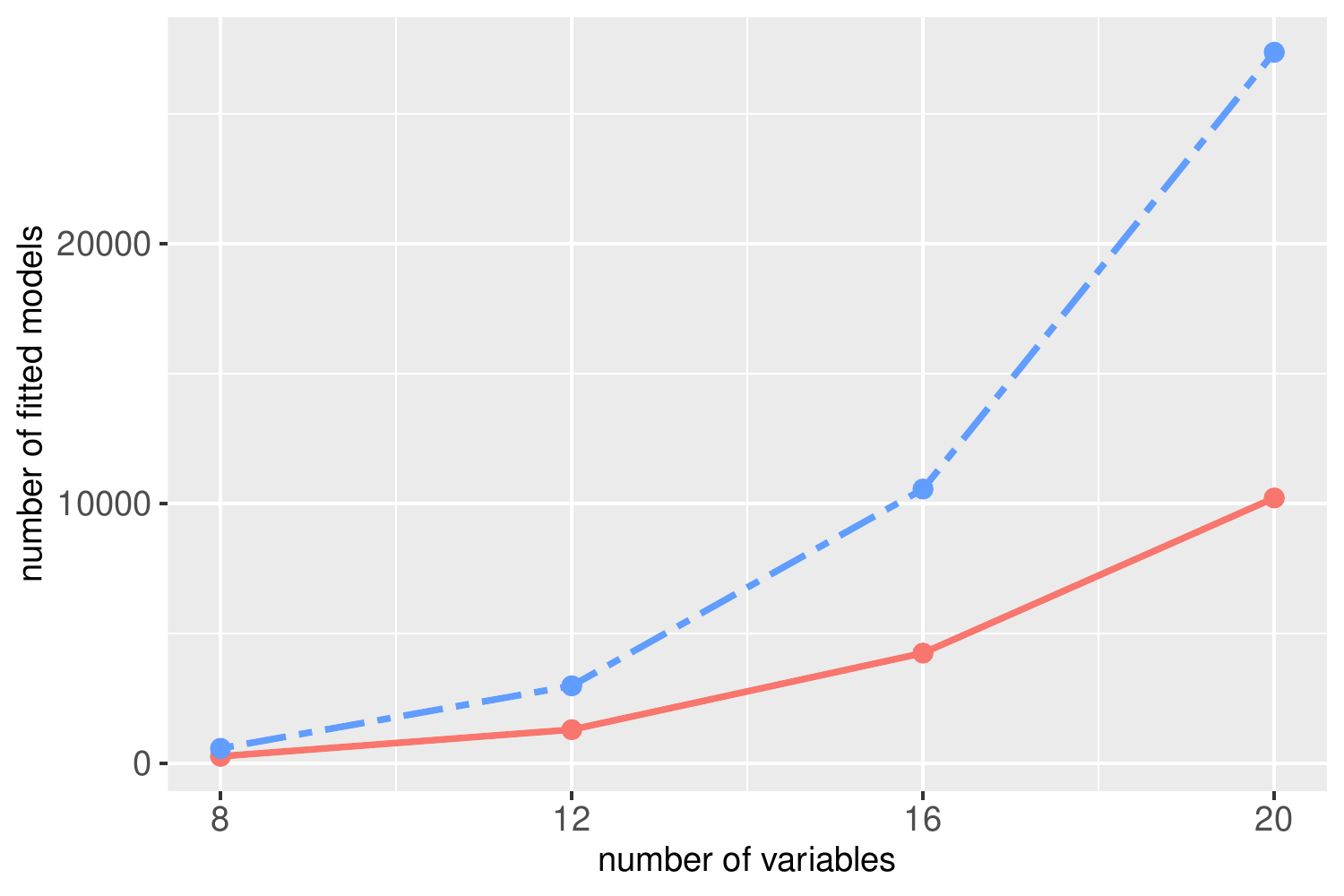}
	\end{subfigure}
	\begin{subfigure}{0.45\textwidth}
		\centering
		\includegraphics[scale=0.4]{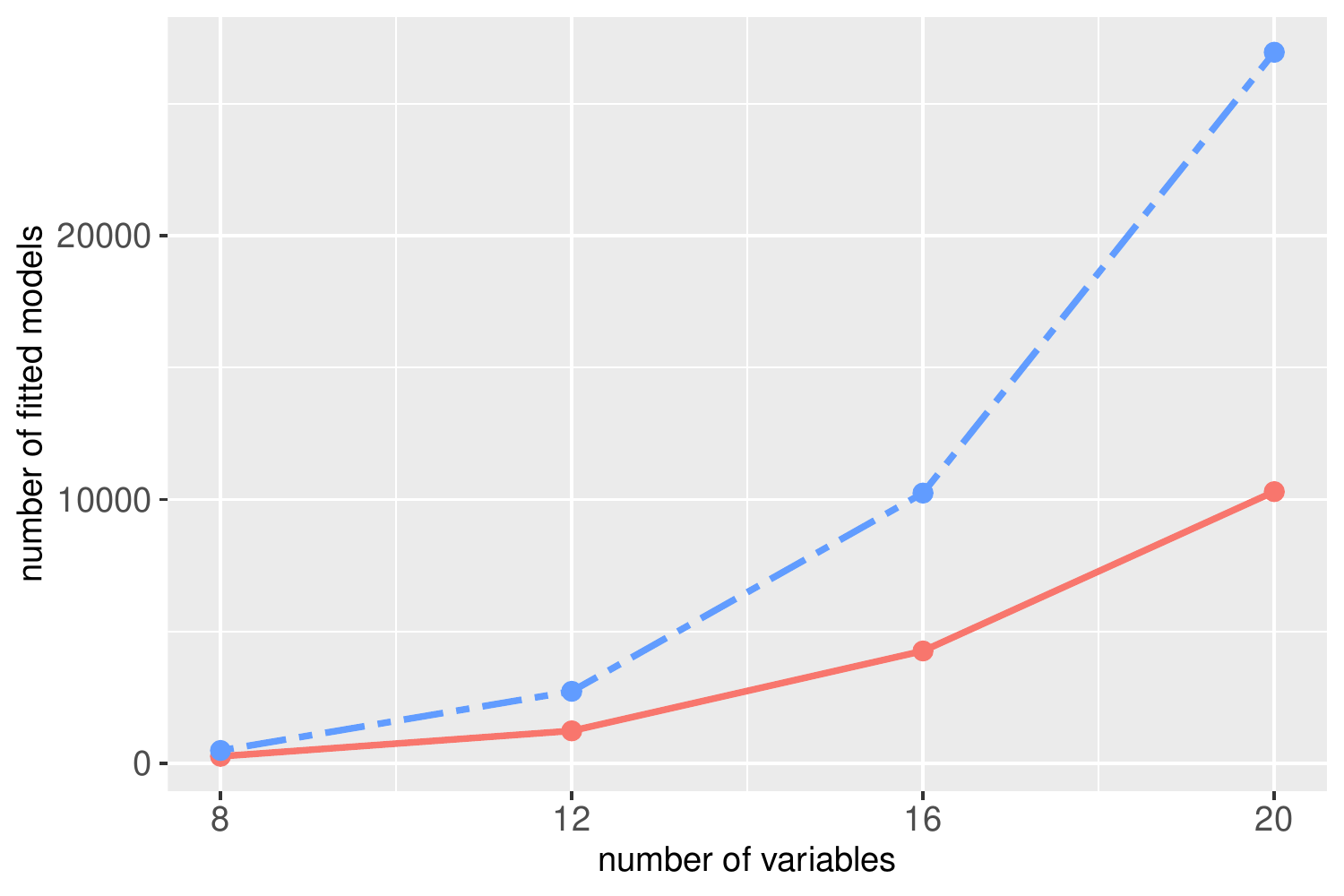}
	\end{subfigure}
	\caption{Averaged numbers of fitted models in the simulations for the procedures based on the twin lattice (full red line) and on the model inclusion lattice (dashed blue line), for scenario A (left panel) and B (right panel).}\label{FIG:numberOfModels}
\end{figure}
%


\subsection{Brain Networks from fMRI Data}\label{SEC:real.data}


Functional MRI is a non invasive technique for collecting data on brain activity that measures the increase in the oxygenation level at some specific brain region, as long as an increase in blood flow occurs, due to some brain activity. The construction of a network from fMRI data requires first the identification of a set of functional vertices, such as spatial regions of interest (ROIs), and then the analysis of connectivity patterns across ROIs. The data set we use for this application comes from a pilot study of the Enhanced Nathan Kline Institute-Rockland Sample project that are time series  recorded on $70$ ROIs at $404$ equally spaced time points. A detailed description of the project, scopes, and technical aspects can be found at \url{http://fcon_1000.projects.nitrc.org/indi/enhanced/}. Following \cite{ranciati2021fused} we apply our method to the residuals estimated from the vector autoregression models, carried out to remove the temporal dependence. We consider two subjects indexed by $14$ and $15$, who have the same psychological traits with no neuropsychiatric diseases and right-handedness. The main difference is that subject $14$ is  $19$ years old whereas the subject $15$ is $57$ years old.
The human brain has a natural symmetric structure so that for every spatial ROI on the left hemisphere there is an homologous ROI on the right hemisphere. Accordingly, we identify the left and right groups with the left and the right hemispheres, respectively, and consider  $36$ cortical brain regions, that are $22$ regions in the frontal lobe and $14$ regions in the anterior temporal lobe. We have therefore $|V|=36$ with $|L|=|R|=18$.

This section describes the analysis for the subject $15$ whereas the analysis of subject $14$ is given in Section~\ref{SEC:Supp-fMRI} of the Appendix. We applied the procedure of Section~\ref{SEC:stepwise.procedure} and selected the model defined by the \PDCG\ given in Figure~\ref{FIG:36regionSub15}. The graph of the selected model, denoted by $\G$, has density equal to $52.2\%$, the number of edges is 329 and there are $5$ vertex and $85$ edge twin-pairing colour classes, respectively. It provides an adequate fit with a $p$-value$=0.064$, computed on the asymptotic chi-squared distribution on  275 degrees of freedom of the likelihood ratio test for the comparison with the saturated model.
Note that Figure~\ref{FIG:36regionSub15} splits the selected graph into 4 panels. The top-left panel gives the edges of $\G$ that belong to $E_{L}$ and form atomic colour classes, and similarly for the top-right panel that gives the atomic colour classes in $E_{R}$. Furthermore, the bottom-left and right panels give the edge twin-pairing colour classes between and across groups, respectively. We use the gray colour for both vertex and edge twin-pairing classes and, finally, we have omitted to represent the edges in $E_{T}$ because this set is almost complete, in the sense that $E_{T}=F_{T}\setminus\{(L4, R4)\}$. We deem that this representation can effectively illustrate the main features of the model, for example highlighting a high degree of symmetry in $\G$, given that $51.67\%$ of the edges of $\G$ belong to twin-pairing classes. The selected model has $275$ parameters and it is considerably more parsimonious than the saturated model that has  $666$ parameters. It is also interesting to notice that the \GGM\ defined by the undirected version of $\G$  has $90$ parameters more than the selected \PDRCON\ model.
\begin{figure}[tb]
	\centering
	\begin{subfigure}{0.4\textwidth}
		\centering
		\includegraphics[scale=0.45]{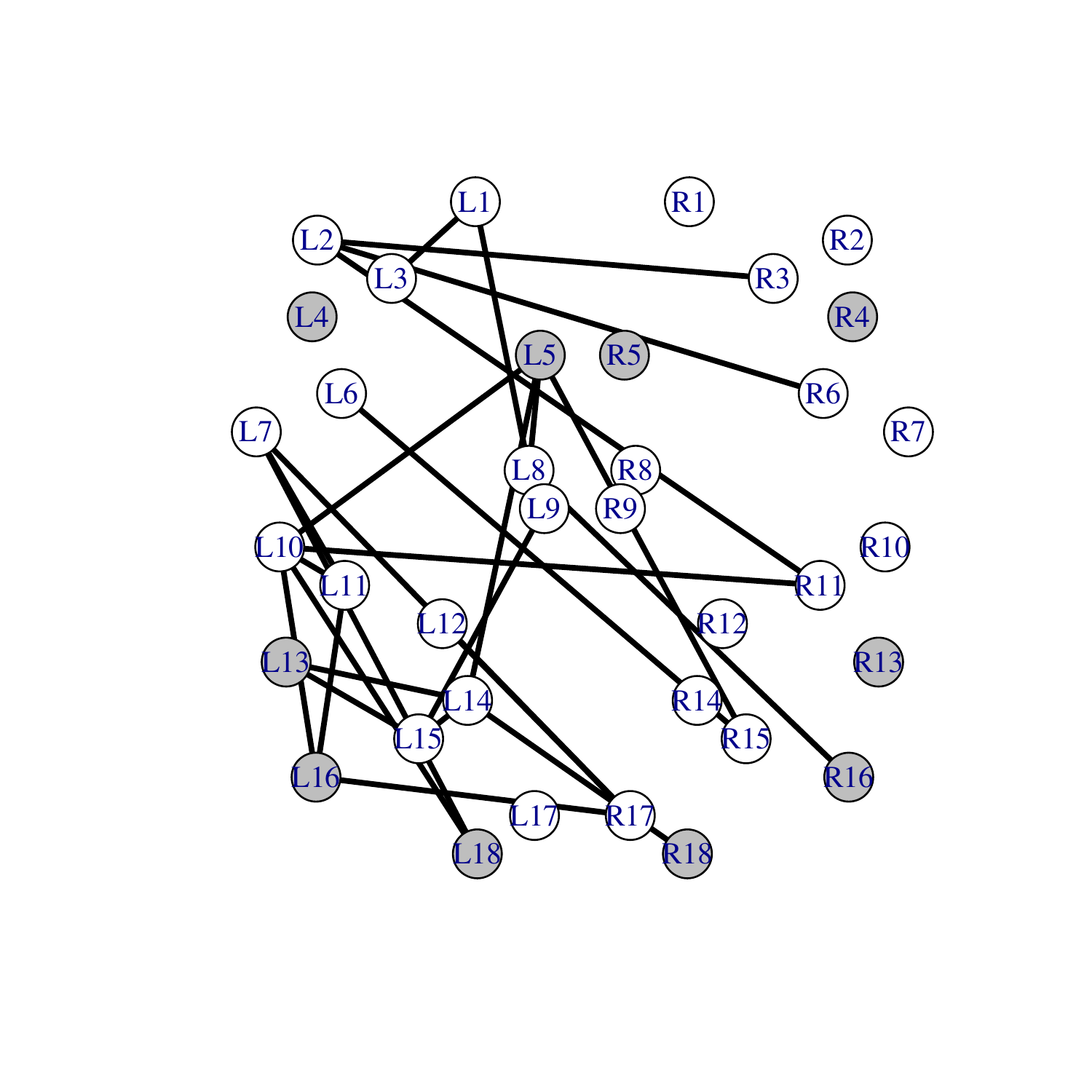}
		\caption{}
	\end{subfigure}
	\begin{subfigure}{0.4\textwidth}
		\centering
		\includegraphics[scale=0.45]{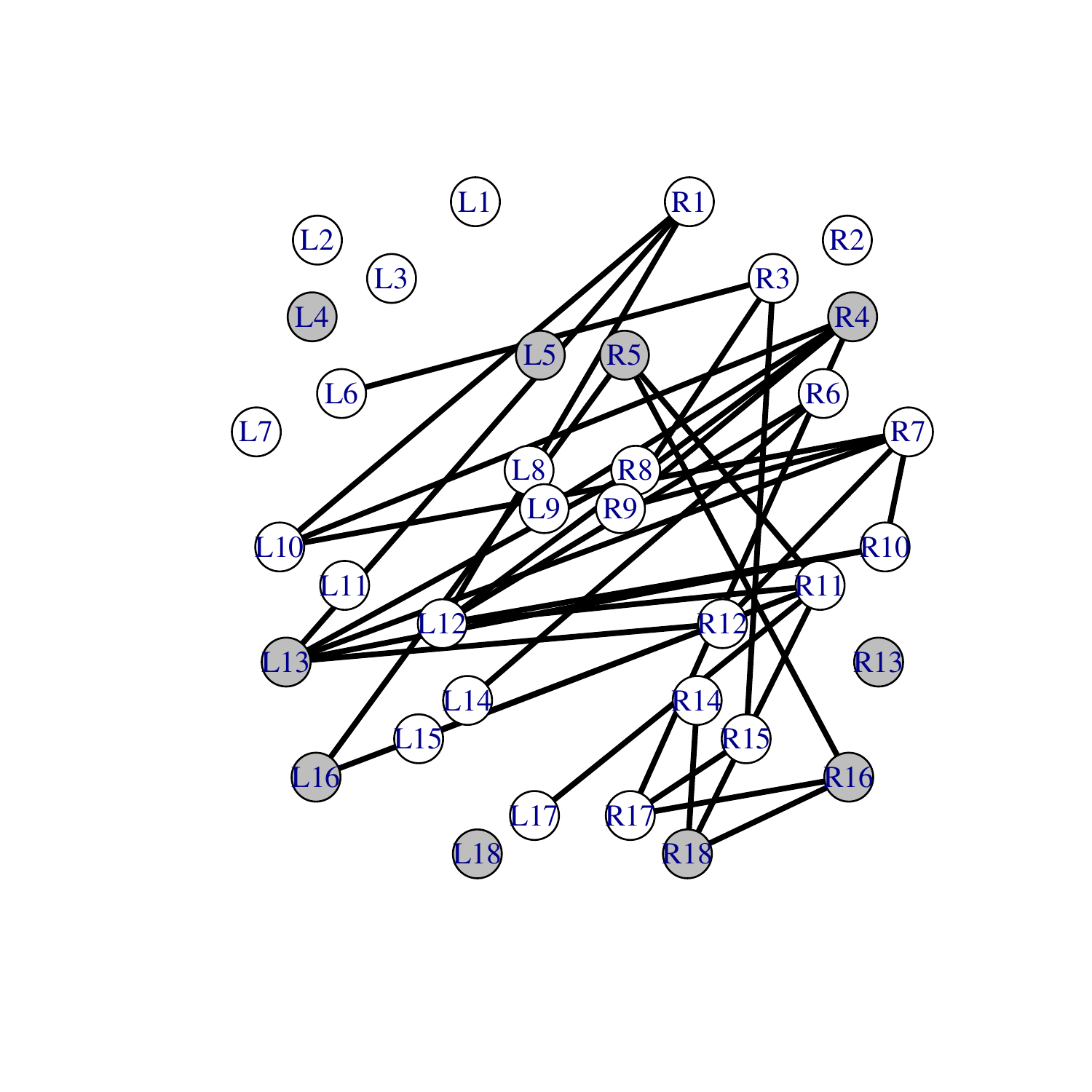}
		\caption{}
	\end{subfigure}

	\begin{subfigure}{0.4\textwidth}
		\centering
		\includegraphics[scale=0.45]{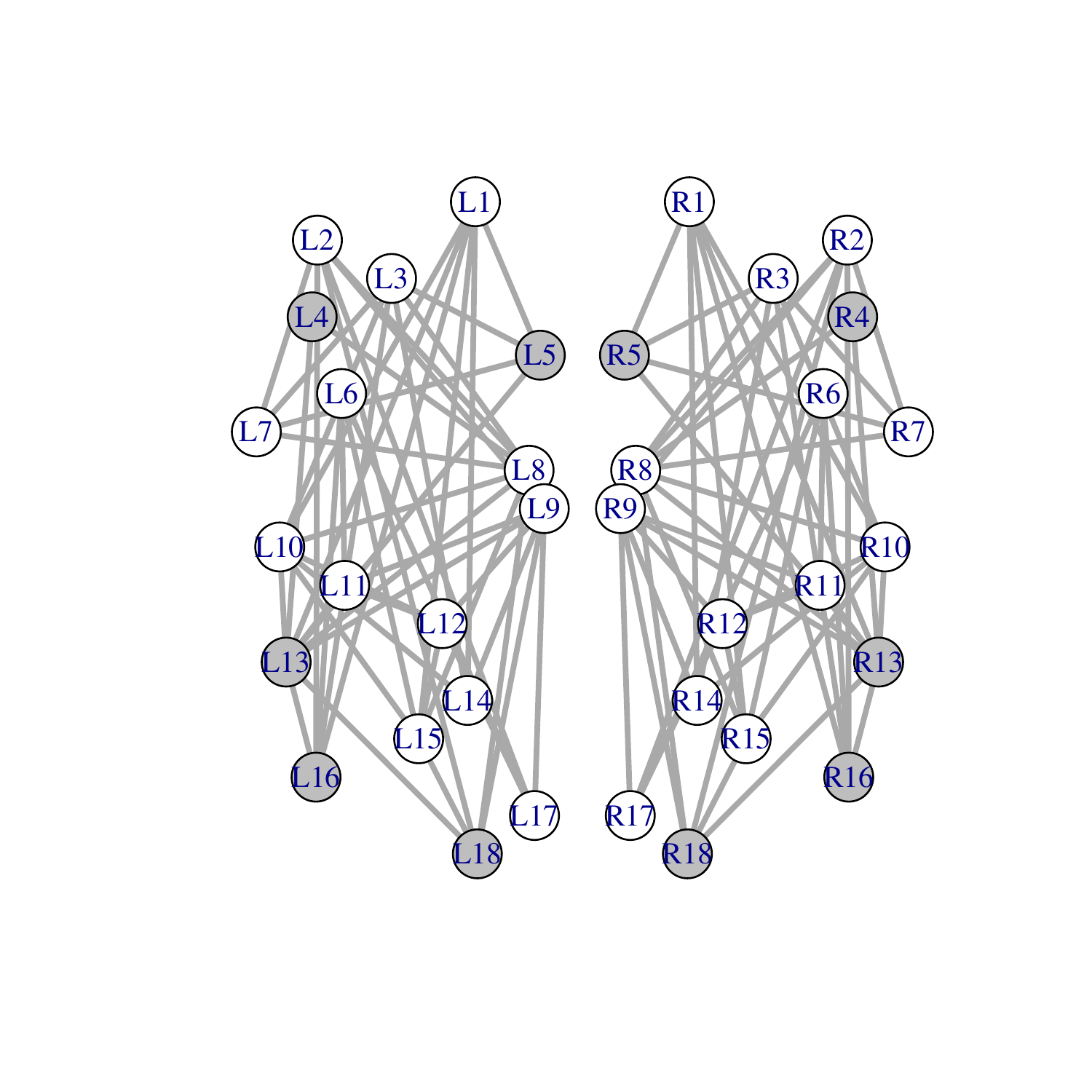}
		\caption{}
	\end{subfigure}
	\begin{subfigure}{0.4\textwidth}
		\centering
		\includegraphics[scale=0.45]{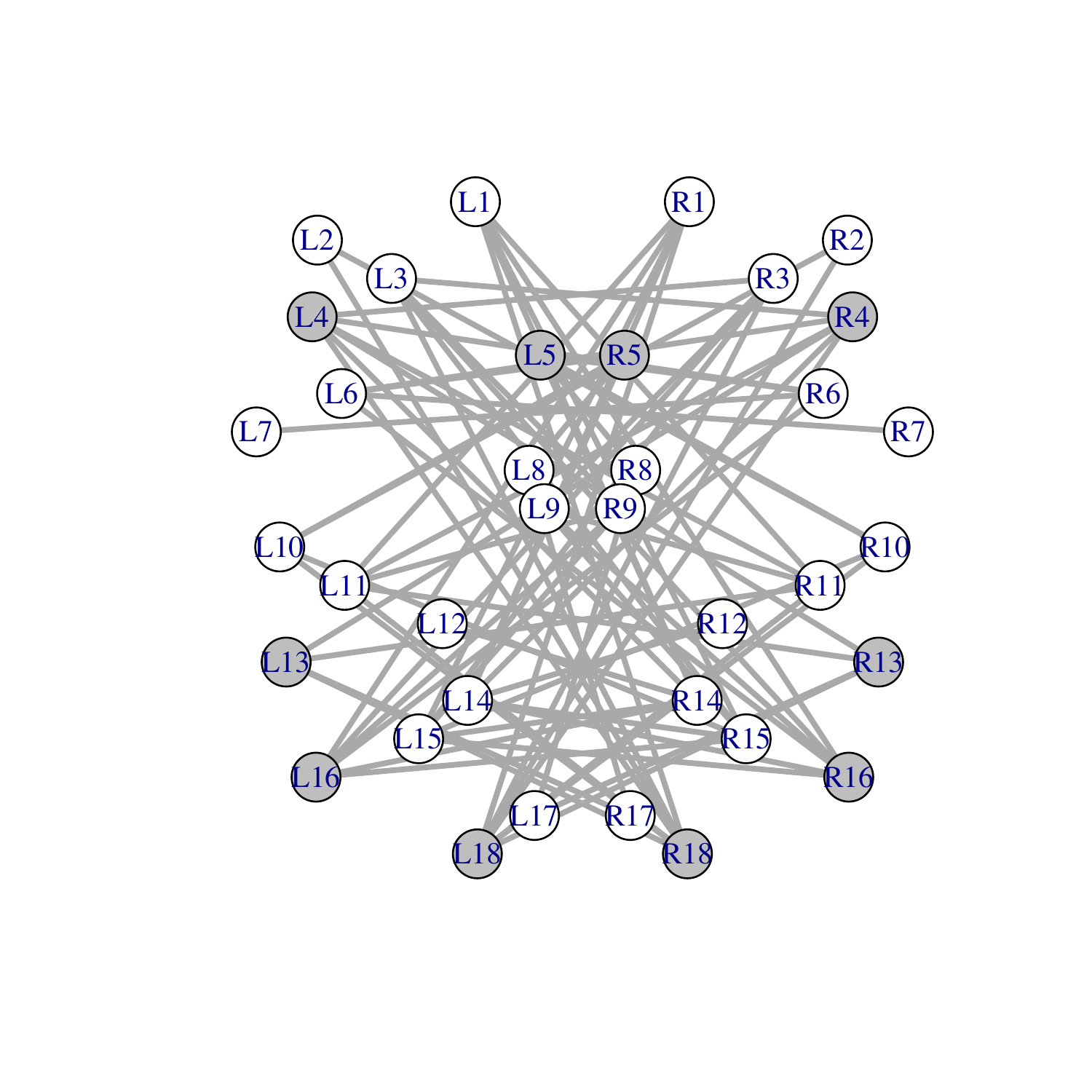}
		\caption{}
	\end{subfigure}
	\caption{Selected \PDCG\ for subject 15 with separate panels for: (a) edges in $E_{L}$ forming atomic classes; (b)  edges in $E_{R}$ forming atomic classes; (c) twin-pairing classes between groups; (d) twin-pairing classes across groups. Twin-pairing classes are depicted in gray, also for vertices, and the edges in $E_{T}$ are not depicted.} \label{FIG:36regionSub15}
\end{figure}

\subsection{Comparison with Penalized Likelihood Methods}\label{SEC:air.quality}
We now carry out a comparison of our greedy search procedure with the graphical lasso for paired data (pdglasso) method of \cite{ranciati2023application}, with special attention to the role played by the scale of the variables. The methods are applied to an Air Quality dataset containing average hourly measurements from a multi-sensor gas device for one year \citep{de2008field}. The device was located in a heavily polluted area of an Italian city, at road level. Additional details, including the dataset, can be found at \url{https://archive.ics.uci.edu/ml/datasets/Air+Quality}.

We consider 6 variables, relative to the 4 substances $CO$, $C6H6$, $NO2$, $O3$, and the 2 meteorological measurements $RH$ (relative humidity) and $AH$ (absolute humidity). In order to analyse the different behaviour during the night and day hours, for every day the measure at 1am is matched to that at 1pm, so that $|V|=12$ with $|L|=|R|=6$. The dataset obtained  after removing missing values is made up of 373 observations, and we model the residual structure of a lag-1 autoregression model \citep{epskamp2018gaussian}.

These data present a considerable difference in the scale of the variables, with a much larger scale for variables  $CO$, $C6H6$, $NO2$ and $O3$ compared to the scale of $RH$ and $AH$. As  explained in Section~\ref{SEC:air.quality}, in this case the application of the pdglasso is problematic and, in order to clarify this issue, we apply the method to both the unscaled and the standardized data. Models are chosen on the basis of the Bayesian information criterion (BIC).  Figure~\ref{airqualityB} gives the model obtained from the application of the pdglasso to the sample covariance matrix, and it is evident that the result strongly depends on the scale of the variables with the variables $RH$ and $AH$ independent of both each other and all the remaining variables. This seems unrealistic if one notes, for example, that the sample correlation of $RH$ and $AH$ is equal to $0.66$ at 1am, and to $0.61$ at 1pm. Indeed, the model selected from the sample correlation matrix, in Figure~\ref{airqualityC}, has a denser structure with the variables $RH$ and $AH$  connected both between them and with some of the remaining variables. This model has 46 parameters and, compared with the saturated model, its $p$-value is almost equal to 0. Furthermore, no twin-pairing class is identified, but it is not clear how to interpret this result if one recalls that  standardization may affect the structure of twin-pairing classes. Interestingly, the model in Figure~\ref{airqualityA}, selected by the greedy search procedure, has a fully symmetric structure, thereby suggesting that there is no difference between the association structure in the night and day hours. The selected graph looks denser than that in Figure~\ref{airqualityC}, nevertheless the model of Figure~\ref{airqualityA} is more parsimonious with 36 parameters, and it shows an adequate fit, with $p$-value equal to $0.39$.
\begin{figure}[t]
	\centering
	\begin{subfigure}{0.3\textwidth}
		\centering
		\includegraphics[scale=0.35]{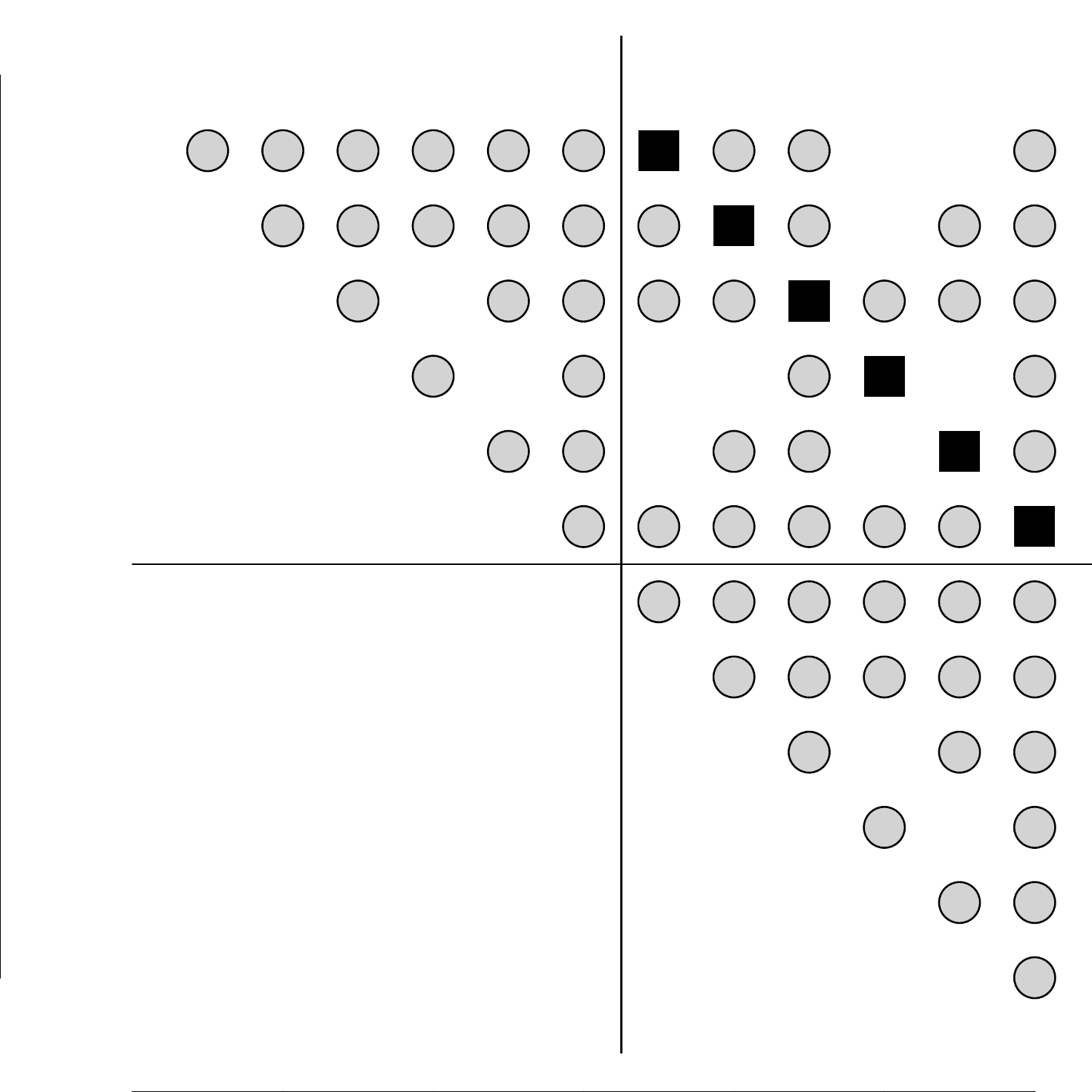}
        \caption{}\label{airqualityA}
	\end{subfigure}
	\begin{subfigure}{0.35\textwidth}
		\centering
		\includegraphics[scale=0.35]{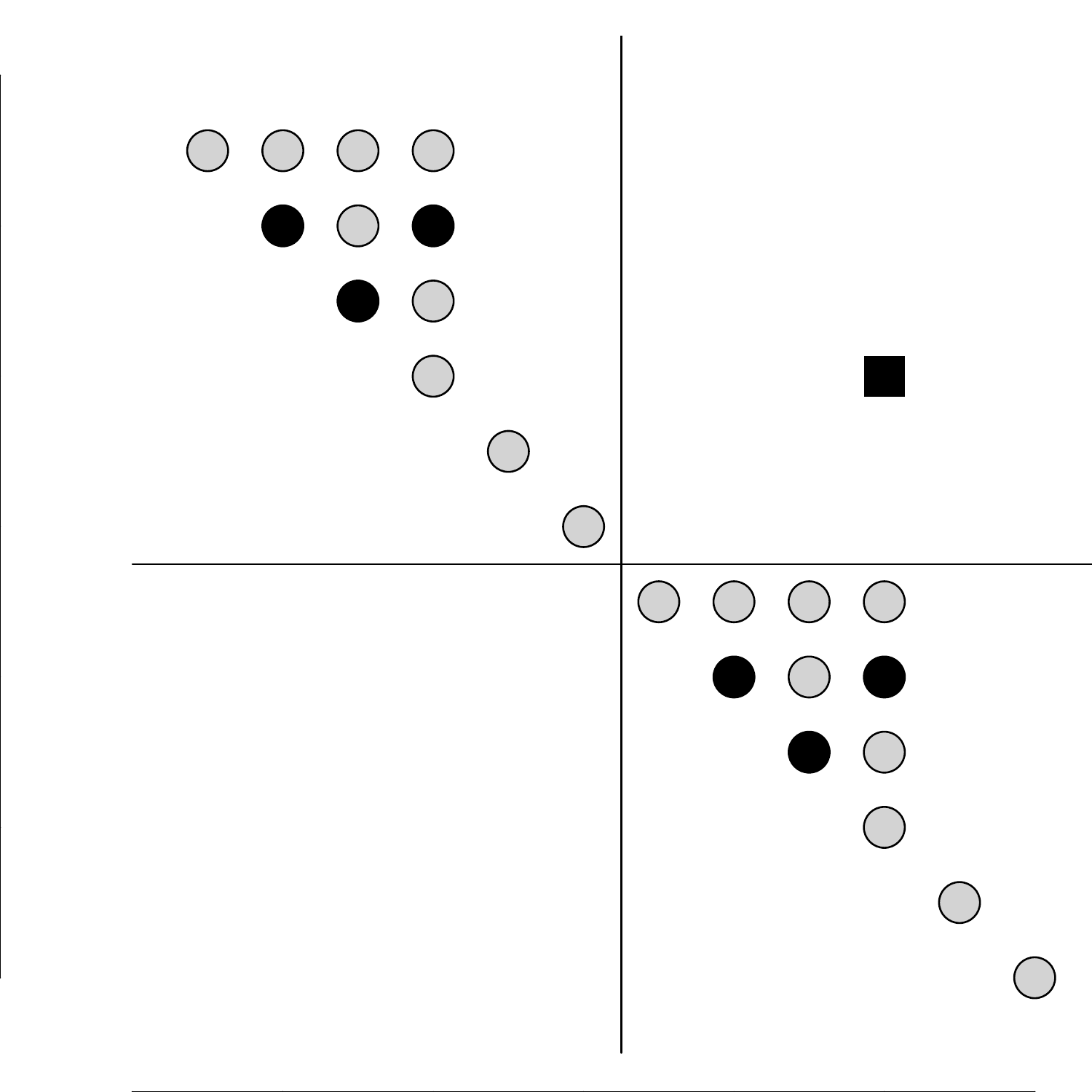}
        \caption{}\label{airqualityB}
	\end{subfigure}
	\begin{subfigure}{0.3\textwidth}
		\centering
		\includegraphics[scale=0.35]{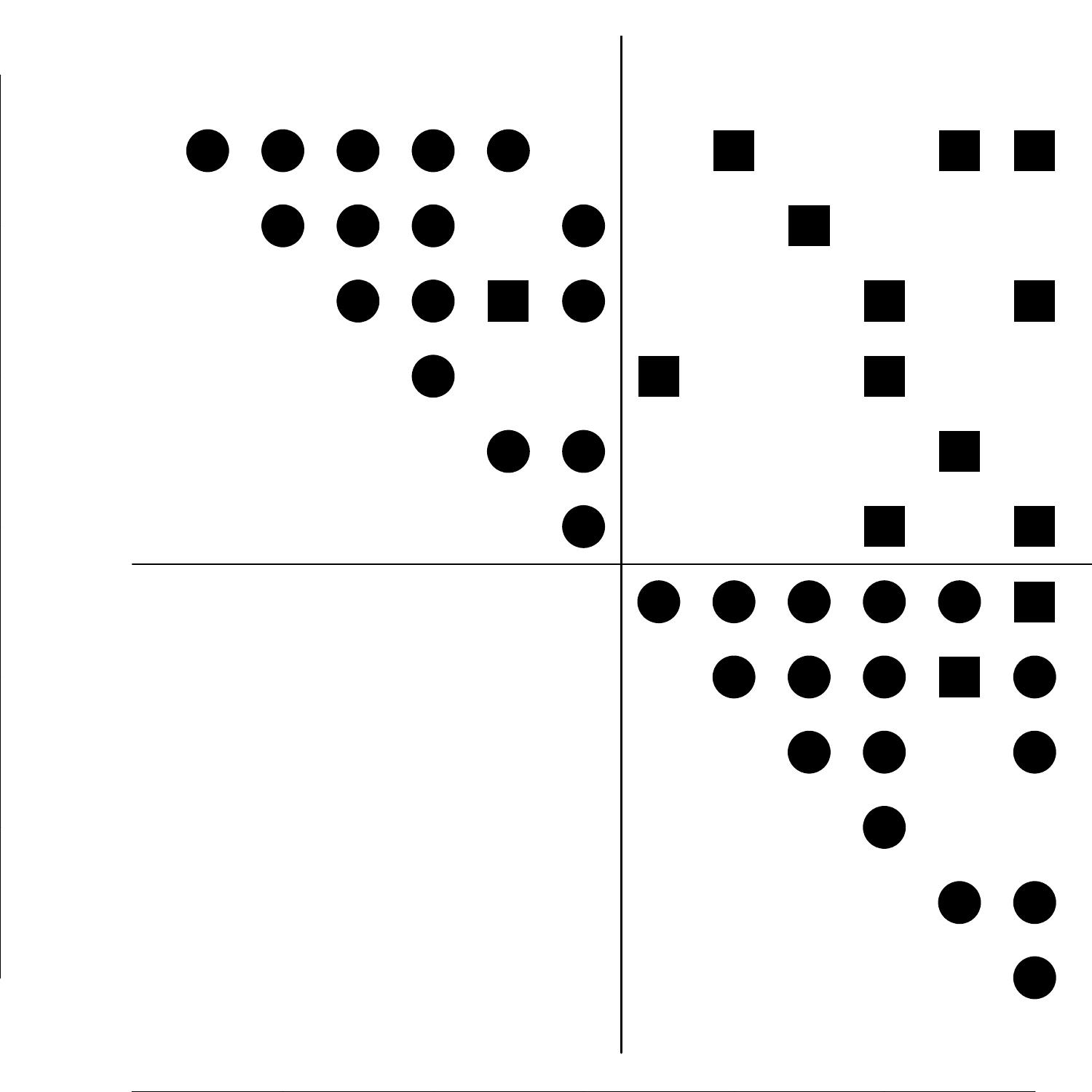}
        \caption{}\label{airqualityC}
	\end{subfigure}
	\caption{Matrix representation of the selected \PDRCON\ models for the Air Quality data by: the greedy search procedure (a), the paired data graphical lasso procedure applied to unscaled (b) and standardized (c) data. The left and right blocks correspond to the 1am and 1pm measurements, respectively, with variable ordering $CO$, $C6H6$, $NO2$, $O3$,  $RH$ and $AH$. The twin-pairing classes are denoted by gray dots (\textcolor{lightgray}{\small\faCircle}), whereas black symbols are used for atomic classes, with black dots (\textcolor{black}{\small\faCircle}) in the case where homologous vertices/edges are both present and black squares ({\small $\blacksquare$}) for edges whose twin edge is not present.}\label{airquality}
\end{figure}


\section{Conclusions}\label{SEC:conclusions}


We have considered the problem of structure learning of \GGM{s} for paired data by focusing on the family of \RCON\ models defined by coloured graphs named \PDCG{s}. The main results of this paper provide insight into the structure of the model inclusion lattice of \PDCG{s}. We have introduced an alternative representation of these graphs that facilitates the computation of neighbouring models. Furthermore, this alternative representation is naturally associated with a novel order relationship that has led to the construction of the twin lattice, whose structure resembles that of the well-known set inclusion lattice, and that facilitates the exploration of the search space. These results can be applied in the implementation of both greedy and Bayesian model search procedures. Here, we have shown how they can be used to improve the efficiency of stepwise backward elimination procedures. This has also made it clear that the use of the twin lattice facilitates the correct application of the principle of coherence. Finally, we have applied our procedure to learn a brain network on 36 variables. This model dimension could be regarded as somehow small, compared with the number of variables that can be dealt with by penalized likelihood methods. This is due to the fact that, as shown in Section~\ref{SEC:dimension.and.choerence}, the number of \PDRCON\ models is much larger than that of \GGM{s} and the same is the number of neighbouring submodels that need to be identified at every step of the algorithm. Furthermore, for every model considered, the computation of the maximum likelihood estimate is not available in closed form, but it involves an iterative procedure. Efficiency improvement is object of current research and could be achieved, for instance, by both implementing a procedure that deals with candidate submodels in parallel, and a procedure for the computation of maximum likelihood estimates explicitly designed for \PDRCON\ models.
We recall, however, that, as explained in Sections~\ref{SEC:related.works} and \ref{SEC:air.quality}, although penalized likelihood methods are considerably more efficient, their use is problematic when variables are not measured on comparable scales. Finally, we also remark that the range of application of our results does not restrict to  \PDRCON\ models. In fact, the colouring of vertices and edges of \PDCG{s} can be associated with different types of equality restrictions, and thus to other types of graphical models for paired data for which penalized likelihood methods are not available. For instance, they could be used to identify a subfamily of RCOR models, which impose equality restrictions between specific partial variances and correlations \citep{hojsgaard2008graphical}.

\newpage

\acks{We would like to thank two anonymous referees for their valuable comments, and Saverio Ranciati and Veronica Vinciotti for useful discussion. Financial support was provided by the MUR-PRIN grant 2022SMNNKY (CUP C53D23002580006).}

\appendix

\section{On the Partition of the Edge Set Induced by the Twin-\- pairing Function}\label{SUP.SEC:edge.set.partition}



In this section we provide a detailed example of the partition of the edge set of an undirected graph $G=(V, E)$ as described in Section~\ref{SEC:Notations}.  We let $p=6$ so that $V=\{1,2,3,4,5,6\}$ and, furthermore, we set $L=\{1,2,3\}$. Hence, $R=\{4,5,6\}$ and we assume that $\tau(i)=3+i$, for every $i\in L$, so that $\tau(1)=4$, $\tau(2)=5$ and $\tau(3)=6$.

In this case, the edge set of the complete graph is $F_{V}=\{(i,j)\mid i,j=1,\ldots,6;\; i<j\}$, and it can be partitioned into the three sets $F_{L}$,  $F_{R}$ and $F_{T}$ as follows,
\begin{eqnarray*}
	F_{L}&=&\{(1,2),(1,3),(2,3),(1,5),(1,6),(2,6)\},\\
	F_{R}&=&\{(4,5),(4,6),(5,6),(2,4),(3,4),(3,5)\},\\
	F_{T}&=&\{(1,4),(2,5),(3,6)\},
\end{eqnarray*}
which are graphically represented in Figure~\ref{FIG:FLFRFI}.
\begin{figure}[p]
	\centering
	\begin{subfigure}{0.33\textwidth}
		\centering
		\includegraphics[scale=.9]{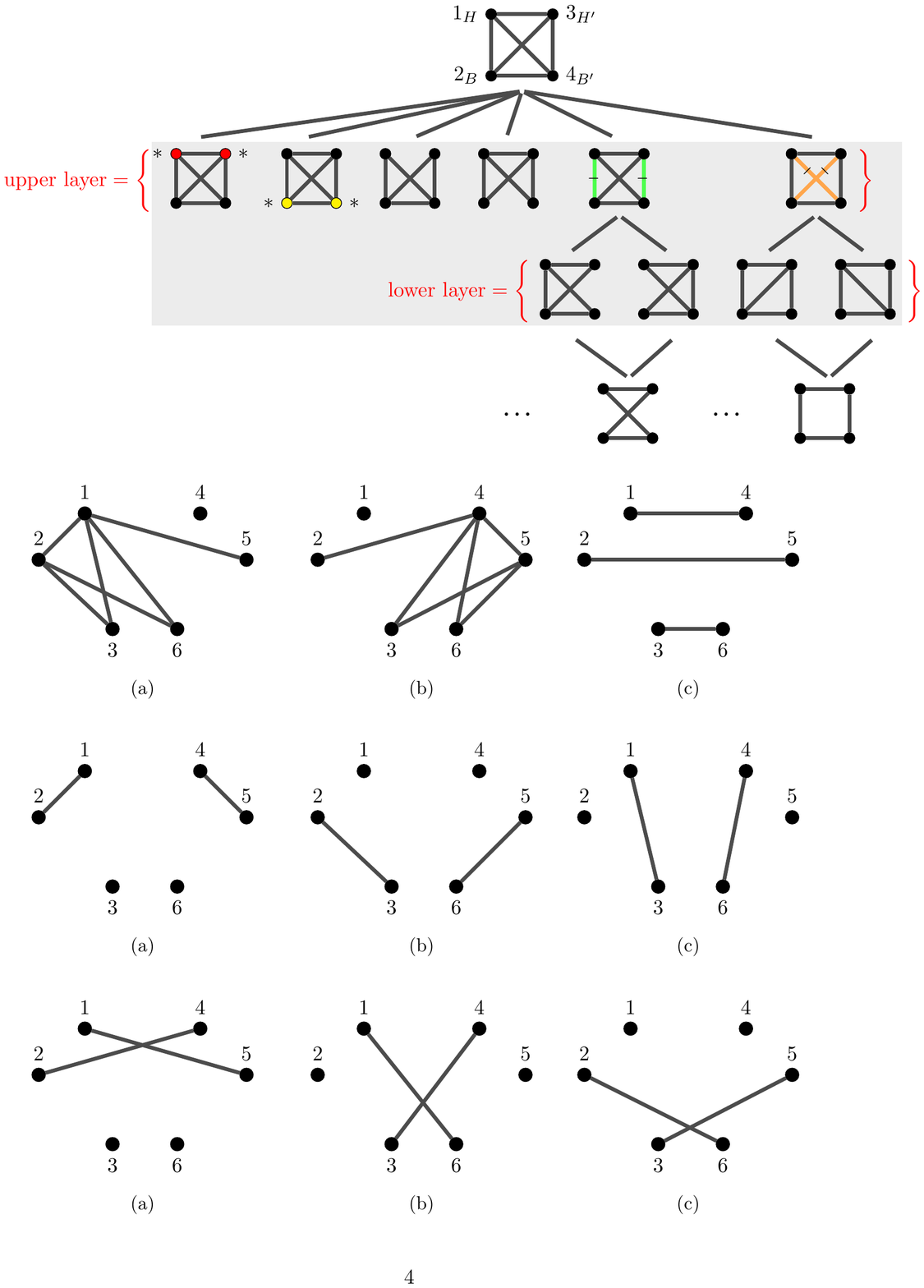}
		\caption{}
	\end{subfigure}
	\begin{subfigure}{0.33\textwidth}
		\centering
		\includegraphics[scale=.9]{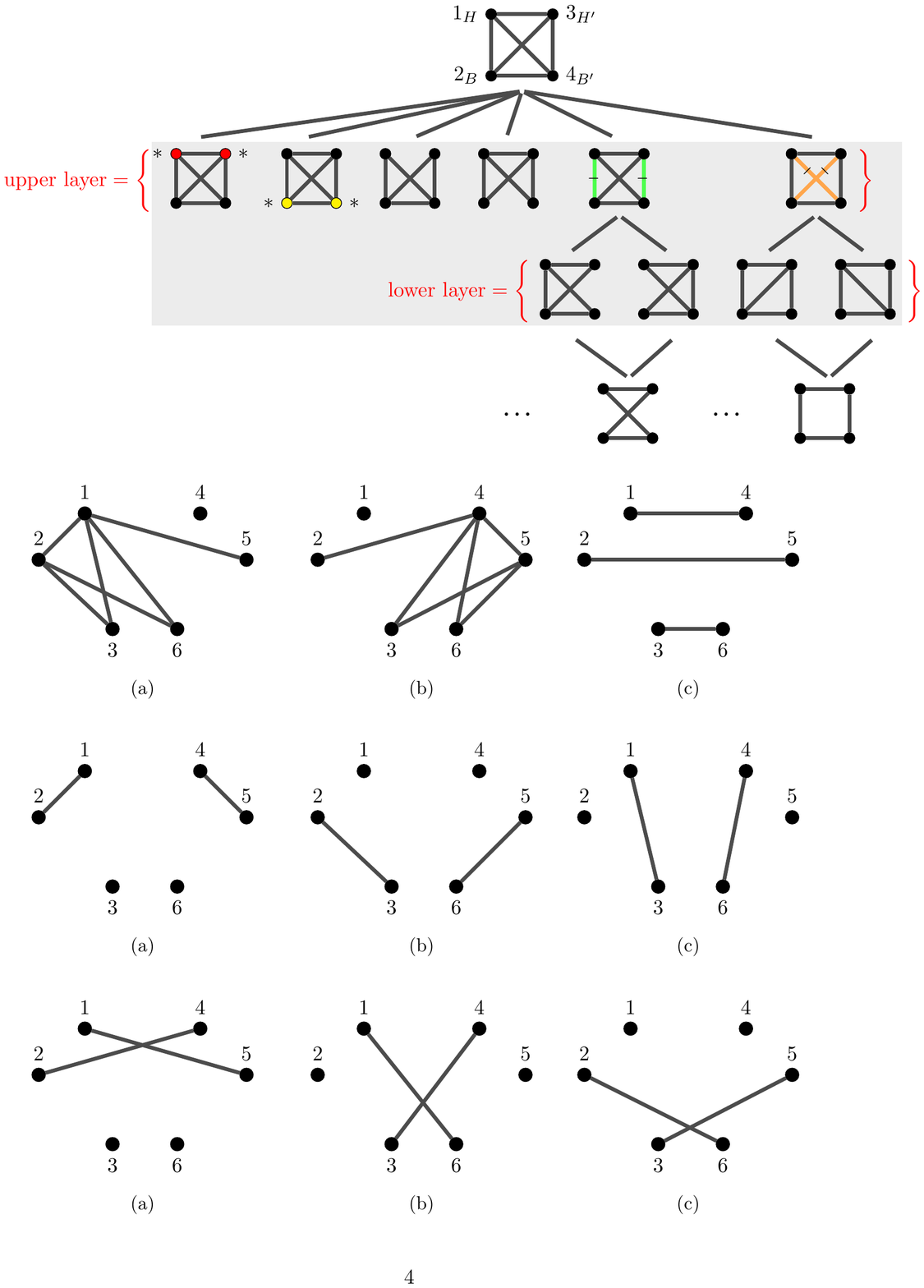}
		\caption{}
	\end{subfigure}
	\begin{subfigure}{0.3\textwidth}
		\centering
		\includegraphics[scale=.9]{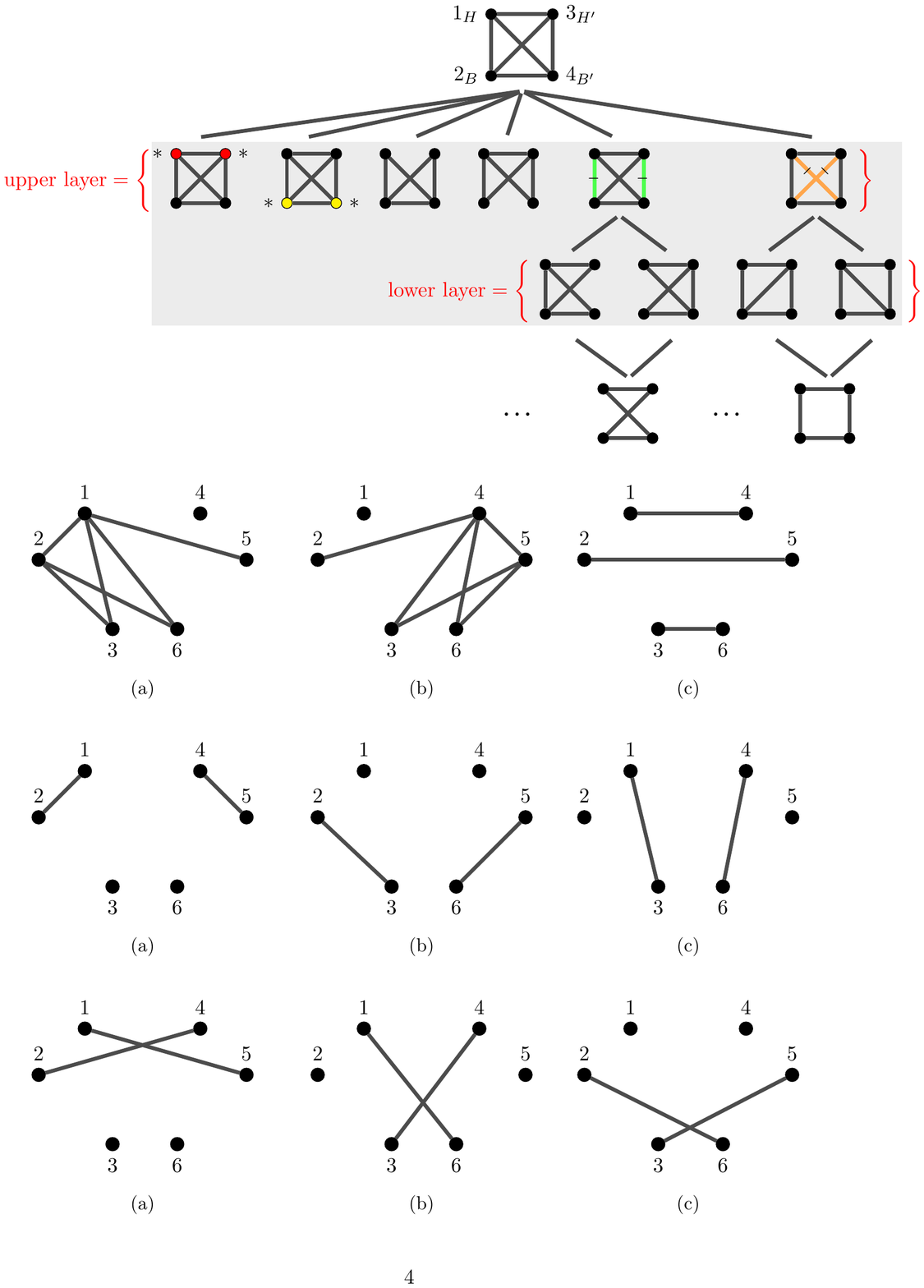}
		\caption{}
	\end{subfigure}
	\caption{Graphs with vertex set $V=\{1,2,3,4,5,6\}$ and edge sets: (a) $F_{L}$, (b) $F_{R}$ and (c) $F_{T}$.}\label{FIG:FLFRFI}
\end{figure}
Note that one can easily check that $F_{R}=\tau(F_{L})$, $F_{L}=\tau(F_{R})$ and $F_{T}=\tau(F_{T})$. In order to understand the meaning of this partition of $F_{V}$ it is useful to note that the set $F_{L}$ can be seen as the union of two disjoint subsets, $F_{L}=F_{L}^{(within)}\cup F_{L}^{(across)}$ where
\begin{eqnarray*}
	F_{L}^{(within)} &=& \{(i,j)\in F_{V}\mid i,j\in L\}\\
	F_{L}^{(across)} &=& \{(i,j)\in F_{V}\mid i\in L, j\in R, i<\tau(j)\}.
\end{eqnarray*}
Dually, $F_{R}=F_{R}^{(within)}\cup F_{R}^{(across)}$ with

\begin{eqnarray*}
	F_{R}^{(within)} &=& \{(i,j)\in F_{V}\mid i,j\in R\}\\
	F_{R}^{(across)} &=& \{(i,j)\in F_{V}\mid i\in L, j\in R, i>\tau(j)\}.
\end{eqnarray*}
Note that $F_{L}^{(within)}=\tau(F_{R}^{(within)})$  and thus $F_{R}^{(within)}=\tau(F_{L}^{(within)})$,  so that every edge in $F_{L}^{(within)}$ has a twin in $F_{R}^{(within)}$.
The edges in $F_{L}^{(within)}$ and $F_{R}^{(within)}$ belong to the left and right group, respectively, and thus the twin-pairing classes characterized by these edges encode similarities involving edges within the two groups. For the example on $p=6$ these two sets become
$F_{L}^{(within)}=\{(1,2),(1,3),(2,3)\}$ and $F_{R}^{(within)}=\{(4,5),(4,6),(5,6)\}$ and the relevant twin-pairing classes are represented in Figure~\ref{FIG:TwinEdges1}.
\begin{figure}
	\centering
	\begin{subfigure}{0.33\textwidth}
		\centering
		\includegraphics[scale=.9]{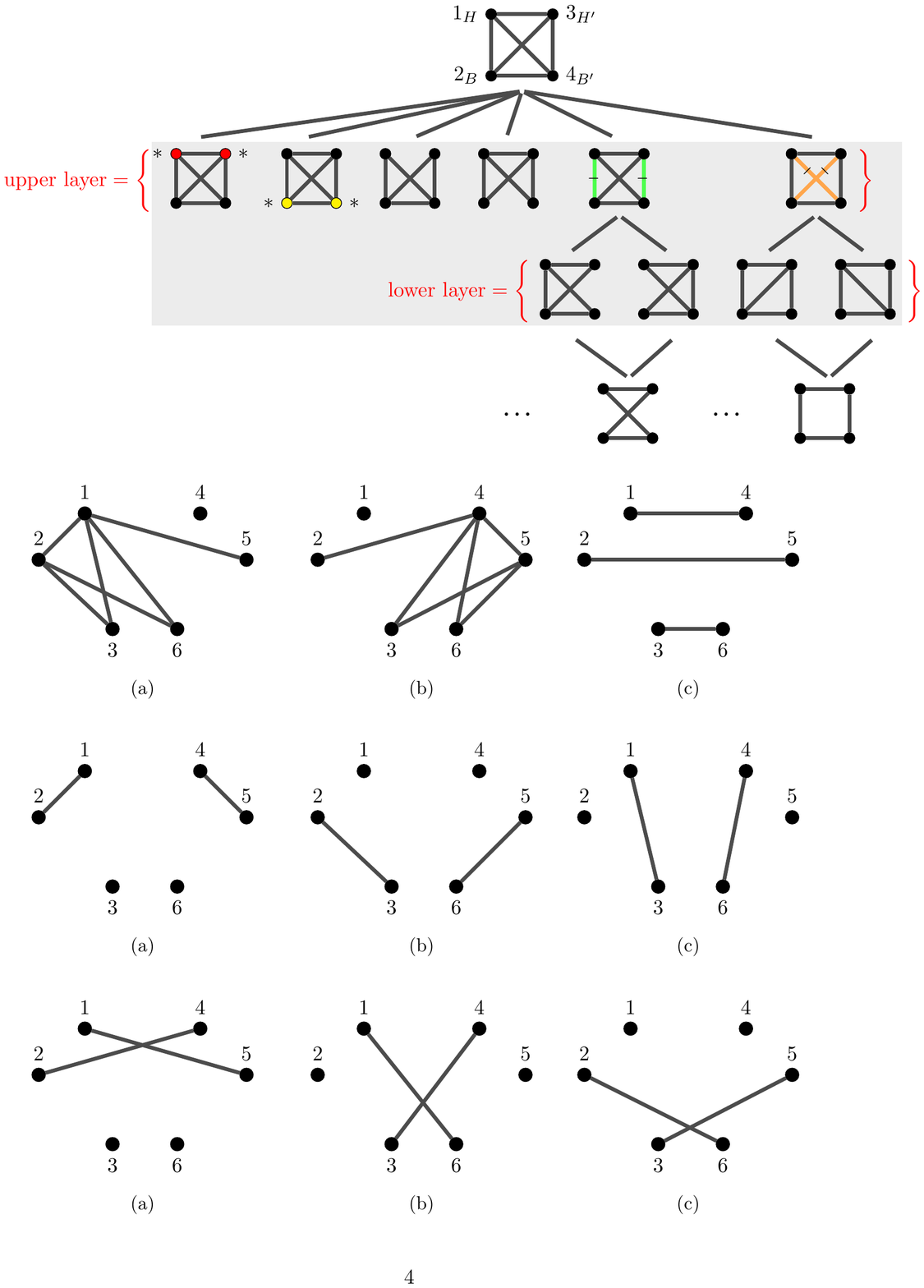}
		\caption{}
	\end{subfigure}
	\begin{subfigure}{0.33\textwidth}
		\centering
		\includegraphics[scale=.9]{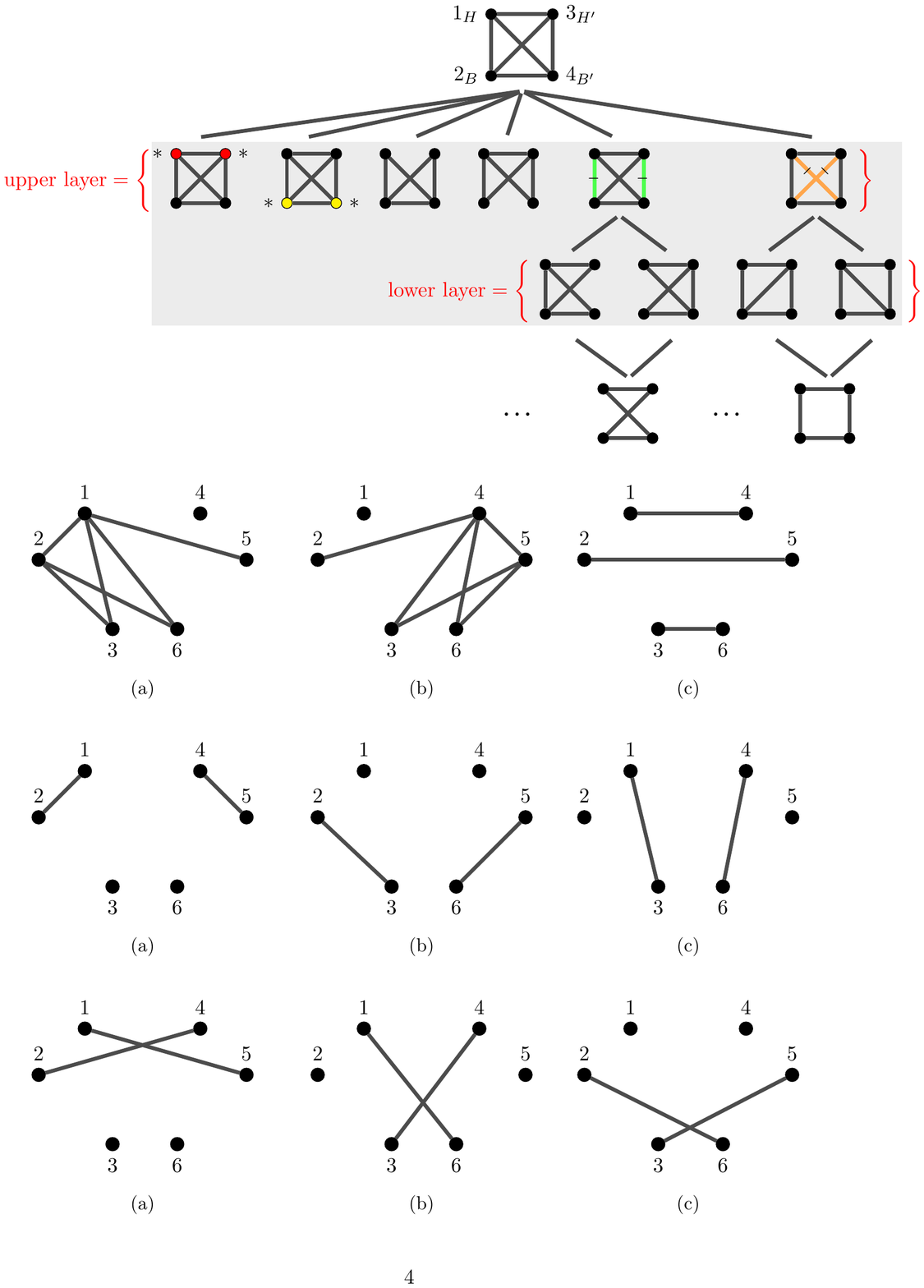}
		\caption{}
	\end{subfigure}
	\begin{subfigure}{0.3\textwidth}
		\centering
		\includegraphics[scale=.9]{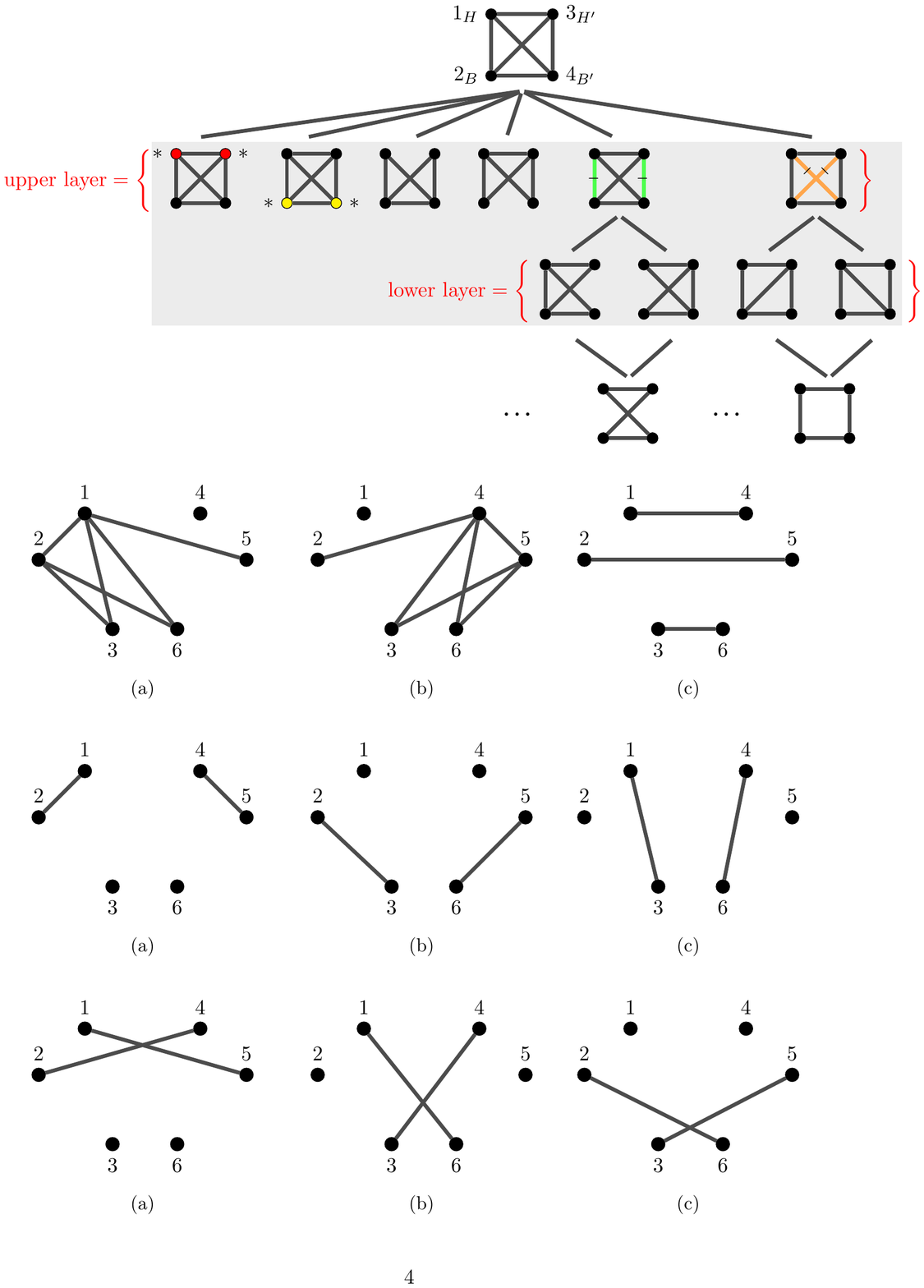}
		\caption{}
	\end{subfigure}
	\caption{Edge twin-pairing classes involving edges within groups: each of the graphs (a), (b) and (c) contains exactly one edge $(i,j)\in F_{L}^{(within)}=\{(1,2),(1,3),(2,3)\}$ and its twin  $\tau(i,j)\in F_{R}^{(within)}=\{(4,5),(4,6),(5,6)\}$.}\label{FIG:TwinEdges1}
\end{figure}

The edges in $F_{L}^{(across)}$ and $F_{R}^{(across)}$ encode the cross-group association structure. Also in this case
$F_{L}^{(across)}=\tau(F_{R}^{(across)})$ and $F_{T}^{(across)}=\tau(F_{T})^{(across)}$ so that every edge in $F_{L}^{(across)}$ has a twin in $F_{R}^{(across)}$. For the case $p=6$ these sets become $F_{L}^{(across)}=\{(1,5),(1,6),(2,6)\}$ and $F_{R}^{(across)}=\{(2,4),(3,4),(3,5)\}$ and the corresponding colour classes are depicted in Figure~\ref{FIG:TwinEdges2}.
\begin{figure}
	\centering
	\begin{subfigure}{0.33\textwidth}
		\centering
		\includegraphics[scale=.9]{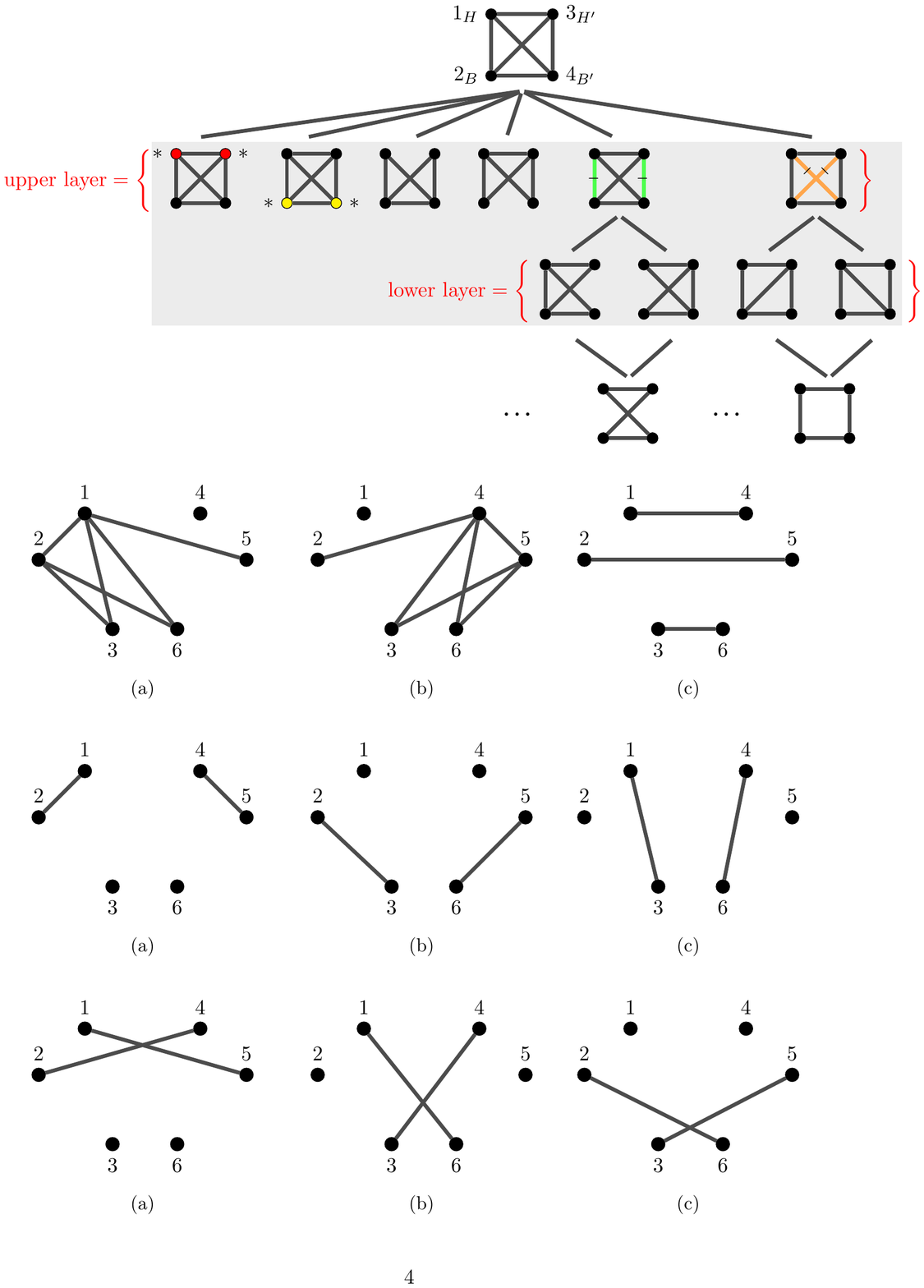}
		\caption{}
	\end{subfigure}
	\begin{subfigure}{0.33\textwidth}
		\centering
		\includegraphics[scale=.9]{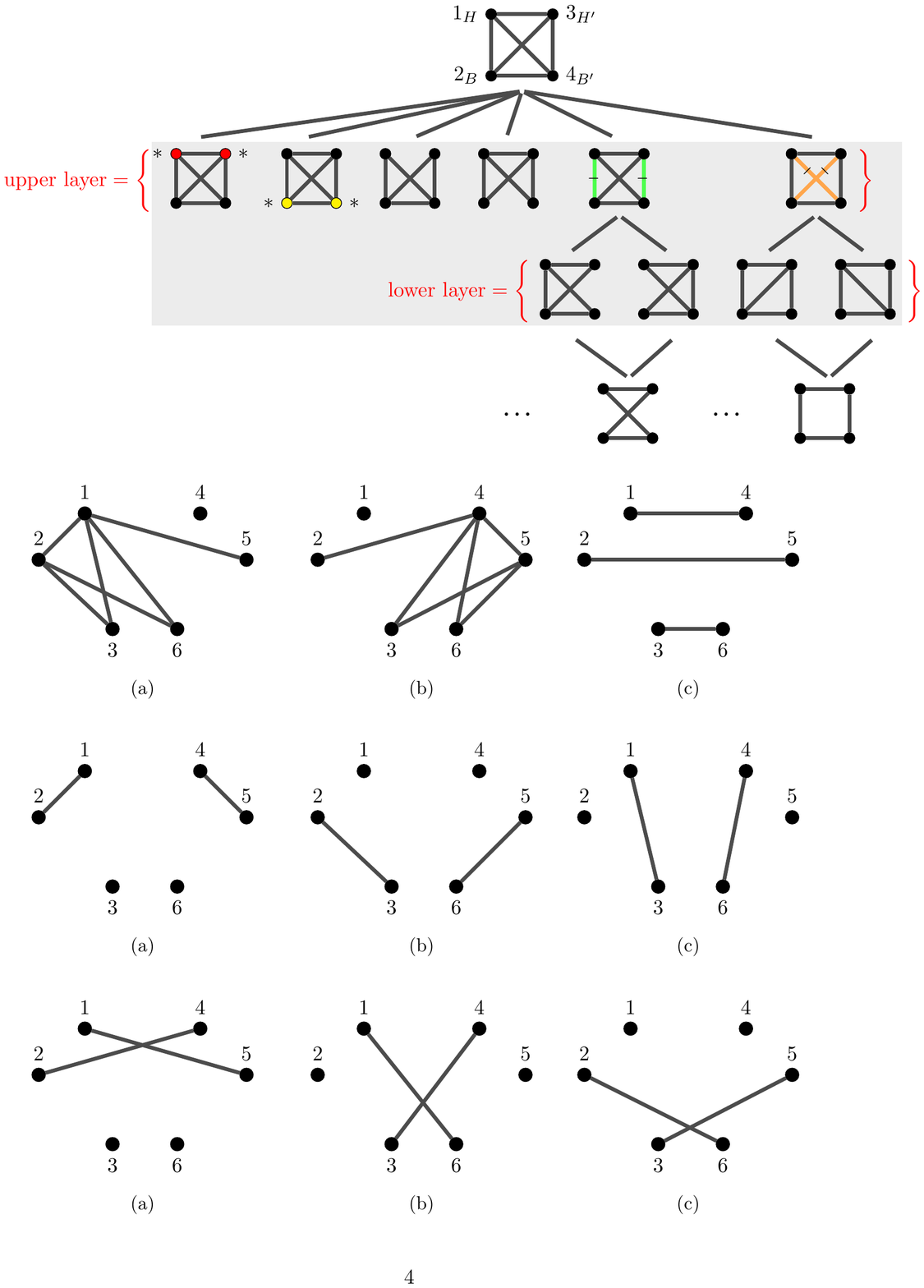}
		\caption{}
	\end{subfigure}
	\begin{subfigure}{0.3\textwidth}
		\centering
		\includegraphics[scale=.9]{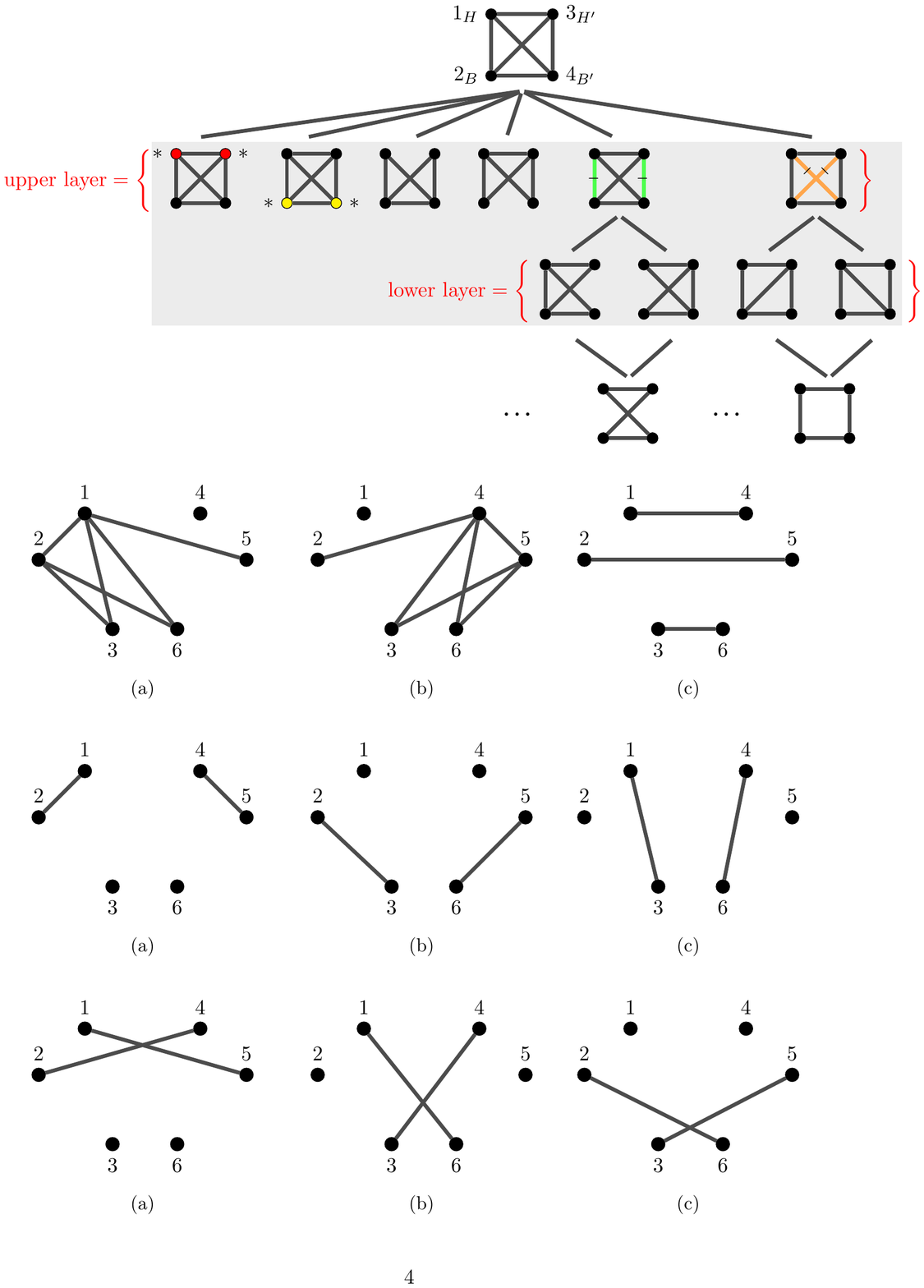}
		\caption{}
	\end{subfigure}
	\caption{Edge twin-pairing classes involving edges across groups: each of the graphs (a), (b) and (c) contains exactly one edge $(i,j)\in F_{L}^{(across)}=\{(1,5),(1,6),(2,6)\}$ and its twin  $\tau(i,j)\in F_{R}^{(across)}=\{(2,4),(3,4),(3,5)\}$.}\label{FIG:TwinEdges2}
\end{figure}

Consider now the \PDCG\ in Figure~\ref{FIG:example-notions}. This is denoted by $\G=(\V, \E)$ with
\begin{eqnarray*}
	\V &=& \{\{1, 4\}, \{2, 5\}, \{3\}, \{6\}\}\\
	\E &=& \{\{(1,2), (4, 5)\},\{(1, 4)\}, \{(1, 6)\}, \{(2,3)\},\{ (3, 4)\}, \{ (3, 5)\}\}.
\end{eqnarray*}
We now compute the equivalent representation $\G=(V, E, \LL, \EE)$. First, we can trivially obtain that $V=\{1,2,3,4,5,6\}$ and $E = \{(1,2), (1, 4), (1, 6), (2,3), (3, 4), (3, 5), (4, 5)\}$. Next, because $L=\{1,2,3\}$ and the only vertex atomic colour classes are $\{3\}$ and $\{6\}$, we can see that $\LL=\{3\}$. Finally, we compute  $E_{L} = E\cap F_{L}=\{(1,2), (1,6), (2,3)\}$ and  $E_{R} = E\cap F_{R}=\{(4, 5), (3, 4), (3,5)\}$. It follows that $\tau(E_{R}) = \{(1, 2), (1, 6), (2, 6)\}$ so that $E_{L} \cap \tau(E_{R}) = \{(1,2), (1, 6)\}$.  Because the edge $(1, 2)$ belongs to a twin-pairing colour class with its twin $(4,5)$, whereas $(1, 6)$ belongs to an atomic colour class, then we obtain that $\EE = \{(1, 6)\}$.
\begin{figure}
	\centering
	\includegraphics[scale=.9]{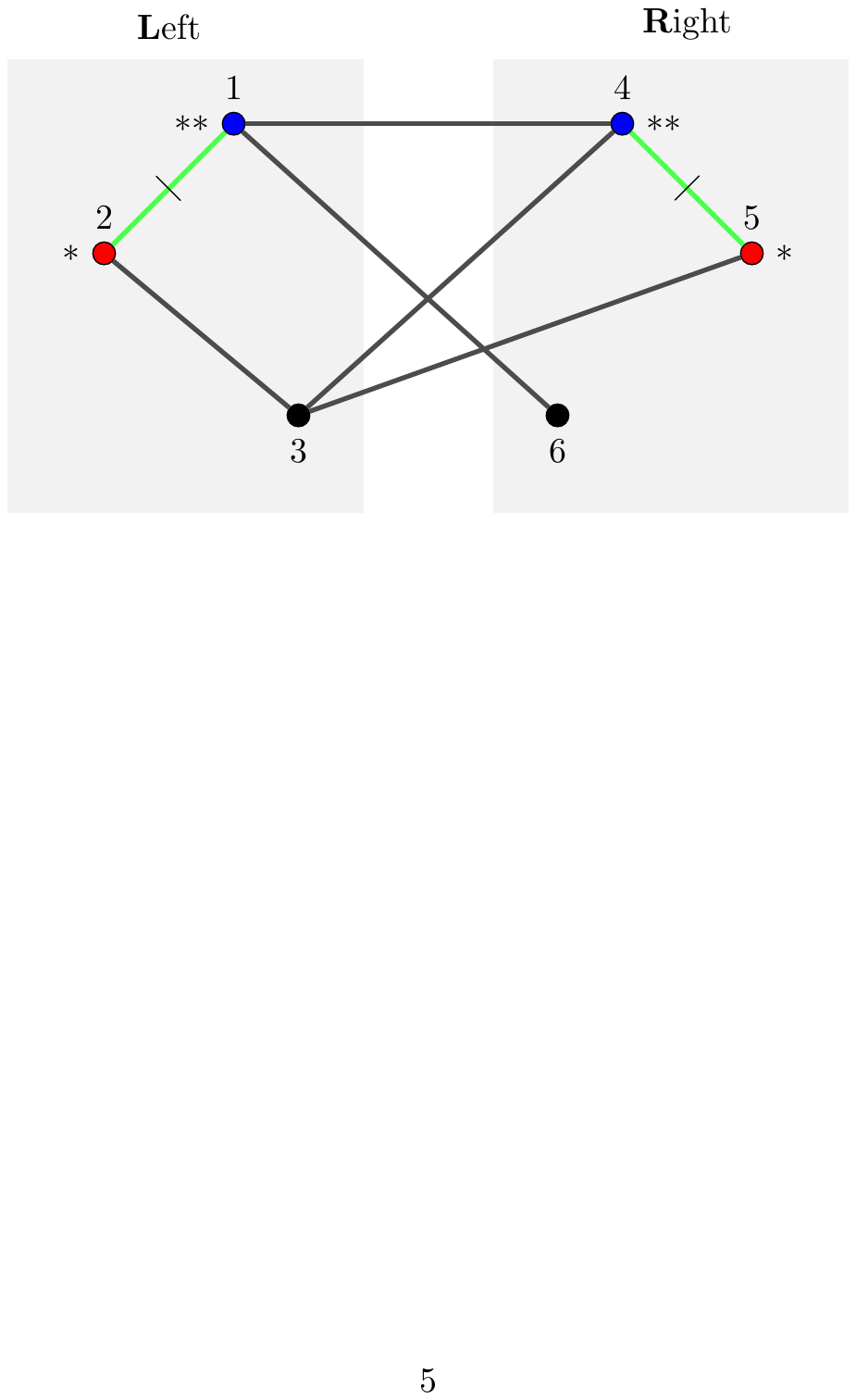}
	\caption{Example of coloured graph for paired data.}\label{FIG:example-notions}
\end{figure}


\section{Dimension of the Search Space of \PDRCON\ Models}\label{SUP.SEC:search.space.dimension}
In this section, we provide a formula for the computation of the number of \PDRCON\ models $\P(V)$ with $|V|=p$.

Firstly, we notice that in a \PDRCON\ model with $p$ variables the maximum numbers of vertex and edge twin-pairing colour classes  are $p/2$ and $|F_{L}| = p(p-2)/4$, respectively. Next, we compute $|\P(V)|$ by steps, starting from the case where there are no twin-pairing colour classes and then increasing the number of twin-twin pairing classes.
\begin{itemize}
	\item the number of \PDCG{s} where all colour classes are atomic coincides with the number of \GGM{s} and is equal to $2^{p \choose 2}$,
	
	\item the number of \PDCG{s} where vertex twin-pairing classes are allowed but edge colour classes are all atomic is $2^{p/2} \times 2^{\binom{p}{2}}$,
	
	\item the number of \PDCG{s}  where vertex twin-pairing classes are allowed  and  at most $1$ edge twin-pairing  class  is present is
	\begin{align*}
		2^{p/2} \binom{p(p-2)/4}{1}2^{\binom{p}{2} - 2},
	\end{align*}
	
	\item  the number of \PDCG{s}  where vertex twin-pairing classes are allowed  and at most $2$ edge twin-pairing  classes  are present is
	\begin{align*}
		2^{p/2} \binom{p(p-2)/4}{2}2^{\binom{p}{2} - 4},
	\end{align*}
	
	\item  the number of \PDCG{s}  where vertex twin-pairing classes are allowed  and at most $i$ edge twin-pairing  classes  are present, with $0 \leq i \leq p(p-2)/4$, is
	\begin{align*}
		2^{p/2} \binom{p(p-2)/4}{i}2^{\binom{p}{2} - 2i}.
	\end{align*}
\end{itemize}

So finally, the number of \PDRCON\ models is
\begin{align*}
	|\P(V)| = 2^{p/2} \sum_{i = 0}^{p(p-2)/4}\binom{p(p-2)/4}{i}2^{\binom{p}{2} - 2i}.
\end{align*}


\section{Technical Details on the Greedy Search procedure}\label{SUP.SEC:stepwise.pseudocode}
Here we provide Algorithms~\ref{ALG:main-detailed} and \ref{ALG:next.step-detailed} which contain a fully detailed, expanded version of the pseudocode given in Algorithms~\ref{ALG:main-short} and \ref{ALG:next.step-short}. Note that the pseudocode calls the procedures \textsc{Is.Accepted()} and \textsc{Best.Model()} that are left unspecified. The procedure \textsc{Is.Accepted()}
takes a model as input and returns \texttt{true} if the model satisfies a given, arbitrary, criterion and \texttt{false} otherwise. The procedure \textsc{Best.Model()} accepts a  non-empty set of models as input and returns one of the models of such set that is identified as ``best'', according to a specified criterion. Finally, an implementation of the procedure in the programming language \textsf{R} can be found at \url{https://github.com/NgocDung-NGUYEN/backwardCGM-PD}.

\begin{algorithm}
	\caption{Expanded version of the pseudocode in Algorithm~\ref{ALG:main-short}.}\label{ALG:main-detailed}
	\begin{algorithmic}[1]
		\Require $V$ and $L$
		\State $\G^{\best}\gets (V, F_{V}, L, F_{L})$
		\State $\mathcal{A}\gets \emptyset$
		\State $K\gets$ maximum number of steps ($\geq 1)$
		\State $k\gets 1$
		\ForAll{$i\in L$}
		\State $\Hs\gets (V, F_{V}, L\setminus \{i\}, F_{L})$
		\If{\Call{Is.Accepted}{$\Hs$}}
		\State $\mathcal{A}\gets \mathcal{A}\cup \{\Hs\}$
		\EndIf
		\State $\Hs\gets (V, F_{V}\setminus \{(i, \tau(i))\}, L, F_{L})$
		\If{\Call{Is.Accepted}{$\Hs$}}
		\State $\mathcal{A}\gets \mathcal{A}\cup \{\Hs\}$
		\EndIf
		\EndFor
		\ForAll{$(i,j)\in F_{L}$}
		\State $\Hs\gets (V, F_{V}, L, F_{L}\setminus \{(i, j)\})$
		\If{\Call{Is.Accepted}{$\Hs$}}
		\State $\mathcal{A}\gets \mathcal{A}\cup \{\Hs\}$
		\Else
		\State $\Hs\gets (V, F_{V}\setminus \{(i, j)\}, L, F_{L}\setminus \{(i, j)\})$
		\If{\Call{Is.Accepted}{$\Hs$}}
		\State $\mathcal{A}\gets \mathcal{A}\cup \{\Hs\}$
		\EndIf
		\State $\Hs\gets (V, F_{V}\setminus \{\tau(i, j)\}, L, F_{L}\setminus \{(i, j)\})$
		\If{\Call{Is.Accepted}{$\Hs$}}
		\State $\mathcal{A}\gets \mathcal{A}\cup \{\Hs\}$
		\EndIf
		\EndIf
		\EndFor
		\While{$\mathcal A\neq\emptyset$ and $k<K$}
		\State $\mathcal{A},\;\G^{\best}\gets\;$\Call{Update}{$\mathcal{A},\; \G^{\best}$}  \Comment{see Algorithm~\ref{ALG:next.step-detailed}}
		\State $k\gets k+1$
		\EndWhile
		\lIf{$\mathcal A\neq\emptyset$ and $k=K$}
		$\G^{\best}\gets$\Call{Best.Model}{$\mathcal{A}$}
		\State \textbf{return} $\G^{\best}$
	\end{algorithmic}
\end{algorithm}
\begin{algorithm}[t]
	\caption{Expanded version of the pseudocode in Algorithm~\ref{ALG:next.step-short}.} \label{ALG:next.step-detailed}
	\begin{algorithmic}[1]
		\Procedure{Update}{$\mathcal{A},\; \G^{\best}$}
		\State $\G^{\old}\gets \G^{\best}$
		\State $\G^{\best}\gets$\Call{Best.Model}{$\mathcal{A}$}
		\Comment{\parbox[t]{.4\linewidth}{Note that:\\[1ex]
				$\G^{\old}=(V, E^{\old}, \LL^{\old}, \EE^{\old})$\\[1ex]
				$\G^{\best}=(V, E^{\best}, \LL^{\best}, \EE^{\best})$\\}}
		\State $e\gets E^{\best}\setminus E^{\old}$
		\If{$|e|=1$ and $e\neq \tau(e)$}
		\State $\Hs\gets (V, E^{\old}\setminus \{\tau(e)\}, \LL^{\old}, \EE^{\old}\setminus \{e, \tau(e)\})$
		\State $\mathcal{A}\gets \mathcal{A}\setminus \{\Hs\}$
		\ElsIf{$e=\emptyset$ and $\EE^{\best}\neq \EE^{\old}$}
		\State $e\gets \EE^{\old}\setminus \EE^{\best}$
		\State $\Hs\gets (V, E^{\old}\setminus \{e, \tau(e)\}, \LL^{\old}, \EE^{\old}\setminus \{e\})$
		\State $\mathcal{A}\gets \mathcal{A}\cup \{\Hs\}$
		\EndIf
		\State $\mathcal{A^{\old}}\gets \mathcal{A}\setminus \{\G^{\best}\}$
		\State $\mathcal{A}\gets\emptyset$
		\ForAll{$\F\in\mathcal{A}^{\old}$}
		\State $\Hs\gets \F\wedge_{t}\G^{\best}$
		\lIf{\Call{Is.Accepted}{$\Hs$}} $\mathcal{A}\gets \mathcal{A}\cup \{\Hs\}$
		\EndFor
		\State \textbf{return} $\mathcal{A}$ and $\G^{\best}$
		\EndProcedure
	\end{algorithmic}
\end{algorithm}
%


\section{Additional Details on the Applications}
\subsection{Simulations}\label{SUP.SEC:simulations}

In this section, we provide additional details on the simulations described in Section~\ref{SEC:simulations}; see also \url{https://github.com/NgocDung-NGUYEN/backwardCGM-PD}.

Table \ref{TAB:simulation-experiment} considers the 8 \PDCG{s} used in the simulations and gives details on edges, pairs of twin edges present in the graph and colour classes.

\begin{table}
	\centering
	\begin{tabular}{c c | c c c c c c c c c}
		\hline
		\multirow{2}{*}{$p$} & \multirow{2}{*}{\textbf{Scen.}} & \multicolumn{4}{c}{{\bf Structure}} && \multicolumn{4}{c}{{\bf Symmetries}} \\
		&&den.$_{\%}$ & $|E|$& $|E_{T}|$& $| E_{L} \cap \tau(E_{R})|$ && $|\EE|$ & $|\EE^{c}|$& $|\LL|$& $|\LL^{c}|$ \\
		\hline \hline && ~\\
		\multirow{2}{*}{$8$} &A& $17.85$ &$5$&$0$& $2$ && $1$ & $1$ & $3$ & $1$ \\
		&B& $35.71$ &$10$&$0$& $4$ && $1$ & $3$ & $1$ & $3$ \\
		&& ~\\
		
		\multirow{2}{*}{$12$}&A& $18.18$ &$12$&$0$& $5$ && $4$ & $1$ & $4$ & $2$ \\
		&B & $34.85$ &$23$&$1$& $9$ && $3$ & $6$ & $2$ & $4$ \\
		&& ~\\
		
		\multirow{2}{*}{$16$}&A& $18.33$ &$22$&$1$& $9$ && $7$ & $2$ & $6$ & $2$\\
		&B& $35.00$ &$42$&$3$& $15$ && $5$ & $10$ & $2$ & $6$ \\
		&& ~\\
		
		\multirow{2}{*}{$20$}&A& $17.94$ &$34$&$2$& $14$ && $11$ & $3$ & $8$ & $2$ \\
		&B& $34.74$ &$66$&$6$& $24$ && $8$ & $16$ & $2$ & $8$ \\
		\hline
	\end{tabular}
\caption{Details on the \PDCG{s} used in the simulations: den.$_{\%}$ is the density computed from $|E|/|F_{V}|$; $|E|$ is  the number of edges; $|E_{T}|$ is the number of edges connecting pairs of twin vertices; $|E_{L} \cap \tau(E_{R})|$ is the number of edges for which a twin is present in the graph;  $|\EE|$ and $|\LL|$ are the number of edge and vertex atomic colour classes, respectively;  $|\EE^{c}|=|(E_{L} \cap \tau(E_{R}))\setminus \EE |$ and $|\LL^{c}|=| L \setminus \LL|$ are the number of edge and vertex twin-pairing classes, respectively.}\label{TAB:simulation-experiment}
\end{table}

For each \PDCG\ in Table~\ref{TAB:simulation-experiment} above, we generated a concentration matrix by exploiting the \textsf{R} package \texttt{gRc} \citep{hojsgaard2007inference}. More specifically, we applied the function \texttt{rcox()} by giving in input the $p\times p$ equicorrelation matrix with the diagonal entries equal to $1$, and all off-diagonal entries equal to $0.5$.  Each of resulting concentration matrices was used to generate $20$ random samples of size $n=100$ from the relevant multivariate normal distribution, with mean vector equal to zero. The two procedures were then applied to the $20\times 8 = 160$ datasets so obtained, and we also remark that we exploited the package \texttt{gRc} for the computation of maximum likelihood estimates.

Table~\ref{TAB:measure-simmulation} summarizes the average results over all repetitions of the simulated data of the performance scores which measure the identifications of  the zeros and symmetric structures of the resulting models recovered from the model selection procedure. Specifically, for the accuracy of the graph structures, we considered the quantities such as the edge positive-predicted value (ePPV), the edge true-positive rate (eTPR), and the edge true-negative rate (eTNR), which are formally specified as
\begin{eqnarray*}
	\ePPV = \frac{\eTP}{\# \text{edges}}, \quad \eTPR = \frac{\eTP}{\eP}, \quad \eTNR = \frac{\eTN}{\eN},
\end{eqnarray*}
where
\begin{itemize}
	\item eTP: the number of true edges in the selected graph,
	\item $\#$edges: the number of edges in the selected graph,
	\item eP:  the number of edges in the true graph,
	\item eTN: the number of true missing edges in the selected graph,
	\item eN: the number of missing edges in the true graph.
\end{itemize}

Similarly, for the identification of the symmetries, that is of the twin-pairing classes, we considered the symmetry positive-predicted value (sPPV), the symmetry true-positive rate (sTPR), and the symmetry true-negative rate (sTNR), as defined by \cite{ranciati2021fused}. Specifically, they are computed as
\begin{eqnarray*}
	\sPPV = \frac{\sTP}{\# \text{sym}}, \quad \sTPR = \frac{\sTP}{\sP}, \quad \sTNR = \frac{\sTN}{\sN},
\end{eqnarray*}
where
\begin{itemize}
	\item sTP: the number of pairs of true twin-pairing edges in the selected graph,
	\item $\#$sym: the number of pairs of twin-pairing edges in the selected graph,
	\item sP:  the number of pairs of twin-pairing edges in the true graph,
	\item sTN:the number of pairs of twin-pairing edges that are missing in the selected graph,
	\item sN: the number of pairs of twin-pairing edges that are missing in the true graph.
\end{itemize}

Finally, on the last two columns of the table we report the computational time of the procedures and the total number of fitted models during the execution of the algorithms.

\begin{sidewaystable}[p]
	\centering
	\begin{tabular}{ c c c | c c c c c c c c c r  r}
		\hline
		\multirow{2}{*}{\textbf{S}} & \multirow{2}{*}{$p$} & \multirow{2}{*}{\textbf{Order}} & \multicolumn{4}{c}{{\bf Graph structure}} && \multicolumn{4}{c}{{\bf Symmetries}} &
		\multirow{2}{*}{\textbf{Time$_{\text{(s)}}$}} & \multirow{2}{*}{\textbf{$\#$models}} \\
		&&&$\#$edges & ePPV$_{\%}$& eTPR$_{\%}$& eTNR$_{\%}$ && $\#$sym & sPPV$_{\%}$ & sTPR$_{\%}$& sTNR$_{\%}$ && \\
		\hline \hline && ~\\
		\multirow{11}{*}{\textbf{A}} & \multirow{2}{*}{$8$} &$\preceq_{t}$& $7 (2)$ &$76.68$&$100.00$& $91.52$ && $2(1)$ & $41.67$ & $95.00$ & $89.44$ & $ 4$ & $273$\\
		&&$\preceq_{s}$ & $7(2)$ &$75.41$&$100.00$& $91.30$ && $2(1)$ & $46.67$ & $95.00$ & $85.56$ & $17$& $580$\\
		&& ~\\
		
		&\multirow{2}{*}{$12$}&$\preceq_{t}$& $17(3)$ &$71.22$&$97.92$& $90.37$ && $6(1)$ & $15.99$ & $90.00$ & $87.61$ & $19$ & $1300$\\
		&&$\preceq_{s}$ & $17(3)$ &$70.23$&$98.75$& $90.00$ && $5(1)$ & $17.34$ & $90.00$ & $83.91$ & $109$ & $2985$\\
		&& ~\\
		
		&\multirow{2}{*}{$16$}&$\preceq_{t}$& $27(4)$ &$74.83$&$88.64$& $92.70$ && $9(1)$ & $18.53$ & $85.00$ & $89.43$ & $89$ & $4245$\\
		&&$\preceq_{s}$ & $28(4)$ &$70.98$&$87.05$& $91.48$ && $8(1)$ & $19.32$ & $77.50$ & $84.77$ & $532$ & $10554$\\
		&& ~\\
		
		&\multirow{2}{*}{$20$}&$\preceq_{t}$& $44(8)$ &$64.24$&$82.21$& $89.49$ && $16(3)$ & $13.47$ & $70.00$ & $86.18$ & $379$ & $10212$\\\
		&&$\preceq_{s}$ & $46(7)$ &$60.11$&$78.97$& $88.04$ && $13(3)$ & $11.97$ & $51.67$ & $80.00$ & $2102$ & $27356$\\
		\hline
		&& ~\\
		\multirow{11}{*}{\textbf{B}}& \multirow{2}{*}{$8$} &$\preceq_{t}$& $11(2)$ &$84.54$&$89.50$& $89.72$ && $5(1)$ & $64.08$ & $93.33$ & $92.50$ & $4$ & $264$\\
		&&$\preceq_{s}$ & $11(2)$ &$83.59$&$89.00$& $89.44$ && $4(1)$ & $64.83$ & $85.00$ & $85.83$ & $15$ & $486$\\
		&& ~\\
		
		& \multirow{2}{*}{$12$} &$\preceq_{t}$& $23(4)$ &$81.78$&$80.00$& $89.65$ && $9(2)$ & $56.28$ & $79.17$ & $87.35$ & $ 19$ & $1230$\\
		&&$\preceq_{s}$ & $23(4)$ &$81.25$&$78.48$& $89.53$ && $7(2)$ & $63.26$ & $73.33$ & $83.53$ & $102$ & $2729$\\
		&& ~\\
		
		& \multirow{2}{*}{$16$} &$\preceq_{t}$& $34(5)$ &$72.49$&$57.86$& $87.63$ && $12(2)$ & $52.38$ & $64.00$ & $86.09$ & $96$ & $4259$\\
		&&$\preceq_{s}$ & $31(4)$ &$74.50$&$55.24$& $89.49$ && $9(2)$ & $63.36$ & $54.00$ & $82.97$ & $523$ & $10247$\\
		&& ~\\
		
		& \multirow{2}{*}{$20$} &$\preceq_{t}$& $51(9)$ &$69.74$&$53.41$& $87.02$ && $18(2)$ & $48.17$ & $54.38$ & $84.07$ & $452$ & $10300$\\
		&&$\preceq_{s}$ & $48(7)$ &$67.81$&$48.64$& $87.22$ && $12(2)$ & $52.97$ & $39.38$ & $78.98$ & $2226$ & $26961$\\
		\hline
	\end{tabular}
\caption{Performance measures of the model selection procedures on the lattice structures ordered by $\preceq_{t}$ and $\preceq_{s}$. Results are recorded as mean computed across the $20$ replicated datasets for each scenario. Here, the columns of $\#$edges and $\#$sym include the average numbers of edges and of pairs of symmetric edges, respectively, and the numbers in the round bracket are the corresponding standard deviations. The columns of \textbf{Time} and $\#$\textbf{models} measure the efficiency of the procedures that are the average of computational time and the average of numbers of fitted models, respectively.}\label{TAB:measure-simmulation}
\end{sidewaystable}

\subsection{Additional Details on the Application to fMRI Data}\label{SEC:Supp-fMRI}
We applied our method to a multimodal imaging dataset coming from a pilot study of the Enhanced Nathan Kline Institute-Rockland Sample project, and provided by Greg Kiar and Eric Bridgeford from NeuroData at Johns Hopkins University, who  pre-processed the raw DTI and R-fMRI imaging data available at \url{http://fcon_1000.projects.nitrc.org/indi/CoRR/html/nki_1.html}, using the pipelines \text{ndmg} and \text{C-PAC}. Particularly, the R-fMRI monitors brain functional activity at different regions via dynamic changes in the blood oxygenation level dependent (BOLD) signal, when, in this study, the subjects are simply asked to stay awake with eyes open. The data sets that we apply our methods are residuals estimated from the vector autoregression models carried out to remove the temporal dependence, see \cite{ranciati2021fused}.

We considered $36$ cortical brain regions including  $22$ regions in the frontal lobe and  $14$ regions in the anterior temporal lobe for two subjects 14 and 15. In this section, we provide the analysis of subject 14 while the analysis of subject 15 is given in Section~\ref{SEC:real.data}. In particular, the graph of the selected model by our method, called by $\G$, has density equal to $54.28\%$ with $342$ edges present in which there are $76$ edge twin-pairing colour classes, which to make up approximately $44.44\%$ on present edges, and $6$ vertex twin-pairing colour classes. Based on the likelihood ratio test for the comparison with the saturated model, the model $\G$ has $p$-values equal to $0.055$ with $296$ degrees of freedom computed on the asymptotic chi-squared distribution.

Figure \ref{FIG:36regionSub14} presents the coloured graphical representation of the model $\G$ that is split into $4$ panels. The top-left panel gives the edges of $\G$ that belong to $E_{L}$ and form atomic colour classes, and similarly for the top-right panel that  gives the atomic colour classes in $E_{R}$. Furthermore, the bottom-left and right panels give the edge twin-pairing colour classes between and across groups, respectively. Here, in this case, we have omitted to represent the edges in $E_{T}$ because this set is complete, in the sense that $E_{T} = F_{T}$.

\begin{figure}
	\centering
	\begin{subfigure}{0.5\textwidth}
		\centering
		\includegraphics[scale=0.45]{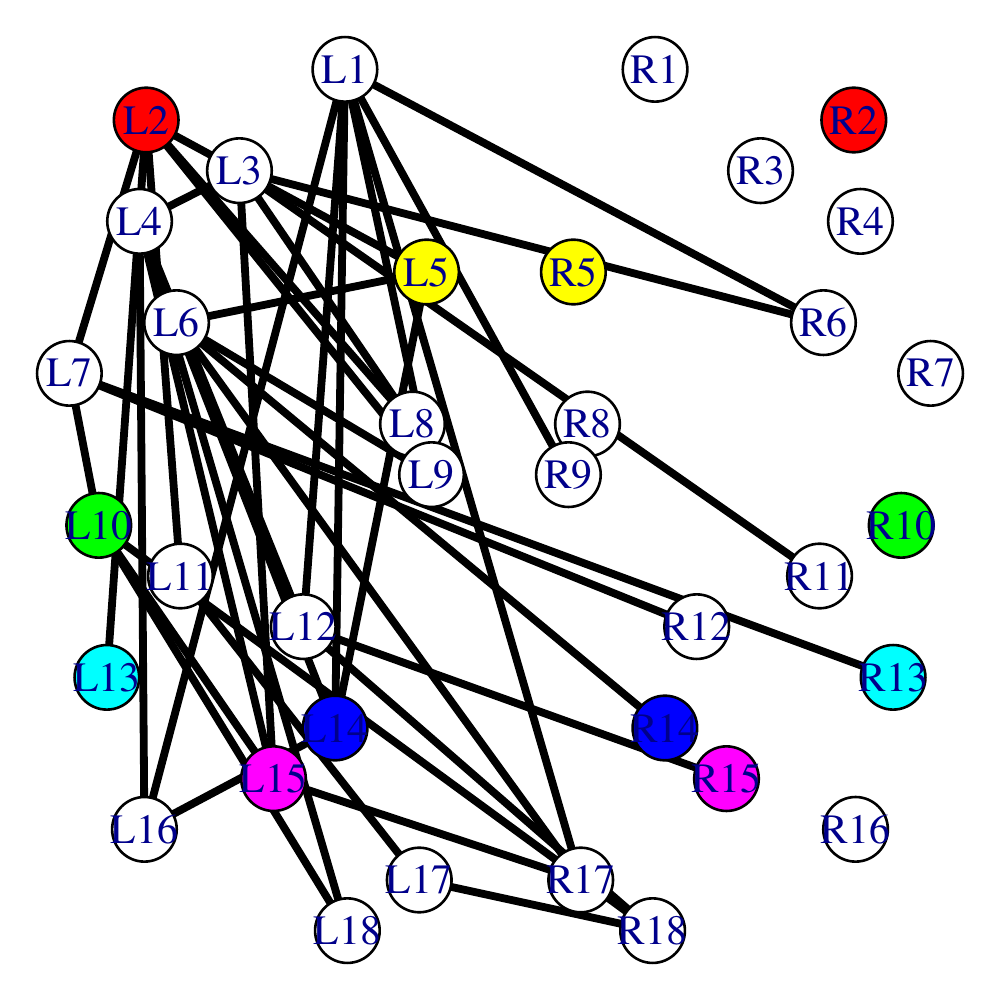}
		\caption{}
	\end{subfigure}
	\begin{subfigure}{0.45\textwidth}
		\centering
		\includegraphics[scale=0.45]{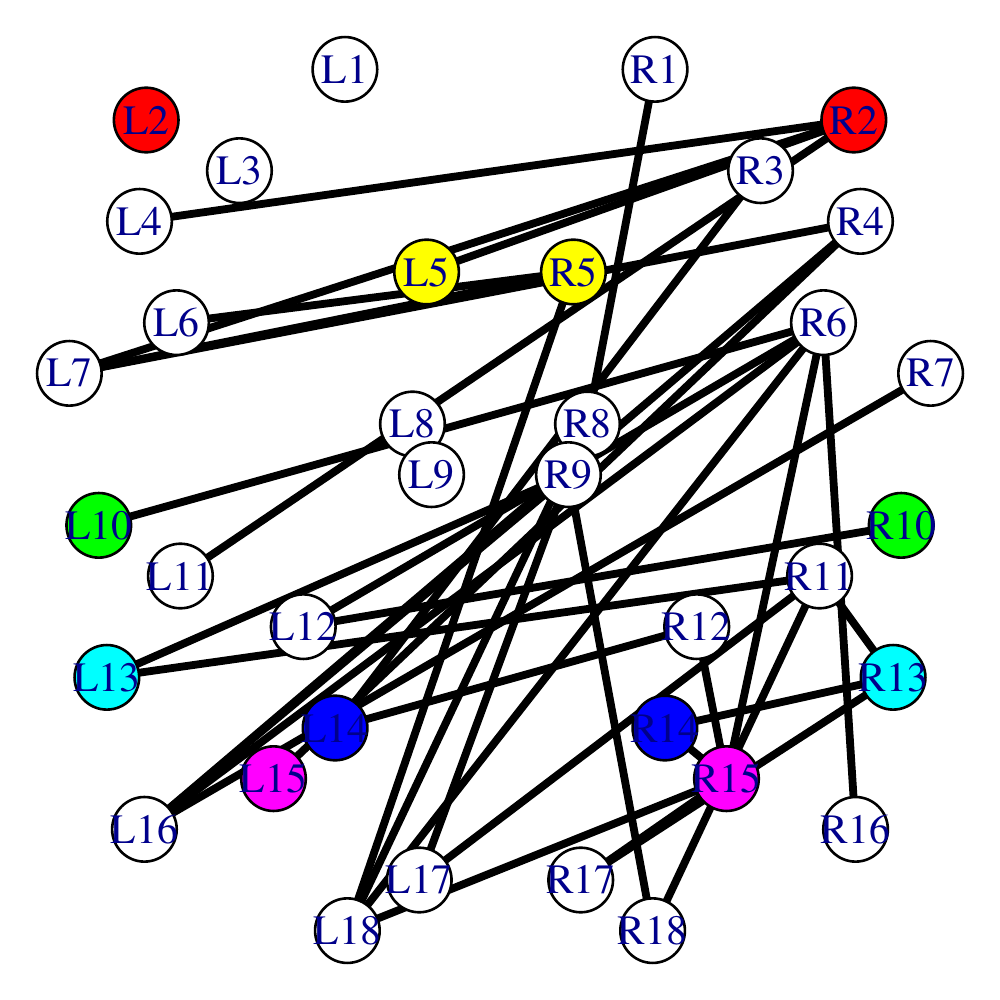}
		\caption{}
	\end{subfigure}

	\begin{subfigure}{0.5\textwidth}
		\centering
		\includegraphics[scale=0.45]{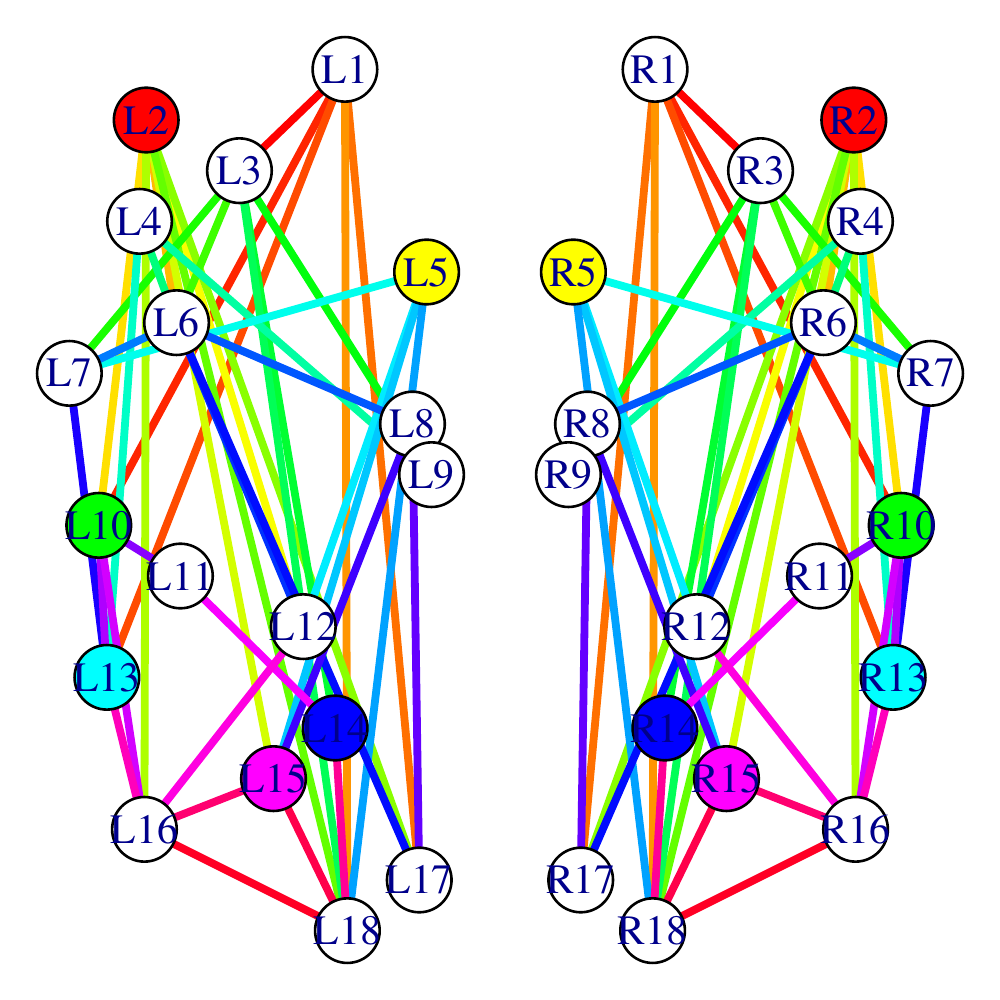}
		\caption{}
	\end{subfigure}
	\begin{subfigure}{0.45\textwidth}
		\centering
		\includegraphics[scale=0.45]{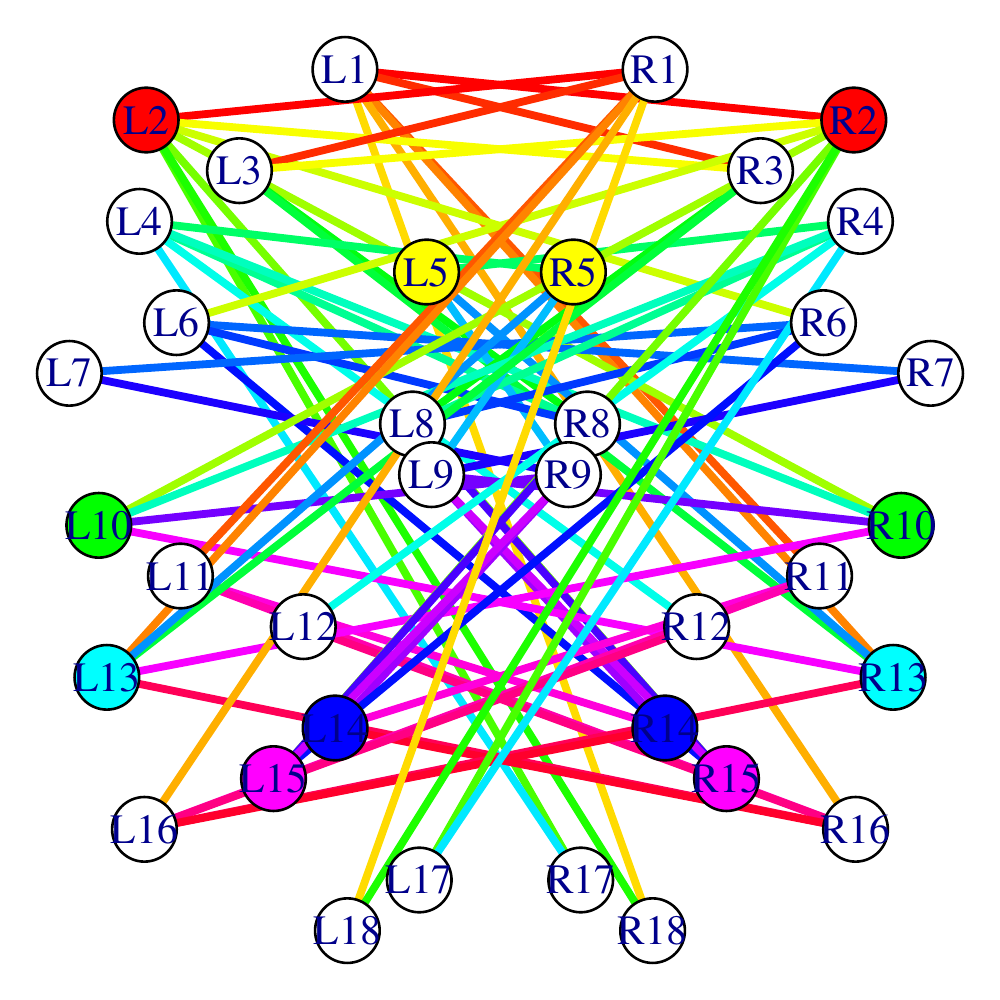}
		\caption{}
	\end{subfigure}
	\caption{The \PDCG\ of the selected models for subject $14$ with separate panels: (a) edges in $E_{L}$ forming atomic classes; (b) edges in $E_{R}$ forming atomic classes; (c) twin-pairing classes within groups; and (d) twin pairing classes across groups.}  \label{FIG:36regionSub14}
\end{figure}

\section{Proofs}\label{SUP.SEC:proofs}

\subsection{Proof of Proposition~\ref{THM:new.notation.inverse.function}}\label{SUP.SEC:proof-inverse-function}

We now show that from the quadruplet $(V,  E, \LL, \EE)$ obtained from \eqref{EQN:uncoloured.version} and \eqref{EQ:bbL.and.BBE} one can recover the representation $\G=(\V, \E)$ by applying (i) and (ii). Recall that every  colour class of both $\V$ and $\E$ is either atomic or twin-pairing. We consider first the vertex colour classes $\V$. Every twin-pairing vertex colour class can be written as $\{i, \tau(i)\}$ with $i\in L$ and  $\tau(i)\in R$, and it follows from the definition of $\LL$ in \eqref{EQ:bbL.and.BBE} that a vertex $i\in L$ belongs to a twin-pairing colour class if and only if $i\in L\setminus \LL$. Hence, from the pair $L$ and $\LL$ we can obtain the twin-pairing colour classes in $\V$ which are the sets $\{i, \tau(i)\}$ for all $i\in L\setminus \LL$. It is straightforward that all the vertices in $V$ not belonging to any of the above twin-pairing classes must form atomic colour classes.
We now turn to the edge colour classes $\E$. The partition $(E_{L}, E_{R}, E_{T})$ of $E$ is obtained in such a way that every twin-pairing edge colour class can be written as $\{(i,j), \tau(i,j)\}$ with $(i,j)\in E_{L}$ and $\tau(i,j)\in E_{R}$ and therefore it holds that  $(i,j)\in E_{L}\cap \tau(E_{R})$. Furthermore, by construction, the set $\EE$ in \eqref{EQ:bbL.and.BBE} is made up of all the edges in $E_{L}\cap \tau(E_{R})$ which form atomic colour classes. It follows that the twin-pairing edge colour classes of $\G$ are given by the pairs  $\{(i,j), \tau(i,j)\}$ for all $(i,j)\in (E_{L}\cap \tau(E_{R}))\setminus \EE$, as required. Also in this case it is straightforward to see that all the edges in $E$ not belonging to any of the above twin-pairing classes must form atomic colour classes.

%

\subsection{Proof of Theorem~\ref{THM:existing-novel-notations}}\label{SUP.SEC:proof-existing-novel-notations}
Let $\G=(\V, \E)\in \P$ be a \PDCG\ and let $(V, E, \LL, \EE)$ the quadruplet obtained from the  application of  \eqref{EQN:uncoloured.version} and \eqref{EQ:bbL.and.BBE}. It follows immediately from the construction of $G=(V, E)$ in  \eqref{EQN:uncoloured.version} and of  $\LL$ and $\EE$ in \eqref{EQ:bbL.and.BBE} that $(V, E, \LL, \EE)$ is compatible with $(L, R)$. Furthermore, the pair $(\V, \E)$ can be recovered from  $(V, E, \LL, \EE)$ as shown in Proposition~\ref{THM:new.notation.inverse.function}.

Consider now a compatible quadruplet $(V, E, \LL, \EE)$ and let $\G=(\V, \E)$ the graph obtained from the application of Proposition~\ref{THM:new.notation.inverse.function}. It follows immediately from the constructing procedure that $\G\in \P$ and we have to show that if we apply  \eqref{EQN:uncoloured.version} and  \eqref{EQ:bbL.and.BBE} to $\G$ then we recover the quadruplet $(V, E, \LL, \EE)$. It is straightforward to see that  \eqref{EQN:uncoloured.version} gives $V_{\G}=V$ and  $E_{\G}=E$. By (i) of Proposition~\ref{THM:new.notation.inverse.function}, the twin-pairing vertex colour classes of $\G$ are $\{i, \tau(i)\}$ for all $i\in L\setminus\LL$ and therefore we obtain from   \eqref{EQ:bbL.and.BBE} that $\LL_{\G}=L\cap\LL=\LL$ because $\LL\subseteq L$ by compatibility. Similarly, we obtain from (ii) of Proposition~\ref{THM:new.notation.inverse.function} and  \eqref{EQ:bbL.and.BBE} that $\EE_{\G}=(E_{L}\cap \tau(E_{R}))\cap \EE=\EE$ because $\EE\subseteq E_{L}\cap \tau(E_{R})$ by compatibility. And this completes the proof.

%

\subsection{Proof of Theorem \ref{THM:LatticePropertiesofOrderTau} }\label{SUP.SEC:proof-LatticePropertiesofOrderTau}
The proof of this theorem is as follows: we firstly prove that $\langle \P, \preceq_{t}\rangle$ is a lattice by determining the explicit forms of the corresponding meet and join operations. We then identify the zero and the unit of the lattice $\langle \P, \preceq_{t}\rangle$, which implies  completeness. Finally, we prove that $\langle \P, \preceq_{t}\rangle$ is distributive.

We first consider \ref{meet.twin.lattice} and show that $\Ls=(V, E_{\G} \cap E_{\Hs}, \LL_{\G}\cap \LL_{\Hs}, \EE_{\G} \cap \EE_{\Hs})$ is the meet of $\G$ and $\Hs$, $\G \wedge_{t} \Hs$.
It can be easily checked that, by construction,  the quadruplet $(V, E_{\G} \cap E_{\Hs}, \LL_{\G}\cap \LL_{\Hs}, \EE_{\G} \cap \EE_{\Hs})$ is compatible so that $\Ls\in \P$. Furthermore, because $E_{\G} \cap E_{\Hs}\subseteq E_{\G}$, $\LL_{\G}\cap \LL_{\Hs}\subseteq \LL_{\G}$ and $\EE_{\G} \cap \EE_{\Hs}\subseteq \EE_{\G}$, then it follows immediately from Definition~\ref{DEF:twin-order} that $\Ls\preceq_{t}\G$ and, similarly, one can show that $\Ls\preceq_{t}\Hs$, so that $\Ls\preceq_{t}\G,\Hs$. In this way, we have shown that $\Ls$ is a lower bound of $\G$ and $\Hs$ and now, in order to prove that $\Ls=\G \wedge_{t} \Hs$, we have to show that $\Ls$ is the greatest lower bound, or infimum, of $\G$ and $\Hs$. More formally, we have to show that if $\F=(V, E_{\F}, \LL_{\F}, \EE_{\F})$ is an arbitrary lower bound of $\G$ and $\Hs$ then $\F\preceq_{t} \Ls$. For every lower bound $\F$ of $\G$ and $\Hs$ it holds that $\F\preceq_{t}\G,\Hs$ and, therefore, that
\begin{enumerate}[label = \arabic*)]
	\item both $E_{\F} \subseteq  E_{\G}$ and  $E_{\F} \subseteq  E_{\Hs}$,
	\item both $\LL_{\F} \subseteq  \LL_{\G}$ and  $\LL_{\F} \subseteq  \LL_{\Hs}$,
	\item both $\EE_{\F} \subseteq  \EE_{\G}$ and  $\EE_{\F} \subseteq  \EE_{\Hs}$.
\end{enumerate}
In turn, this implies that $E_{\F} \subseteq  E_{\G}\cap E_{\Hs}$, $\LL_{\F} \subseteq  \LL_{\G}\cap \LL_{\Hs}$ and $\EE_{\F} \subseteq  \EE_{\G}\cap \EE_{\Hs}$ and therefore that $\F\preceq_{t} \Ls$, as required.

We now consider \ref{join.twin.lattice} and show that $\U=(V, E_{\G} \cup E_{\Hs}, \LL_{\G}\cup \LL_{\Hs}, \EE_{\G} \cup \EE_{\Hs})$ is the join of $\G$ and $\Hs$, $\G \vee_{t} \Hs$.
It can be easily checked that, by construction,  the quadruplet $(V, E_{\G} \cup E_{\Hs}, \LL_{\G}\cup \LL_{\Hs}, \EE_{\G} \cup \EE_{\Hs})$ is compatible so that $\U\in \P$. Furthermore, because $ E_{\G} \subseteq E_{\G}\cup E_{\Hs}$, $\LL_{\G}\subseteq \LL_{\G}\cup \LL_{\Hs}$ and $\EE_{\G}\subseteq\EE_{\G} \cup \EE_{\Hs}$, then it follows immediately from Definition~\ref{DEF:twin-order} that $\G\preceq_{t}\U$ and, similarly, one can show that $\Hs\preceq_{t}\U$ so that $\G,\Hs\preceq_{t}\U$. In this way, we have shown that $\U$ is an upper bound of $\G$ and $\Hs$ and now, in order to prove that $\U=\G \vee_{t} \Hs$, we have to show that $\U$ is the least upper bound, or supremum, of $\G$ and $\Hs$. More formally, we have to show that if $\F=(V, E_{\F}, \LL_{\F}, \EE_{\F})$ is an arbitrary upper bound of $\G$ and $\Hs$ then $\U\preceq_{t} \F$. For every upper bound $\F$ of $\G$ and $\Hs$ it holds that $\G,\Hs\preceq_{t}\F$ and, therefore, that
\begin{enumerate}[label = \arabic*)]
	\item both $E_{\G} \subseteq  E_{\F}$ and  $E_{\Hs} \subseteq  E_{\F}$,
	\item both $\LL_{\G} \subseteq  \LL_{\F}$ and  $\LL_{\Hs} \subseteq  \LL_{\F}$,
	\item both $\EE_{\G} \subseteq  \EE_{\F}$ and  $\EE_{\Hs} \subseteq  \EE_{\F}$.
\end{enumerate}
In turn, this implies that $E_{\G}\cup E_{\Hs}\subseteq E_{\F}$, $\LL_{\G}\cup \LL_{\Hs} \subseteq  \LL_{\F}$ and $\EE_{\G}\cup \EE_{\Hs} \subseteq  \EE_{\F}$ and therefore that $\U\preceq_{t} \F$, as required.

We turn now to the $\hat{0}$ and the $\hat{1}$. It can be easily checked that both $\left(V,\;\emptyset,\;\emptyset,\; \emptyset \right)$ and $\left(V,\; F_{V},\; L,\; F_{L} \right)$ are compatible and therefore they both belong to $\P$. We have then to show that for every $\F\in \P$ it holds that $\hat{0}\preceq_{t} \F\preceq_{t}\hat{1}$ and this follows immediately from the fact the $\emptyset\subseteq E_{\F}\subseteq F_{V}$, $\emptyset\subseteq \LL_{\F}\subseteq L$ and $\emptyset\subseteq \EE_{\F}\subseteq F_{L}$.

Finally, we show that $\langle\P, \preceq_{t}\rangle$ is distributive, that is the operations of meet and the join distribute over each other. Formally, we have to show that for all $\F,\G,\Hs\in \P$ it holds that
\begin{align*}
	\F\wedge_{t} (\G\vee_{t}\Hs)=(\F\wedge_{t} \G)\vee_{t}(\F\wedge_{t} \Hs)
\end{align*}
and this follows from the fact that the operations of union and intersection between sets distribute over each other so that
\begin{align*}
	\F\wedge_{t} (\G\vee_{t}\Hs)=\large(V,\, E_{\F}\cap (E_{\G}\cup E_{\Hs}),\, \LL_{\F}\cap (\LL_{\G}\cup \LL_{\Hs}),\, \EE_{\F}\cap (\EE_{\G}\cup \EE_{\Hs})\large)
\end{align*}
is equal to
\begin{align*}
	&(\F\wedge_{t} \G)\vee_{t}(\F\wedge_{t} \Hs)\\&=\large(V,\, (E_{\F}\cap E_{\G})\cup(E_{\F}\cap E_{\Hs}),\, (\LL_{\F}\cap \LL_{\G})\cup(\LL_{\F}\cap \LL_{\Hs}),\, (\EE_{\F}\cap \EE_{\G})\cup(\EE_{\F}\cap \EE_{\Hs})\large).
\end{align*}

%

\subsection{Proof of Proposition \ref{THM:relation-modelinclusion-twinorder}}\label{SUP.SEC:proof-relation-modelinclusion-twinorder}
We first show \ref{rel.s.t.i} that $\Hs \preceq_{s} \G$  implies  $\Hs \preceq_{t} \G$. The \PDCG{s} $\Hs$ and $\G $ are such that $\Hs \preceq_{s} \G$ if and only if conditions \ref{S1}, \ref{S2} and \ref{S3} hold true, and therefore we have to prove that the latter three conditions imply \ref{iSV}, \ref{iSV} and \ref{iiiSV}. Conditions \ref{S1} and \ref{iSV} are trivially equivalent.  We now show that \ref{S2} implies \ref{iiSV}. By construction, all the vertices in $\LL_{\Hs}$ belong to atomic colour classes in $\V_{\Hs}$ and, more precisely,
for every $i\in \LL_{\Hs}$ it holds that both $\{i\}$ and $\{\tau(i)\}$ are atomic colour classes in $\V_{\Hs}$. Hence, by  \ref{S2}, $\{i\}$ and $\{\tau(i)\}$ are atomic colour classes also in $\V_{\G}$ which implies that $i\in \LL_{\G}$ and therefore that $\LL_{\Hs}\subseteq \LL_{\G}$ as required. Finally, we show that \ref{S1} and \ref{S3} implies \ref{iiiSV}.  By construction,  every edge $(i,j) \in \EE_{\Hs}$ is such that $\tau(i,j)\in E_{\Hs}$ and, furthermore, that $\{(i,j)\}$ and $\{\tau(i,j)\}$ are both atomic classes in $\E_{\Hs}$. It follows by \ref{S1} that $(i, j),\tau(i,j)\in E_{\G}$ and thus, by \ref{S3}, that $\{(i,j)\}$ and $\{\tau(i,j)\}$ are both atomic classes also in $\E_{\G}$. In turn, this implies that  $(i,j) \in \EE_{\G}$ and therefore that  $\EE_{\Hs} \subseteq \EE_{\G}$.

We now consider the statement  \ref{rel.s.t.ii} and show it by contradiction. Assume that both $\Hs \preceq_{s} \G$ and $\Hs \precdot_{t} \G$ but $\Hs \not\precdot_{s} \G$. Then, there exists a graph $\F\in \P$ such that $\Hs\preceq_{s} \F \preceq_{s} \G$ and therefore, by \ref{rel.s.t.i} that $\Hs\preceq_{t} \F \preceq_{t} \G$, and this contradicts the assumption that  $\Hs \precdot_{t} \G$.

%

\subsection{Proof of Proposition~\ref{THM:neighbor-submodels} }\label{SUP.SEC:proof-neighbor-submodels}
The relationship between submodels and coloured graphs is given in \ref{S1}, \ref{S2} and \ref{S3} of Section~\ref{SEC:lattice-structure-RCON}, and we recall that \ref{S1} is in fact redundant. Hence, if we consider a \PDCG\ $\G\in \P$ then the graph $\Hs\in \P$ identifies a submodel of $\P(\G)$ if and only if the colour classes of $(\V_{\Hs}, \E_{\Hs})$ of $\Hs$ are obtained from the classes  $(\V_{\G}, \E_{\G})$ of $\G$ by applying one of the following operations:
\begin{enumerate}[label = (\alph*), ref=(\alph*)]
	\item Exactly one pair of atomic vertex colour classes of $\V_{\G}$ are merged to obtain a vertex twin-pairing colour class;\label{Aa}
	\item exactly one pair of atomic edge colour classes of $\E_{\G}$ are merged to obtain an edge twin-pairing colour class;\label{Ab}
	\item exactly one atomic colour class is removed from $\E_{\G}$;\label{Ac}
	\item exactly one twin-pairing colour class is removed from $\E_{\G}$;\label{Ad}
	\item more than one, not necessarily distinct, of the above operations  \ref{Aa} to \ref{Ad} are applied.\label{Ae}
\end{enumerate}
It is straightforward to see that the graph $\Hs$ obtained from the application of one of the conditions from \ref{Aa} to \ref{Ad} is a neighboring submodel of $\G$, \ie, $\Hs\precdot_{s} \G$ whereas \ref{Ae} produces a graph $\Hs\preceq_{s} \G$ but such that $\Hs\not\precdot_{s} \G$. We now consider each of the cases from \ref{Aa} to \ref{Ad}, in turn, and show that these give the corresponding graphs from (i) to (vii) as given in the text of the proposition. Consider condition \ref{Aa}, and assume, without loss of generality, that  $\{i\},\{\tau(i)\}\in \V_{\G}$ for $i\in \LL_{\G}$. If these two classes are merged to obtain $\{i, \tau(i)\}\in \V_{\Hs}$ then the resulting graph $\Hs$ is such that $\E_{\Hs}=\E_{\G}$ so that both $E_{\Hs}=E_{\G}$ and $\EE_{\Hs}=\EE_{\G}$. On the other hand, $\LL_{\Hs}=\LL_{\G}\setminus \{i\}$, thereby giving (i). Consider condition \ref{Ab} and assume, without loss of generality, that  $\{(i, j)\},\{\tau(i, j)\}\in \E_{\G}$ for $(i,j)\in \EE_{\G}$. If the latter two classes are merged to obtain $\{(i,j), \tau(i,j)\}\in\E_{\Hs}$ then the resulting graph $\Hs$ has the same edge set as $\G$, $E_{\Hs}=E_{\G}$ and, obviously, also  $\V_{\Hs}=\V_{\G}$ so that   $\LL_{\Hs}=\LL_{\G}$. Thus, the only difference between $\G$ and $\Hs$ is that $\EE_{\Hs}=\EE_{\G}\setminus \{(i,j)\}$ thereby giving (ii). Consider now condition \ref{Ac} and assume that the vertex $e\in E_{\G}$ forms an atomic colour class $\{e\}\in \E_{\G}$.
Removing $\{e\}$ from $\E_{\G}$  leaves the vertex classes unchanged so that $\V_{\Hs}=\V_{\G}$ and consequently, $\LL_{\Hs}=\LL_{\G}$ whereas the corresponding edge has to be removed so that
$E_{\Hs}=E_{\G}\setminus e$. Furthermore, if  $e\neq\tau(e)$ and $\{\tau(e)\}\in \E_{\G}$ then either $e\in \EE_{\G}$, so that
$\EE_{\Hs}=\EE_{\G}\setminus e$, or $\tau(e)\in \EE_{\G}$, so that
$\EE_{\Hs}=\EE_{\G}\setminus \tau(e)$. More specifically, we can consider three different types of atomic edge colour classes. The first type of atomic edge colour class  $\{(i, j)\}\in \E_{\G}$ is such that $(i,j)\neq \tau(i,j)$ and $\{\tau(i, j)\}\in \E_{\G}$. Hence, if $(i,j)\in \EE_{\G}$ then we obtain (iii) whereas $\tau(i,j)\in \EE_{\G}$ gives (iv). The second type of atomic edge colour class $\{(i, j)\}\in \E_{\G}$ is such that $\{\tau(i, j)\}\notin \E_{\G}$ so that $E_{\Hs}=E_{\G}\setminus \{(i, j)\}$ and $\EE_{\Hs}=\EE_{\G}$ as in (v). Finally, the third type of atomic edge colour class has the form $\{(i, \tau(i))\}\in \E_{\G}$ for $i\in V$ so that $E_{\Hs}=E_{\G}\setminus \{(i, \tau(i))\}$ and $\EE_{\Hs}=\EE_{\G}$ as in (vi).  We turn now to the case \ref{Ad}.
Removing the colour class  $\{(i, j), \tau(i, j)\}\in \E_{\G}$  leaves the vertex colour classes unchanged so that $\LL_{\Hs}=\LL_{\G}$ and, furthermore, also $\EE_{\Hs}=\EE_{\G}$ because both $(i,j)\notin\EE_{\G}$ and  $\tau(i,j)\notin\EE_{\G}$. On the other hand, $E_{\G}\setminus\{(i,j), \tau(i,j)\}$. Hence, in order to obtain (vii) it is sufficient to recall that  $\{(i, j), \tau(i, j)\}\in \E_{\G}$ if and only if $i\neq \tau(i)$, both $(i,j)\in E_{\G}$ and  $\tau(i,j)\in E_{\G}$
and, furthermore, both $(i,j)\notin \EE_{\G}$ and $\tau(i,j)\notin \EE_{\G}$.

%

\subsection{Proof of Corollary~\ref{THM:equivalent.meet}}\label{SUP.SEC:proof-equivalent.meet}
The first statement can be easily obtained by applying the meet operation as given in point \ref{meet.twin.lattice} of Theorem~\ref{THM:LatticePropertiesofOrderTau} to all the pairs of  $\preceq_{t}$--incomparable  neighbouring submodels of $\G$ as given in  Corollary~\ref{THM:two.layer}, so as to check that the submodel encoding both the constraints of $\Hs_{1}$ and those of $\Hs_{2}$ is given by $\P(\Hs_{1}\wedge_{t}\Hs_{2})$. We turn now to the case where $\Hs_{1}$ and $\Hs_{2}$ are comparable.
As shown in Corollary~\ref{THM:two.layer}, if  $\Hs_{1}\preceq_{t}\Hs_{2}$ then, for a given edge  $(i,j)\in\EE_{\G}$,  $\Hs_{2}$ is obtained from (ii) of  Proposition~\ref{THM:neighbor-submodels} and $\Hs_{1}$ is obtained from either (iii) or (iv) of the same proposition. If  $\Hs_{1}$ is obtained from (iii)  we let $\Hs_{1}^{\prime}$ be the graph obtained from (iv) whereas if  $\Hs_{1}$ is obtained from (iv) we let $\Hs_{1}^{\prime}$ be the graph obtained from (iii). Then, one can check that $\Hs_{1}\wedge_{s}\Hs_{2}=\Hs_{1}\wedge_{s}\Hs_{1}^{\prime}$ and the result follows because $\Hs_{1}$ and $\Hs_{1}^{\prime}$ are  $\preceq_{t}$--incomparable.
%

%

\subsection{Proof of Corollary~\ref{THM:iterative.application.meet}}\label{SUP.SEC:proof-iterative.application.meet}
First, we notice that the equality $\{\F \wedge_{s} \Hs \mid \F \in  \mathcal{A} \setminus \{\Hs\}\}=\{\F \wedge_{t} \Hs \mid \F \in  \mathcal{A} \setminus \{\Hs\}\}$ follows immediately from Corollary~\ref{THM:equivalent.meet} because we the graphs in $\mathcal{A}$ are pairwise $\preceq_{t}$--incomparable. Next we show that,
(a) for every $\F \in  \mathcal{A} \setminus \{\Hs\}$ it holds that $\Hs \wedge_{t} \F \precdot_{s} \Hs$, and (b) for every $\F_{1}, \F_{2} \in  \mathcal{A} \setminus \{\Hs\}$ it holds that $\Hs \wedge_{t} \F_{1}$ and $\Hs \wedge_{t} \F_{2}$ are $\preceq_{t}$--incomparable.

We start from (a). Because $\mathcal{A}$ contains neighbouring submodels of $\G$ it follows that both $\F$ and $\Hs$ are obtained from $\G$ by applying one of the points from (i) to (vii) of Proposition~\ref{THM:neighbor-submodels}. Then, it is easy to check that for all possible combinations of $\F$ and $\Hs$  it holds that $\F\wedge_{t}\Hs\precdot_{s}\Hs$, with the exception where, for a given edge $(i,j)\in \EE_{\G}$, $\F$ and $\Hs$ are obtained one from (ii) and  the other from either (iii) or (iv), but this is not possible because, as shown in Corollary~\ref{THM:two.layer}, in this case $\F$ and $\Hs$ would not be $\preceq_{t}$--incomparable.

We now show (b) contradiction. We have shown above that both $\Hs\wedge_{t}\F_{1}\precdot_{s} \Hs$  and $\Hs\wedge_{t}\F_{2}\precdot_{s}\Hs$ so that, if we assume that $\Hs\wedge_{t}\F_{1}$  and $\Hs\wedge_{t}\F_{2}$ are $\preceq_{t}$--comparable, then  Corollary~\ref{THM:two.layer} implies that, without loss of generality, for a given edge $(i,j)\in \EE_{\Hs}$, $\Hs\wedge_{t}\F_{1}$ is obtained from $\Hs$ by applying (ii) of Proposition~\ref{THM:neighbor-submodels} and
$\Hs\wedge_{t}\F_{2}$ by either (iii) or (iv) of the same proposition. If $\Hs\wedge_{t}\F_{2}$ is obtained from (iii) then,
\begin{eqnarray}
	\label{EQ1}
	\Hs \wedge_{t} \F_{1}
	= (V, E_{\Hs} \cap E_{1}, \LL_{\Hs} \cap \LL_{1}, \EE_{\Hs} \cap \EE_{1} )
	= &(V, E_{\Hs}\phantom{\setminus \{(i,j)\}},\LL_{\Hs}, \EE_{\Hs}\setminus \{(i,j)\}),\\  \label{EQ2}
	\Hs \wedge_{t} \F_{2}
	= (V, E_{\Hs} \cap E_{2}, \LL_{\Hs} \cap \LL_{2}, \EE_{\Hs} \cap \EE_{2})
	=& (V, E_{\Hs}\setminus \{(i,j)\},\LL_{\Hs}, \EE_{\Hs}\setminus \{(i,j)\}),
\end{eqnarray}
where  $\F_{1} = (V, E_{1}, \LL_{1}, \EE_{1})$ and $\F_{2} = (V, E_{2}, \LL_{2}, \EE_{2})$ are both the neighbouring submodels of $\G$.
The equation \eqref{EQ1} implies that $E_{1} = E_{\G}$, $\LL_{1} = \LL_{\G}$ and $\EE_{1} = \EE_{\G}\setminus \{(i,j)\}$ so that
$\F_{1} = (V, E_{\G}, \LL_{\G}, \EE_{\G}\setminus \{(i,j)\})$. Moreover, the equation \eqref{EQ2} implies that $E_{2} = E_{\G}\setminus \{(i,j)\}$, $\LL_{2} = \LL_{\G}$ and $\EE_{2} = \EE_{\G}\setminus \{(i,j)\}$ so that $\F_{2} = (V, E_{\G}\setminus \{(i,j)\}, \LL_{\G}, \EE_{\G}\setminus \{(i,j)\})$. Hence, $\F_{1} \preceq_{t} \F_{2}$ which contradicts the assumption that $\F_{1}$ and $\F_{2}$ are the elements in $\mathcal{A}$ so that they are $\preceq_{t}$--incomparable.  The conclusion is the same when $\Hs\wedge_{t}\F_{2}$ is obtained from (iv).


\vskip 0.2in
\bibliography{TwinLatReference}

\end{document}